\documentclass{article}

\usepackage{microtype}
\usepackage{graphicx}
\usepackage{subcaption}
\usepackage{booktabs} 
\usepackage{placeins} 
\usepackage{multirow}
\usepackage{float}
\usepackage{enumitem}
\usepackage{natbib}
\usepackage{comment}
\usepackage{fontawesome5}
\usepackage{xcolor}
\usepackage[table]{xcolor}
\definecolor{linkboxcyan}{RGB}{0,255,255}
\usepackage{hyperref}

\usepackage[accepted]{icml2026}

\usepackage{amsmath}
\usepackage{amssymb}
\usepackage{mathtools}
\usepackage{amsthm}

\usepackage[capitalize,noabbrev]{cleveref}

\theoremstyle{plain}
\newtheorem{theorem}{Theorem}[section]

\newtheorem{lemma}[theorem]{Lemma}

\theoremstyle{definition}

\theoremstyle{remark}

\usepackage[textsize=tiny]{todonotes}

\icmltitlerunning{A Machine-Learned Comorbidity Index}

\begin{document}

\twocolumn[
  \icmltitle{A Machine-Learned Comorbidity Index}



  \icmlsetsymbol{equal}{*}

    \begin{icmlauthorlist}
      \icmlauthor{Suleman Baloch}{uiowa}
      \icmlauthor{Kishlay Jha}{eng}
      \icmlauthor{Alberto M. Segre}{uiowa}
      \icmlauthor{Philip M. Polgreen}{med}
      \icmlauthor{Bijaya Adhikari}{uiowa}
    \end{icmlauthorlist}

  \icmlaffiliation{eng}{Department of Electrical and Computer Engineering, University of Iowa, Iowa, USA}
  \icmlaffiliation{uiowa}{Department of Computer Science, University of Iowa, Iowa, USA}
    \icmlaffiliation{med}{Department of Internal Medicine, University of Iowa, Iowa, USA}
\icmlcorrespondingauthor{Suleman Baloch}{suleman-baloch@uiowa.edu}
\icmlcorrespondingauthor{Bijaya Adhikari}{bijaya-adhikari@uiowa.edu}
\vspace{4pt}
\centerline{
\small
\faGithub\,
\href{https://github.com/sbbaloch2024/A-Machine-Learned-Comorbidity-Index}
{\fcolorbox{linkboxcyan}{white}{\textcolor{black}{Code}}}
\qquad
\faHome\,
\href{https://sbbaloch2024.github.io/mlci-project-page/}
{\fcolorbox{linkboxcyan}{white}{\textcolor{black}{Project Page}}}
}
  \icmlkeywords{Machine Learning, ICML}

  \vskip 0.1in
]



\printAffiliationsAndNotice{}  

\begin{abstract} 
Traditional comorbidity scores (e.g., Charlson and Elixhauser) are widely used for risk adjustment and patient stratification, but they have two key limitations: (i) they are largely mortality-centric and do not align well with other clinical outcomes, and (ii) their linear, rule-based structure cannot capture nonlinear, outcome-specific risk relationships. We propose a Machine-Learned Comorbidity Index (MLCI) that maps diagnosis codes to a single scalar by maximizing the normalized Hilbert–Schmidt Independence Criterion (nHSIC) between the learned score and multiple clinical outcomes. MLCI captures nonlinear risk–outcome dependence and is supported by a theory that characterizes when a unified, informative admission-level ordering can be achieved across outcomes. Empirical results on multiple benchmark electronic health record (EHR) datasets show that MLCI outperforms strong baselines across multiple evaluation metrics.
\end{abstract}

\section{Introduction}

We define \textbf{comorbidity} as the overall severity and complexity of the diagnoses recorded for a hospital admission, reflected in diagnosis codes. Comorbidity scores are widely used for patient stratification \cite{ening2015charlson}, risk stratification \cite{o2024charlson}, and risk adjustment \cite{ou2012comparative,quan2011updating}. These applications rely largely on hand-engineered comorbidity indices that compress diagnosis information into a single scalar score. However, traditional indices such as Charlson \cite{charlson1987new} and Elixhauser \cite{elixhauser1998comorbidity,van2009modification} have two key limitations.

First, as risk indicators, these indices were designed for mortality and often generalize poorly to other clinical outcomes because they rely on predefined comorbidity categories and mortality-calibrated fixed weights. This undermines the implicit assumption behind comorbidity-based stratification: that a patient's diagnosis burden during a hospital admission yields a shared ordering of admissions that is meaningful across multiple clinical outcomes \cite{byles2005single}. For example, admissions involving “sicker” patients typically face higher risk across outcomes such as mortality and ICU-level care than admissions involving ``less sick'' patients \cite{kuswardhani2020charlson}. However, existing indices lack a principled way to learn a data-driven severity ordering that is consistent across these outcomes. Clinicians also need more than a ranking: they need a cutoff that identifies high-severity admissions for intervention \cite{billings2006case,patel2021learning}.  Traditional indices identify such high-risk groups using mortality-calibrated thresholds, rather than thresholds learned to reflect consistent risk across outcomes.

Second, their linear, rule-based nature limits their ability to capture nonlinear relationships between risk and outcomes. Risk can change nonlinearly with comorbidity burden \cite{wei2023age,ly2025usefulness}, as certain diagnosis combinations may amplify risk \cite{willadsen2018multimorbidity} while additional diagnoses can have diminishing impact at high baseline severity. Clinical risk tools often reflect this nonlinear structure: SAPS II maps severity to mortality through a logistic-style link, while NEWS uses score cutoffs to flag patients at higher risk of rapid deterioration and severe outcomes \cite{le2005mortality,kim2021predicting}.

Together, these limitations motivate three central
questions that guide our approach:
\begin{enumerate}
\item To what extent do common hospital outcomes share an underlying admission-level severity ordering, so that a single score can rank admissions consistently across outcomes?
\item If such an ordering exists, can we learn it in a principled, data-driven way while modeling outcome-specific nonlinear severity–risk links?
\item Beyond ranking admissions, can we learn a cutoff that isolates a high-severity group consistently across outcomes?
\end{enumerate}

Answering these questions requires a model that produces a single severity score while capturing nonlinear diagnosis-code interactions. A multi-endpoint kernel dependence objective lets the score jointly capture nonlinear associations with clinical outcomes, thereby learning shared outcome-relevant signal beyond linear correlation.

To address these challenges, we propose \textbf{MLCI}, \textsc{a Machine-Learned Comorbidity Index} that maps the diagnosis-code representation \(X_i\) for admission \(i\) to a scalar score \(s_i=s_\theta(X_i)\in\mathbb R\). MLCI is trained to recover a shared, clinically useful admission-level ordering when such an ordering exists, while allowing each outcome to have its own nonlinear severity--risk relationship. It uses normalized Hilbert–Schmidt Independence Criterion (nHSIC), a kernel-based, scale-comparable dependence measure across outcomes. We learn \(s_i=s_\theta(X_i)\) by maximizing normalized dependence with multiple clinical endpoints, so the score captures shared outcome-relevant signal. After training, we estimate separate risk curves for each outcome, mapping the shared score to outcome-specific risk.

We motivate our approach by a \textbf{theoretical analysis} that characterizes when a single learned comorbidity score can serve as a shared ordering across multiple clinical outcomes. In the theory, the finite-sample units are hospital admissions. We posit that each admission \(i\) has an unobserved latent scalar severity \(z_i\), representing admission-level comorbidity burden. The ordering of the \(z_i\)'s represents a ranking of admissions by latent severity. Because outcomes may depend on severity through unknown, potentially nonlinear relationships, the goal is to recover the ordering of the \(z_i\)'s rather than their absolute scale. The learned score \(s_\theta(X_i)\) is therefore intended to preserve this latent admission-level ordering from diagnosis features, so that admissions with larger latent severity receive larger learned scores. Under this view, maximizing nHSIC compares admissions pairwise. On the score side, it asks which admissions are close under the learned score. On the label side, it asks which admissions have the same binary outcome. A high nHSIC value means these two pairwise patterns agree: admissions close in score tend to have similar labels. We then study what happens when this comparison is made across several outcomes. For each clinical outcome, we form a binary label vector over admissions, whose entries indicate whether that endpoint occurred for a given admission; we then center and stack these vectors across outcomes. If the stacked matrix is approximately rank one, then the outcomes share a dominant admission-level direction. This direction gives a simple threshold rule: choose the admission split whose centered step vector has the largest squared cosine alignment with the shared direction. We use the strength of this shared component and the implied threshold split as diagnostics for shared ordering, rather than outcome-specific deviations.

Overall, the main contributions of this paper are summarized as follows:
\begin{enumerate}
\item \textbf{A novel single-score machine-learned comorbidity index.}
We introduce MLCI, a data-driven comorbidity score that learns a shared admission-level latent risk by maximizing nHSIC with multiple clinical outcomes, capturing nonlinear effects in a one-dimensional summary.
\item \textbf{Theory for shared severity ordering and threshold stratification.}
To our knowledge, this is the first finite-sample analysis linking multi-outcome nHSIC to a shared monotone admission-level ordering. When outcomes share a dominant severity signal, the objective identifies a common admission-level direction and motivates a principled high-severity cutoff, which we evaluate on MIMIC-III and MIMIC-IV.

\item \textbf{Consistent gains in dependence metrics.}
On MIMIC-III/IV, MLCI shows the strongest score--outcome dependence, outperforming strong single-index clinical baselines in statistical dependence measures. 
\end{enumerate}
\section{Related Work}
\label{sec:related}

Comorbidity indices are a longstanding foundation for clinical risk modeling. The Charlson Comorbidity Index (CCI) \cite{charlson1987new} assigns fixed weights to predefined chronic conditions to predict mortality. 
Elixhauser measures expand the CCI with more diagnosis-derived conditions and often improve inpatient outcome prediction \cite{elixhauser1998comorbidity}. Van Walraven et al. distilled them into a single scalar score for in-hospital mortality that improves usability while retaining strong discrimination \cite{van2009modification}. A common alternative to rule-based indices treats diagnosis codes as high-dimensional sparse features and learns outcome models directly from data; classical machine-learning methods remain competitive for clinical outcome prediction: logistic regression is interpretable yet can match complex diagnosis-code mortality models \cite{cowling2021logistic}; gradient-boosted trees capture nonlinearities \cite{li2025machine}; and factorization machines model low-rank sparse-code interactions for multi-task prediction \cite{yin2025mtlnfm}. More recently, neural models also embed diagnosis codes and learn nonlinear aggregations, including attention over ICD codes for early length of stay and in-hospital mortality prediction \cite{harerimana2021deep}. Most clinical prediction models optimize a single-outcome likelihood, producing task-specific representations. A complementary approach is to learn representations by maximizing dependence with different outcomes. HSIC is a kernel-based dependence measure with an empirical centered-Gram estimator enabling sample-based independence testing \cite{gretton2005measuring,gretton2007kernel}. Centered kernel alignment connects dependence measurement to normalized, scale-invariant HSIC-style criteria \cite{cortes2012algorithms}, and HSIC has also been used directly as a dependence-maximization training signal, including self-supervised objectives that optimize kernel dependence between views \cite{li2021self}. HSIC is also used for nonlinear feature selection and biomarker discovery in prediction, reducing redundancy and dimensionality in clinical domains \cite{takahashi2020improved,yu2023moment,dai2025high}.

\section{Preliminaries}
\label{sec:preliminaries}

\paragraph{Comorbidity indices.}
The two widely used comorbidity indices in the current literature are the Charlson Comorbidity Index (CCI) and the van Walraven-weighted Elixhauser Comorbidity Index (ECI). We implemented both indices using Quan et al.'s published coding algorithms~\citep{quan2005coding}. Detailed notation and scoring formulas are provided in Appendix~\ref{app:comorbidity}. To make the connection between classic indices and our approach explicit, we describe CCI and ECI through a common input--output lens. Both CCI and the van Walraven ECI map an admission's diagnosis-code representation \(X_i\), restricted to ICD codes, to a scalar index score \(c_i\in\mathbb R\) by forming comorbidity-category indicators from \(X_i\) and summing them with fixed weights to summarize admission-level coded comorbidity burden.
\section{Problem Formulation}
\label{sec:problem_foundation}

Our goal is to develop a \emph{machine-learned comorbidity score} that preserves the
CCI/ECI input/output contract: given admission features \(X_i\) for admission \(i\), output a single
scalar score
\(
s_i:=s_\theta(X_i)\in\mathbb R,
\)
where \(s_\theta\) is a neural network mapping admission features to a real-valued comorbidity score.
Following the single-index philosophy of CCI/ECI, we aim for \(s_\theta(X_i)\) to provide a
\emph{common admission-level ordering} that captures shared comorbidity burden across multiple clinical outcomes. We observe hospital admissions indexed by \(i\in\{1,\dots,n\}\). Each admission has
diagnosis-code features \(X_i\) and \(T\) binary clinical outcomes. For task
\(t\in[T]:=\{1,\dots,T\}\), write
\(
y_i^{(t)}\in\{0,1\}
\)
for the outcome label of admission \(i\) on task \(t\). Outcomes may be missing; let
\(
M_i^{(t)}\in\{0,1\}
\)
indicate whether \(y_i^{(t)}\) is observed. 

\textbf{Shared latent severity model.}
Comorbidity scores, while designed specifically for mortality, are commonly used to quantify the risks of other clinical outcomes such as ICU transfer~\cite{katz2023using}. Based on this practice, we posit that each admission \(i\) has an unobserved real-valued latent severity \(z_i\in\mathbb R\), representing overall sickness or disease burden. Task \(t\) has an outcome-specific response curve
\(
\Pr\{y_i^{(t)}=1\mid z_i\}=f_t(z_i),
\)
where \(f_t:\mathbb R\to(0,1)\) is not assumed to have a parametric form. Thus, different outcomes may have different prevalence, onset, saturation, and noise patterns, while still sharing a common one-dimensional severity signal. The learned score \(s_\theta(X_i)\) is intended to recover this shared signal at the level of ordering. That is, after an optional global sign flip, it should be monotone in the realized latent severity:
\(
z_i<z_j
\implies
s_\theta(X_i)\le s_\theta(X_j),
\)
allowing ties. This matches the role of comorbidity indices in practice: they provide a scalar ordering of admissions rather than a separate risk model for each endpoint.

We interpret larger learned scores as indicating greater latent severity, but we do not require the observed binary labels to be perfectly monotone in the score. Clinical outcomes are noisy, and different outcomes may activate at different parts of the severity range. The theory therefore studies an ideal monotone oracle score \(r\) that preserves the latent severity ordering, while the learned score \(s_\theta(X_i)\) is treated as a noisy, data-driven approximation to that ordering.

\textbf{Learning from multiple outcomes under heterogeneity.}
Training on a single outcome, such as mortality, can yield an outcome-specific score that may not generalize to other clinical outcomes. At the same time, naive multi-task training can suffer from interference or domination by one task because outcomes differ in prevalence, noise level, and severity-response pattern. We therefore treat outcomes as heterogeneous signals of a shared latent severity axis, with task-specific residual structure. This motivates learning one scalar score that captures the shared axis while preventing any single outcome from dominating.
\section{Methodology}
\label{sec:method}

\begin{figure*}[t]
\centering
\includegraphics[width=1\textwidth]{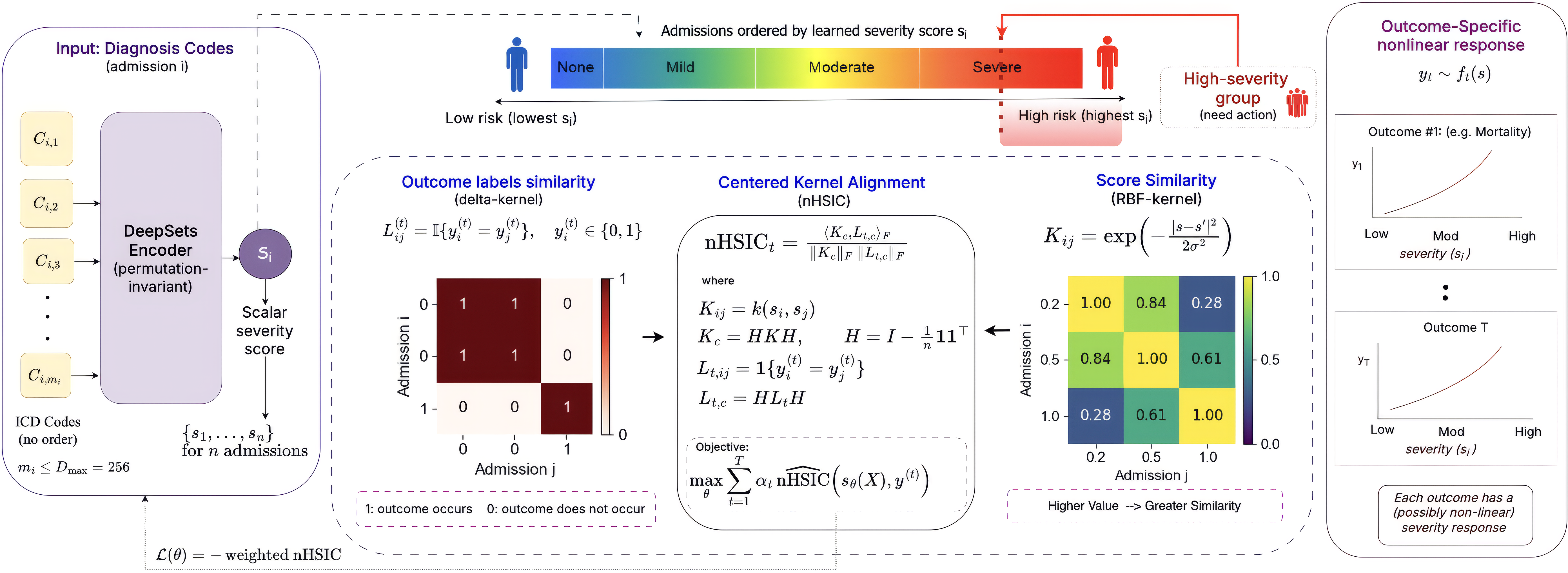}
\caption{MLCI: learning a shared severity score via multi-outcome nHSIC (centered kernel alignment).}
\label{fig:method_overview}
\end{figure*}
We learn a \emph{single} scalar comorbidity/severity score
\(s_i=s_\theta(X_i)\in\mathbb{R}\) from each admission's diagnosis-code representation \(X_i\).
Diagnoses are represented as variable-length collections of ICD-prefix tokens
(ICD-9 in MIMIC-III, ICD-10 in MIMIC-IV). This single-index representation
compresses diagnosis-code information into one score that is evaluated for
dependence with each outcome \(y_i^{(1)},\dots,y_i^{(T)}\).

\textbf{Diagnosis representation.}
Each raw ICD code is normalized by converting to uppercase and removing punctuation
and whitespace, then mapped to a prefix token using the first $k=4$ characters.
We build a diagnosis vocabulary from the training split only and augment it with \texttt{<PAD>} (index 0) and \texttt{<UNK>} (index 1). Using this fixed training vocabulary, we map diagnoses in the train/val/test splits to a variable-length token collection \(X_i=\{C_{i,1},\ldots,C_{i,m_i}\}\), where \(m_i\) is the number of retained diagnosis tokens after truncation and \(D_{\max}=256\) is the maximum retained length.

\textbf{Permutation-invariant encoder.}
We implement $s_\theta$ using a DeepSets-style encoder \cite{zaheer2017deep}. Let $e_j\in\mathbb{R}^{d}$
be an embedding of token $j$, and $\phi:\mathbb{R}^{d}\rightarrow\mathbb{R}^{d}$ be
an elementwise MLP applied to each token, producing $h_j=\phi(e_j)$. We aggregate
token features using masked mean pooling and masked max pooling and concatenate
the results to form an admission representation, then map it to a scalar through
a second MLP $\rho$: $s_i = s_\theta(X_i) = \rho\!\Big(\mathrm{Agg}\big(\{\phi(e_j)\}\big)\Big)\in\mathbb{R}.$ Architecture and training details are in Appendix~\ref{app:impl}.

\textbf{Motivation for outcome-aligned dependence objective.}
\label{sec:method_hsic} 
We assume multiple binary outcomes share a latent severity signal but may follow distinct, possibly nonlinear response curves. Thus linear correlation can miss score--outcome dependence. We train $s_\theta(X)$ with a kernel dependence objective based on normalized HSIC (nHSIC), which measures general (linear or nonlinear) dependence via kernels.

\textbf{Normalized HSIC (nHSIC).}
To compute nHSIC on a mini-batch of size \(n_b\), with scores \(\{s_i\}_{i=1}^{n_b}\) and task labels \(\{y_i^{(t)}\}_{i=1}^{n_b}\), define
\(
K_{ij}^{(b)}:=k(s_i,s_j),
\qquad
L_{ij}^{(b,t)}:=\mathbb I\{y_i^{(t)}=y_j^{(t)}\},
\)
and the mini-batch centering matrix
\(
H_b:=I-\frac1{n_b}\mathbf 1\mathbf 1^\top.
\)
The centered mini-batch Gram matrices are
\(
K_c^{(b)}:=H_bK^{(b)}H_b,
\qquad
L_{t,c}^{(b)}:=H_bL^{(b,t)}H_b.
\)
We optimize the normalized criterion
\begin{equation}
\label{eq:nhsic_impl_main}
\widehat{\mathrm{nHSIC}}\!\left(s,y^{(t)}\right)
=
\frac{\langle K_c^{(b)},L_{t,c}^{(b)}\rangle_F}
{\max\{\|K_c^{(b)}\|_F,\varepsilon_0\}\;\|L_{t,c}^{(b)}\|_F},
\end{equation}
where \(\varepsilon_0>0\) floors \(\|K_c^{(b)}\|_F\) for numerical stability.

\textbf{Instantiation.}
We use the Gaussian RBF kernel on the scalar score,
\(
k_\sigma(s,s')=\exp\!\left(-\frac{|s-s'|^2}{2\sigma^2}\right),
\)
and the delta label Gram matrix \(L^{(b,t)}\) defined above. We set a fixed RBF bandwidth \(\sigma\) for each model fit using the standardized reference-bandwidth procedure described in Appendix~\ref{app:impl_sigma}.

\textbf{Missing labels and validity.}
Let \(M_i^{(t)}\in\{0,1\}\) indicate whether label \(y_i^{(t)}\) is observed. MLCI uses the intersection-valid cohort,
\(
M_i^\cap=\prod_{t=1}^T M_i^{(t)}=1,
\)
for multi-task nHSIC training, validation, and model selection; see Appendix~\ref{app:impl_masks}. BCE baselines use task-specific masks because their losses decompose by outcome. For fair per-outcome comparison, Tables~1--2 evaluate every model on the same task-specific valid test admissions for that outcome; see Appendix~\ref{app:impl_metrics}.

\paragraph{Task-weighted multi-task dependence training.}
\label{sec:method_multitask}

We train a single model to produce one shared score \(s_i=s_\theta(X_i)\) that is
simultaneously dependent on multiple binary outcome vectors \(y^{(1)},\dots,y^{(T)}\).
Our training objective maximizes a weighted sum of per-task nHSIC values:
\begin{equation}
\label{eq:weighted_obj_main}
\max_{\theta}\ \sum_{t=1}^{T} \alpha_t\,
\widehat{\mathrm{nHSIC}}\!\left(s_\theta(X),y^{(t)}\right),
\end{equation}
where \(s_\theta(X)\) denotes the vector of scores in the current mini-batch, \(y^{(t)}\) denotes the corresponding mini-batch labels for task \(t\), and \(\alpha_t\ge0\) controls each task's contribution.

\textbf{Two-stage weighting.}
Outcomes vary in prevalence and alignment with the shared signal, so unweighted training can be dominated by a few tasks. We set task weights in two stages: (Stage~1) train one model per outcome and record its best validation nHSIC against in-hospital mortality, denoted by $\widehat h_t$; (Stage~2) train a new multi-task model with stabilized inverse-strength weights
\begin{equation}
\label{eq:weights_main}
\alpha_t \propto
\left(
\frac{\widehat h_{\max}}{\max(\widehat h_t,\varepsilon_{\mathrm{wt}})}
\right)^{\gamma_{\mathrm{wt}}},
\qquad
\widehat h_{\max}=\max_t \widehat h_t,
\end{equation}
with \(\varepsilon_{\mathrm{wt}}>0\) and \(\gamma_{\mathrm{wt}}\in(0,1)\). We clip and renormalize for robustness (Appendix~\ref{app:impl}).

\textbf{Score orientation.}
Because the RBF kernel depends only on pairwise score distances, the objective is sign-invariant in \(s\). For reporting, we orient \(s\) using validation mortality: if the Pearson correlation between \(s_i\) and \(y_i^{(\mathrm{mort})}\) on validation admissions is negative, we set \(s_i\leftarrow -s_i\) for all admissions, so larger scores correspond to higher mortality risk.
\section{Theoretical Analysis}
\label{sec:theory}

Let us assume \(n\) admissions in a given system, such as a hospital system, have latent severities ordered as
\(
z_1<\cdots<z_n.
\)
An oracle score is a bounded nondecreasing map
\(
r:\{z_1,\dots,z_n\}\to[-B,B],
\qquad
z_i<z_j \implies r(z_i)\le r(z_j).
\)
For each ordered admission \(i\), define
\(
r_i:=r(z_i).
\)
Thus the oracle score induces the finite score vector
\(
r=(r_1,\dots,r_n)^\top\in[-B,B]^n,
\)
with
\(
r_1\le r_2\le \cdots \le r_n\). The theory studies nHSIC objectives between this finite monotone score vector and the observed binary label vectors. The goal is to characterize when the multi-outcome nHSIC objective supports a shared severity ordering across outcomes. The argument has three steps. First, each binary outcome is converted into a centered label profile over admissions. Second, these profiles are stacked across tasks, producing an admission-to-admission label-alignment matrix. Third, the leading admission-level direction \(v\) yields an explicit monotone threshold rule: choose the admission split whose centered step vector is most aligned with \(v\).

\paragraph{Oracle setup.}
For a fixed \(B>0\), define the finite-sample monotone score class
\(
\mathcal R_B^\uparrow
:=
\{r\in[-B,B]^n:r_1\le\cdots\le r_n\}.
\)
For \(T\) outcomes, write \([T]:=\{1,\dots,T\}\). For each task \(t\in[T]\), let
\(
y^{(t)}=(y^{(t)}_1,\dots,y^{(t)}_n)^\top\in\{0,1\}^n
\)
be the observed binary label vector across the \(n\) admissions. Define the delta label Gram matrix
\(
(L_t)_{ij}:=\mathbb I\{y_i^{(t)}=y_j^{(t)}\},
\)
where \(\mathbb I\{\cdot\}\) denotes the indicator function.
For scores, use a bounded continuous positive semidefinite radial kernel
\(
k(x,x')=\kappa(|x-x'|),
\)
where \(\kappa\) is strictly decreasing on \([0,2B]\). Define
\(
K_{ij}(r):=k(r_i,r_j)=\kappa(|r_i-r_j|).
\)
Let \(H:=I-\frac1n\mathbf 1\mathbf 1^\top\), where \(\mathbf 1\in\mathbb R^n\) is the all-ones
vector. This is the centering matrix. The centered score Gram matrix is
\(
K_c(r):=HK(r)H.
\)
Since \(K(r)\succeq0\) for every \(r\in\mathcal R_B^\uparrow\), it follows that
\(
K_c(r)\succeq0.
\)
The centered label Gram matrix for task \(t\) is \(L_{t,c}:=HL_tH\). The per-task normalized HSIC objective between the oracle score vector and task \(t\)'s labels is
\[
\mathrm{nHSIC}\!\left(r,y^{(t)}\right)
:=
\frac{\langle K_c(r),L_{t,c}\rangle_F}
{\|K_c(r)\|_F\|L_{t,c}\|_F}.
\]
We use the convention \(\mathrm{nHSIC}(r,y^{(t)})=0\) whenever either denominator factor is zero.
The multi-task objective is
\[
J(r):=\sum_{t=1}^T \alpha_t\,\mathrm{nHSIC}\!\left(r,y^{(t)}\right),
\qquad \alpha_t>0,
\]
with fixed task weights \(\alpha_t>0\).

\paragraph{Link to the learned score.}
The analysis is stated for deterministic oracle scores \(r\in\mathcal R_B^\uparrow\). The learned score vector \(s=(s_1,\dots,s_n)^\top\), with \(s_i=s_\theta(X_i)\), is a noisy, data-driven proxy for \(r\). Appendix Lemma~\ref{app:noise_nhsic_bridge} shows that when \(K_c(s)\) concentrates around a nondegenerate mean centered kernel, the realized learned-score \(\mathrm{nHSIC}(s,y^{(t)})\) is close to the normalized HSIC computed from that averaged centered kernel and the task labels.

\paragraph{From binary labels to an admission-level label-alignment matrix.}
Assume each included task is nonconstant. Define the centered label profile
\(
\ell^{(t)}:=Hy^{(t)}.
\)
Then \(\|\ell^{(t)}\|_2>0\). For binary labels under the delta kernel,
Appendix Lemma~\ref{app:label_rank1} shows that each centered label Gram matrix has the rank-one form
\(
L_{t,c}=2\ell^{(t)}\ell^{(t)\top}.
\)
Therefore,
\(
\|L_{t,c}\|_F = 2\|\ell^{(t)}\|_2^2.
\)
Whenever \(\|K_c(r)\|_F>0\),
\[
\mathrm{nHSIC}\!\left(r,y^{(t)}\right)
=
\frac{\ell^{(t)\top}K_c(r)\ell^{(t)}}
{\|K_c(r)\|_F\|\ell^{(t)}\|_2^2}.
\]
When \(\|K_c(r)\|_F=0\), we keep the convention
\(\mathrm{nHSIC}(r,y^{(t)})=0\). So each task contributes a quadratic form of \(K_c(r)\) along its centered label vector
\(\ell^{(t)}\). Including the fixed task coefficient \(\alpha_t\), whenever
\(\|K_c(r)\|_F>0\), the multi-task objective becomes
\[
J(r)
=
\frac{1}{\|K_c(r)\|_F}
\sum_{t=1}^T
\frac{\alpha_t}{\|\ell^{(t)}\|_2^2}
\ell^{(t)\top}K_c(r)\ell^{(t)}.
\]

The purpose of the following stacking is to establish the equality between this weighted sum of
task-wise quadratic forms and a single Frobenius inner product. To simplify notation, combine the fixed task coefficient with the nHSIC normalization by defining \(\beta_t:=\alpha_t/\|\ell^{(t)}\|_2^2\).
Now form the stacked label-profile matrix \(\widetilde W\in\mathbb R^{T\times n}\), whose \(t\)-th
row is the weighted centered label vector
\(
\sqrt{\beta_t}\,\ell^{(t)\top}.
\)
Equivalently,
\[
\widetilde W
=
\begin{bmatrix}
\sqrt{\beta_1}\ell^{(1)\top}\\
\sqrt{\beta_2}\ell^{(2)\top}\\
\vdots\\
\sqrt{\beta_T}\ell^{(T)\top}
\end{bmatrix}.
\]

By construction,
\(
\widetilde W^\top\widetilde W
=
\sum_{t=1}^T
\beta_t\,\ell^{(t)}\ell^{(t)\top}.
\)
Therefore,
\(
\left\langle K_c(r),\widetilde W^\top\widetilde W\right\rangle_F
=
\sum_{t=1}^T
\beta_t
\left\langle K_c(r),\ell^{(t)}\ell^{(t)\top}\right\rangle_F.
\)
Since
\(
\left\langle K_c(r),\ell^{(t)}\ell^{(t)\top}\right\rangle_F
=
\ell^{(t)\top}K_c(r)\ell^{(t)},
\)
we obtain
\[
\left\langle K_c(r),\widetilde W^\top\widetilde W\right\rangle_F
=
\sum_{t=1}^T
\frac{\alpha_t}{\|\ell^{(t)}\|_2^2}
\ell^{(t)\top}K_c(r)\ell^{(t)}.
\]
Hence, for \(\|K_c(r)\|_F>0\), the multi-task objective can be written as \begin{equation}
\label{eq:multitask_alignment}
J(r)
=
\frac{
\langle K_c(r),\widetilde W^\top\widetilde W\rangle_F
}{
\|K_c(r)\|_F
}.
\end{equation}
When \(\|K_c(r)\|_F=0\), we keep the convention \(J(r)=0\).

The matrix \(\widetilde W^\top\widetilde W\) is a cross-task
label-alignment matrix: its \((i,j)\)-entry compares admissions \(i\) and \(j\) using all centered
label profiles across tasks,
\(
(\widetilde W^\top\widetilde W)_{ij}
=
\sum_{t=1}^T
\beta_t\,\ell_i^{(t)}\ell_j^{(t)}.
\)
Thus it summarizes how similarly two admissions behave across all weighted outcomes. The score \(r\)
enters only through \(K_c(r)\), and the objective asks this score-similarity matrix to align with
the combined cross-task label structure. We then ask whether this combined label structure is mainly
driven by one unified admission direction.

\paragraph{Rank-one projection to a shared admission direction.}
Write the full singular value decomposition
\(
\widetilde W=\sum_{k\ge1}\sigma_k u_k v_k^\top,
\)
with \(\sigma_1\ge\sigma_2\ge\cdots\ge0\) and orthonormal families
\(\{u_k\}\subset\mathbb R^T\) and \(\{v_k\}\subset\mathbb R^n\).
Let
\(
\widetilde W_1=\sigma_1uv^\top
\)
be the best rank-one approximation of \(\widetilde W\), with \(u=u_1\),
\(v=v_1\), and \(\|u\|_2=\|v\|_2=1\). Here \(u\in\mathbb R^T\) is the
task-side direction and \(v\in\mathbb R^n\) is the admission-level direction.

When \(\sigma_1>0\), \(v\) is a leading eigenvector of \(\widetilde W^\top\widetilde W\), with eigenvalue \(\sigma_1^2\), and captures the dominant admission-level pattern shared across tasks. If \(\sigma_1=0\), then \(\widetilde W=0\), and the projected objective below is identically zero. Define the projected
objective
\[
J_1(r)
:=
\frac{
\langle K_c(r),\widetilde W_1^\top\widetilde W_1\rangle_F
}{
\|K_c(r)\|_F
}.
\]
We set \(J_1(r)=0\) whenever \(\|K_c(r)\|_F=0\).

\begin{lemma}[Reduction under rank-one projection]
\label{lem:rank1_projection}
For every \(r\in\mathcal R_B^\uparrow\) with \(\|K_c(r)\|_F>0\),
\(
J_1(r)=\sigma_1^2\bar\Delta_v(r),
\)
where
\[
\bar\Delta_v(r):=
\frac{v^\top K_c(r)v}{\|K_c(r)\|_F}.
\]
Therefore, if \(\sigma_1>0\),
\(
\arg\max_{r\in\mathcal R_B^\uparrow:\,\|K_c(r)\|_F>0} J_1(r)
=
\arg\max_{r\in\mathcal R_B^\uparrow:\,\|K_c(r)\|_F>0} \bar\Delta_v(r).
\)
If \(\sigma_1=0\), then \(J_1\equiv0\).
\end{lemma}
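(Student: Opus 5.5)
The plan is a direct computation: first identify \(\widetilde W_1^\top\widetilde W_1\) explicitly, then read off the Frobenius inner product with \(K_c(r)\). Since \(\widetilde W_1=\sigma_1uv^\top\) with \(\|u\|_2=1\), we get
\[
\widetilde W_1^\top\widetilde W_1=\sigma_1^2\,v\,u^\top u\,v^\top=\sigma_1^2\,vv^\top .
\]
The key point is that the task-side direction \(u\) disappears entirely, because it enters only through \(u^\top u=1\).

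Next I use the identity already used for each task in the derivation of \eqref{eq:multitask_alignment}: for symmetric \(A\) and any vector \(w\), \(\langle A,ww^\top\rangle_F=\operatorname{tr}(A^\top ww^\top)=w^\top Aw\). Applying it with \(A=K_c(r)\) and \(w=v\) gives
\[
\langle K_c(r),\widetilde W_1^\top\widetilde W_1\rangle_F=\sigma_1^2\,v^\top K_c(r)v .
\]
For any \(r\in\mathcal R_B^\uparrow\) with \(\|K_c(r)\|_F>0\), dividing by \(\|K_c(r)\|_F\) yields \(J_1(r)=\sigma_1^2\bar\Delta_v(r)\), which is the first claim.

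For the argmax statement, I restrict to the set \(\{r\in\mathcal R_B^\uparrow:\|K_c(r)\|_F>0\}\). On this set \(J_1=\sigma_1^2\bar\Delta_v\), and when \(\sigma_1>0\) the factor \(\sigma_1^2\) is a fixed strictly positive constant, independent of \(r\). Multiplying by it is a strictly increasing transformation, so the two argmax sets coincide. This holds as set equality, including the case where one of them is empty, because \(J_1(r)\ge J_1(r')\iff\bar\Delta_v(r)\ge\bar\Delta_v(r')\). No attainment claim is needed.

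For the case \(\sigma_1=0\), the SVD gives \(\widetilde W_1=0\), so \(\widetilde W_1^\top\widetilde W_1=0\). The numerator of \(J_1\) then vanishes whenever \(\|K_c(r)\|_F>0\), and \(J_1=0\) by convention otherwise, so \(J_1\equiv0\).

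There is no real obstacle here. The only points needing care are that \(u\) is a unit vector, so it drops out, and that \(v\) is well defined even when \(\sigma_1\) is a repeated singular value. For the latter I simply fix the chosen top singular pair, since the statement is made with respect to that fixed \(v\).
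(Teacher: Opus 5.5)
Your proposal is correct and matches the paper's proof essentially step for step. Like the paper, you use $\|u\|_2=1$ to collapse $\widetilde W_1^\top\widetilde W_1$ to $\sigma_1^2 vv^\top$ and turn the Frobenius inner product into $v^\top K_c(r)v$. You then get the argmax equality from positivity of $\sigma_1^2$ and handle $\sigma_1=0$ via $\widetilde W_1=0$ together with the zero-denominator convention.
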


\begin{proof}[Proof sketch]
Since \(\widetilde W_1=\sigma_1uv^\top\) with \(\|u\|_2=1\),
\(
\widetilde W_1^\top\widetilde W_1
=
\sigma_1^2vv^\top.
\)
Substituting this into \(J_1(r)\) gives
\[
J_1(r)
=
\sigma_1^2
\frac{\langle K_c(r),vv^\top\rangle_F}{\|K_c(r)\|_F}
=
\sigma_1^2
\frac{v^\top K_c(r)v}{\|K_c(r)\|_F}.
\]
The maximizer statement follows because multiplication by the positive constant \(\sigma_1^2\)
does not change the argmax. If \(\sigma_1=0\), then
\(\widetilde W_1^\top\widetilde W_1=0\), so \(J_1\equiv0\).
\end{proof}

Thus, under a rank-one shared label structure, the projected multi-task objective reduces to aligning the centered score Gram matrix with \(v\). This same direction determines the monotone admission-level cutoff whose centered step vector is most aligned with the shared signal.

\paragraph{Threshold certificate from the shared direction.}
For a split \(j\in\{1,\dots,n-1\}\), define
\(
L_j:=\{1,\dots,j\},
\qquad
R_j:=\{j+1,\dots,n\}.
\)
Let
\(
g_j:=H\mathbf 1_{R_j}.
\)
Here \(g_j\) is the centered indicator of the candidate upper-tail group.

\begin{lemma}[Global bound and threshold certificate]
\label{lem:threshold}
Fix any centered target direction \(w\in\mathbb R^n\), so \(\mathbf 1^\top w=0\). This lemma is stated for a general direction \(w\); after the lemma we apply it to the centered shared admission direction \(v\).
Define
\[
\bar\Delta_w(r):=
\frac{w^\top K_c(r)w}{\|K_c(r)\|_F},
\]
with value \(0\) whenever \(\|K_c(r)\|_F=0\). If \(w=0\), then
\(\bar\Delta_w\equiv0\), and the statements are trivial. Assume \(w\ne0\) below. Then
\(
\bar\Delta_w(r)\le \|w\|_2^2
\)
for all \(r\in\mathcal R_B^\uparrow\). Moreover, consider a nontrivial two-level threshold score on the split \(L_j/R_j\), meaning a score
\(r^{(j)}\in\mathcal R_B^\uparrow\) of the form
\[
r_i^{(j)}
=
\begin{cases}
a, & i\in L_j,\\
b, & i\in R_j,
\end{cases}
\qquad
\text{with } a<b.
\]
That is, all admissions below the cutoff receive one score level, and all admissions above the cutoff receive another score level.
For any such threshold score, we have the exact value
\[
\bar\Delta_w(r^{(j)})
=
\|w\|_2^2\rho_j^2,
\qquad
\rho_j
:=
\frac{w^\top g_j}{\|w\|_2\|g_j\|_2}.
\]
Thus the best two-level threshold is obtained by
\(
j^\star\in\arg\max_{1\le j\le n-1}\rho_j^2.
\)
The achieved value certifies a fraction
\(
\rho_\star^2:=\max_{1\le j\le n-1}\rho_j^2
\)
of the global upper bound \(\|w\|_2^2\).
\end{lemma}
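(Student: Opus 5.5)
The plan has two parts: the global bound, then the exact value for threshold scores.

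\textbf{Global bound.} For \(\|K_c(r)\|_F>0\) I will use Cauchy--Schwarz in the Frobenius inner product:
\[
w^\top K_c(r)w=\langle K_c(r),ww^\top\rangle_F\le \|K_c(r)\|_F\|ww^\top\|_F=\|K_c(r)\|_F\|w\|_2^2.
\]
Dividing by \(\|K_c(r)\|_F\) gives \(\bar\Delta_w(r)\le\|w\|_2^2\). Two further observations finish this part.
\begin{itemize}
\item Since \(K_c(r)\succeq0\), the numerator is nonnegative.
\item The convention value \(0\) when \(\|K_c(r)\|_F=0\) trivially satisfies the bound.
\end{itemize}

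\textbf{Threshold score: rank-one structure.} For \(r^{(j)}\), write \(p:=\kappa(0)\) and \(q:=\kappa(b-a)\). Because \(\kappa\) is strictly decreasing and \(0<b-a\le 2B\), we have \(p>q\). The Gram matrix is block-constant: entries equal \(p\) within \(L_j\) and within \(R_j\), and \(q\) across. Set \(e:=\mathbf 1_{R_j}\). Then \(\mathbf 1-e\) is the indicator of \(L_j\), and
\[
K=p\bigl[(\mathbf 1-e)(\mathbf 1-e)^\top+ee^\top\bigr]+q\bigl[(\mathbf 1-e)e^\top+e(\mathbf 1-e)^\top\bigr].
\]
Expanding, every term is a combination of \(\mathbf 1\mathbf 1^\top\), \(\mathbf 1e^\top\), \(e\mathbf 1^\top\) and \(ee^\top\). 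The coefficient of \(ee^\top\) is \(p+p-2q=2(p-q)\). Since \(H\mathbf 1=0\), conjugating by \(H\) kills every term containing \(\mathbf 1\), leaving
\[
K_c(r^{(j)})=2(p-q)\,g_jg_j^\top,\qquad g_j=He.
\]
This computation is the main step to check carefully.

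\textbf{Threshold score: exact value.} We need \(g_j\ne0\). This holds because \(1\le j\le n-1\), so \(e\) is nonconstant and \(He\neq0\); combined with \(p>q\), this gives \(\|K_c\|_F=2(p-q)\|g_j\|_2^2>0\). Since \(w\) is centered (\(Hw=w\)), the numerator is
\[
w^\top K_c w=2(p-q)(w^\top g_j)^2.
\]
Dividing by \(\|K_c\|_F\) gives
\[
\bar\Delta_w(r^{(j)})=\frac{(w^\top g_j)^2}{\|g_j\|_2^2}=\|w\|_2^2\rho_j^2.
\]
This value does not depend on \(a\), \(b\) or \(\kappa\).

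\textbf{Selection and certificate.} The choice \(j^\star\in\arg\max_j\rho_j^2\) then follows immediately, since \(\|w\|_2^2\) is constant in \(j\). The certificate statement follows from comparing \(\|w\|_2^2\rho_\star^2\) with the global bound. Note also that \(\rho_j^2\le1\) by Cauchy--Schwarz, consistent with that bound.
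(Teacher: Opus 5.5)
Your proof is correct and follows essentially the same route as the paper: Cauchy--Schwarz in the Frobenius inner product for the global bound, then a block-constant decomposition of \(K(r^{(j)})\) whose \(\mathbf 1\)-containing terms are annihilated by centering, leaving \(K_c(r^{(j)})=2(\kappa(0)-\kappa(b-a))\,g_jg_j^\top\). Expanding in \(\mathbf 1\) and \(\mathbf 1_{R_j}\) instead of using the paper's \(H\mathbf 1_{L_j}=-g_j\) is only a cosmetic difference, and your explicit checks that \(g_j\neq0\) and \(0<b-a\le 2B\) fill in small points the paper leaves implicit.
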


\begin{proof}[Proof sketch]
Because \(w\) is centered, \(w^\top K_c(r)w=w^\top K(r)w\), and
\(
w^\top K_c(r)w
=
\langle K_c(r),ww^\top\rangle_F.
\)
By Cauchy--Schwarz inequality,
\(
\langle K_c(r),ww^\top\rangle_F
\le
\|K_c(r)\|_F\|ww^\top\|_F
=
\|K_c(r)\|_F\|w\|_2^2.
\)
Dividing by \(\|K_c(r)\|_F\) gives the global bound whenever
\(\|K_c(r)\|_F>0\); the zero-norm case follows from the convention.
For a nontrivial two-level threshold score \(r^{(j)}\), \(K(r^{(j)})\) is block-constant. Let
\(
c_0:=\kappa(0),
\qquad
c_1:=\kappa(|a-b|).
\)
The value \(c_0\) is the within-block similarity, and \(c_1\) is the across-block similarity. Since \(a<b\) and \(\kappa\) is strictly decreasing, we have
\(
c_0-c_1>0.
\)
After centering, the constant all-ones component disappears and only the block contrast remains. Hence
\(
K_c(r^{(j)})
=
2(c_0-c_1)g_jg_j^\top.
\)
Equivalently, write
\(
K_c(r^{(j)})=c_j\,g_jg_j^\top,
\qquad
c_j:=2(c_0-c_1)>0.
\)
Therefore,
\[
\begin{aligned}
\bar\Delta_w(r^{(j)})
&=
\frac{c_j(w^\top g_j)^2}{c_j\|g_jg_j^\top\|_F}
=
\frac{(w^\top g_j)^2}{\|g_j\|_2^2} \\
&=
\|w\|_2^2
\left(
\frac{w^\top g_j}{\|w\|_2\|g_j\|_2}
\right)^2
=
\|w\|_2^2\rho_j^2 .
\end{aligned}
\qedhere
\]
\end{proof}

Assume \(\sigma_1>0\). Applying Lemma~\ref{lem:threshold} to the shared admission direction turns \(v\) into an explicit monotone admission-level cutoff. The lemma requires a centered direction. Since the rows of \(\widetilde W\) are centered, \(\widetilde W\mathbf 1=0\). Thus, any right singular vector associated with a positive singular value is orthogonal to \(\mathbf 1\), and therefore satisfies \(Hv=v\). Hence we may take \(w=v\). The rank-one shared direction gives the explicit
cutoff
\[
j^\star
\in
\arg\max_{1\le j\le n-1}
\left(
\frac{v^\top g_j}{\|v\|_2\|g_j\|_2}
\right)^2.
\]
\paragraph{Approximate rank-one structure.}
The previous reduction is exact for the projected matrix \(\widetilde W_1\).
In real data, outcomes also contain task-specific variation, so
\(\widetilde W\) is generally only approximately rank one. We therefore
compare the original and projected objectives using the
\(\varepsilon_0\)-floored denominator
\[
\begin{aligned}
J_{\varepsilon_0}(r)
&:=
\frac{\langle K_c(r),\widetilde W^\top\widetilde W\rangle_F}
{\max\{\|K_c(r)\|_F,\varepsilon_0\}},
\\
J_{1,\varepsilon_0}(r)
&:=
\frac{\langle K_c(r),\widetilde W_1^\top\widetilde W_1\rangle_F}
{\max\{\|K_c(r)\|_F,\varepsilon_0\}}.
\end{aligned}
\]

\begin{theorem}[Rank-one Retention]
\label{thm:transfer}
For every \(r\in\mathcal R_B^\uparrow\),
\(
J_{\varepsilon_0}(r)\ge J_{1,\varepsilon_0}(r)\ge0.
\)
For any \(r\) with \(J_{\varepsilon_0}(r)>0\), define the rank-one retention
\[
\gamma(r)
:=
\frac{J_{1,\varepsilon_0}(r)}
{J_{\varepsilon_0}(r)}
=
\frac{
\sigma_1^2\,v^\top K_c(r)v
}{
\sum_{k\ge1}\sigma_k^2\,v_k^\top K_c(r)v_k
}
\in[0,1].
\]
This ratio is independent of \(\varepsilon_0\). If
\(
r_1^\star\in
\arg\max_{r\in\mathcal R_B^\uparrow}J_{1,\varepsilon_0}(r),
\)
then, for every \(r\in\mathcal R_B^\uparrow\) with
\(J_{\varepsilon_0}(r)>0\),
\(
J_{\varepsilon_0}(r_1^\star)
\ge
\gamma(r)\,J_{\varepsilon_0}(r).
\)
In particular, if
\(r^\star\in\arg\max_{r\in\mathcal R_B^\uparrow}J_{\varepsilon_0}(r)\)
and \(J_{\varepsilon_0}(r^\star)>0\), then
\(
J_{\varepsilon_0}(r_1^\star)
\ge
\gamma(r^\star)
\sup_{r\in\mathcal R_B^\uparrow}J_{\varepsilon_0}(r).
\)
\end{theorem}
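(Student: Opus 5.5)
The plan is to expand \(\widetilde W^\top\widetilde W\) spectrally and use positive semidefiniteness of \(K_c(r)\) term by term. From the full SVD, \(\widetilde W^\top\widetilde W=\sum_{k\ge1}\sigma_k^2 v_kv_k^\top\) and \(\widetilde W_1^\top\widetilde W_1=\sigma_1^2 vv^\top\). Hence the two numerators are
\[
N(r)=\sum_{k\ge1}\sigma_k^2\,v_k^\top K_c(r)v_k,\qquad N_1(r)=\sigma_1^2\,v^\top K_c(r)v .
\]
Since \(K_c(r)=HK(r)H\succeq0\) for every \(r\in\mathcal R_B^\uparrow\), as noted in the oracle setup, each summand \(\sigma_k^2 v_k^\top K_c(r)v_k\) is nonnegative. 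Therefore \(N(r)\ge N_1(r)\ge0\). Both objectives share the strictly positive denominator \(\max\{\|K_c(r)\|_F,\varepsilon_0\}\), which gives \(J_{\varepsilon_0}(r)\ge J_{1,\varepsilon_0}(r)\ge0\).

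For the retention ratio, suppose \(J_{\varepsilon_0}(r)>0\). Then \(N(r)>0\), and the common denominator cancels in \(J_{1,\varepsilon_0}(r)/J_{\varepsilon_0}(r)\). This leaves \(N_1(r)/N(r)\), which is the displayed formula and does not depend on \(\varepsilon_0\). The first part gives \(0\le N_1\le N\), so \(\gamma(r)\in[0,1]\). Equivalently, \(J_{1,\varepsilon_0}(r)=\gamma(r)J_{\varepsilon_0}(r)\).

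For the comparison inequality, I would chain three facts: domination, the maximality of \(r_1^\star\) for \(J_{1,\varepsilon_0}\), and the retention identity. This gives
\[
J_{\varepsilon_0}(r_1^\star)\ge J_{1,\varepsilon_0}(r_1^\star)\ge J_{1,\varepsilon_0}(r)=\gamma(r)J_{\varepsilon_0}(r)
\]
for any admissible \(r\). Taking \(r=r^\star\) and using \(J_{\varepsilon_0}(r^\star)=\sup_{r}J_{\varepsilon_0}(r)\) gives the final statement.

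The only step needing care is the domination step. It relies on \(K_c(r)\succeq0\), which comes from the positive semidefinite kernel, so that each discarded spectral term is nonnegative. I would also check that the SVD sum over \(k\) covers all right singular vectors with \(\sigma_k>0\); terms with \(\sigma_k=0\) contribute nothing, so this is harmless. Existence of the argmax is assumed in the statement (and \(\mathcal R_B^\uparrow\) is compact with a continuous floored objective, so maxima exist anyway). I do not expect a genuine obstacle.
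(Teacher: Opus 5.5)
Your proposal is correct and follows essentially the same route as the paper. You use the spectral expansion of \(\widetilde W^\top\widetilde W\) and positive semidefiniteness of \(K_c(r)\) for domination; the paper phrases this via the PSD residual \(\sum_{k\ge2}\sigma_k^2 v_kv_k^\top\), while you phrase it termwise. You then cancel the common floored denominator and use the same chain \(J_{\varepsilon_0}(r_1^\star)\ge J_{1,\varepsilon_0}(r_1^\star)\ge J_{1,\varepsilon_0}(r)=\gamma(r)J_{\varepsilon_0}(r)\).
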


\begin{proof}[Proof sketch]
Using the full singular value decomposition,
\(
\widetilde W^\top\widetilde W
-
\widetilde W_1^\top\widetilde W_1
=
\sum_{k\ge2}\sigma_k^2v_kv_k^\top
\succeq0.
\)
Since \(K_c(r)\succeq0\), its Frobenius inner product with this residual is
nonnegative, giving
\(J_{\varepsilon_0}(r)\ge J_{1,\varepsilon_0}(r)\ge0\).
Moreover,
\[
J_{\varepsilon_0}(r)
=
\frac{\sum_{k\ge1}\sigma_k^2v_k^\top K_c(r)v_k}
{\max\{\|K_c(r)\|_F,\varepsilon_0\}},
\]
which yields the stated expression for \(\gamma(r)\); the common denominator
cancels, so \(\gamma(r)\) is independent of \(\varepsilon_0\).
Finally, optimality of \(r_1^\star\) gives
\(
J_{\varepsilon_0}(r_1^\star)
\ge
J_{1,\varepsilon_0}(r_1^\star)
\ge
J_{1,\varepsilon_0}(r)
=
\gamma(r)J_{\varepsilon_0}(r).
\)
\end{proof}

\section{Experiments}
\label{sec:experiments}


\begin{table*}[t]
\centering
\tiny
\caption{\textbf{Distance Correlation between each model's unified risk score and clinical outcomes.} Higher is better. Bolded values highlight the best performer per outcome.
\textit{For readability, each column is scaled by the power of 10 shown in the header; divide by that factor to recover the original dCorr.}}
\label{tab:main_results_dCorr}
\resizebox{\textwidth}{!}{
\begin{tabular}{l|cccc|cccc}
\toprule
\multirow{2}{*}{\textbf{Model}} &
\multicolumn{4}{c}{\textbf{MIMIC-IV}} &
\multicolumn{4}{c}{\textbf{MIMIC-III}} \\
\cmidrule(lr){2-5} \cmidrule(lr){6-9}
 & \textbf{MORT} ($\times 10^{2}$) & \textbf{30M} ($\times 10^{2}$) & \textbf{LOS} ($\times 10^{2}$) & \textbf{ICU} ($\times 10^{2}$)
 & \textbf{MORT} ($\times 10^{2}$) & \textbf{30M} ($\times 10^{2}$) & \textbf{LOS} ($\times 10^{2}$) & \textbf{ICU} ($\times 10^{2}$) \\
\midrule
\multicolumn{9}{l}{\textit{Traditional Clinical Indices}} \\
Charlson (CCI) & 12.59 & 18.44 & 24.55 & 16.09 & 15.53 & 20.26 & 15.26 & 13.07 \\
Elixhauser (ECI) & 19.98 & 23.87 & 33.15 & 25.98 & 21.38 & 24.70 & 23.09 & 13.37 \\
CCI+ECI (scalar) & 17.46 & 22.67 & 30.89 & 22.44 & 19.79 & 24.06 & 20.81 & 14.14 \\
\midrule
\multicolumn{9}{l}{\textit{Classical Machine Learning Baselines}} \\
kNN& 22.40 & 23.98 & 30.60 & 34.79 & 22.56 & 23.22 & 23.55 & 11.15 \\
Multinomial Naive Bayes  & 32.09 & 32.02 & 35.78 & 45.69 & 24.89 & 25.21 & 25.04 & 16.73 \\
Complement Naive Bayes  & 32.08 & 32.05 & 35.85 & 45.58 & 24.69 & 25.02 & 23.89 & 16.76 \\
FusedLogits LR & 34.99 & 34.92 & 51.02 & 57.23 & 31.19 & 31.20 & 46.05 & 18.96 \\
Factorization Machine (FM score) & 36.41 & 35.82 & 50.80 & 56.77 & 31.96 & 31.28 & 45.38 & 20.51 \\
Gradient Boosted Trees (Score/Logit) & 34.54 & 34.58 & 51.00 & 55.61 & 32.28 & 31.88 & 48.16 & 20.17 \\
\midrule
\multicolumn{9}{l}{\textit{Deep Learning Baselines}} \\

Deep and Cross Network (DCN) & 28.44 $\pm$ 0.44 & 28.73 $\pm$ 0.57 & 48.11 $\pm$ 0.41 & 51.97 $\pm$ 0.43 & 30.06 $\pm$ 0.62 & 29.24 $\pm$ 0.48 & 47.12 $\pm$ 0.23 & 19.70 $\pm$ 0.81 \\
Deep MLP Bag-of-Codes (EmbeddingBag) & 28.88 $\pm$ 0.21 & 29.13 $\pm$ 0.18 & 48.22 $\pm$ 0.29 & 52.04 $\pm$ 0.18 & 30.17 $\pm$ 0.98 & 29.40 $\pm$ 0.82 & 47.67 $\pm$ 0.72 & 20.18 $\pm$ 0.44 \\
Star-GAT & 26.97 $\pm$ 1.27 & 27.48 $\pm$ 1.23 & 47.19 $\pm$ 0.99 & 50.24 $\pm$ 0.68 & 29.44 $\pm$ 0.97 & 29.01 $\pm$ 0.91 & 48.25 $\pm$ 0.73 & 20.06 $\pm$ 0.76 \\
Pure MIL Attention Pooling & 29.25 $\pm$ 0.62 & 29.53 $\pm$ 0.43 & 48.37 $\pm$ 0.36 & 52.20 $\pm$ 0.63 & 31.11 $\pm$ 0.70 & 30.40 $\pm$ 0.31 & 48.11 $\pm$ 0.43 & 20.41 $\pm$ 0.53 \\

Set Transformer (single-logit) & 28.50 $\pm$ 0.68 & 29.02 $\pm$ 0.63 & 49.12 $\pm$ 0.37 & 52.04 $\pm$ 0.19 & 30.29 $\pm$ 0.62 & 29.92 $\pm$ 0.74 & 48.08 $\pm$ 0.43 & 20.51 $\pm$ 1.10 \\
DeepSets (mean $\oplus$ max pool) & 28.51 $\pm$ 0.33 & 28.84 $\pm$ 0.27 & 48.60 $\pm$ 0.60 & 51.92 $\pm$ 0.41 & 29.29 $\pm$ 0.49 & 29.11 $\pm$ 0.32 & 48.88 $\pm$ 0.43 & \textbf{21.52 $\pm$ 0.26} \\
\midrule
\textbf{Our Model} & 
\textbf{54.80 $\pm$ 1.40} &
\textbf{49.42 $\pm$ 1.34} &
\textbf{51.15 $\pm$ 0.88} &
\textbf{61.97 $\pm$ 0.64} &
\textbf{39.06 $\pm$ 3.27} &
\textbf{37.39 $\pm$ 2.63} &
\textbf{49.84 $\pm$ 1.38} &
18.55 $\pm$ 0.19\\
\bottomrule
\end{tabular}
}
\end{table*}


\begin{table*}[t]
\centering
\tiny
\caption{\textbf{Mutual Information between each model's unified risk score and clinical outcomes.}
Higher is better. Bolded values highlight the best performer per outcome.
\textit{For readability, each column is scaled by the power of 10 shown in the header; divide by that factor to recover the original MI.}}
\label{tab:main_results_MI}
\resizebox{\textwidth}{!}{
\begin{tabular}{l|cccc|cccc}
\toprule
\multirow{2}{*}{\textbf{Model}} &
\multicolumn{4}{c}{\textbf{MIMIC-IV}} &
\multicolumn{4}{c}{\textbf{MIMIC-III}} \\
\cmidrule(lr){2-5} \cmidrule(lr){6-9}
 & \textbf{MORT} ($\times 10^{3}$) & \textbf{30M} ($\times 10^{3}$) & \textbf{LOS} ($\times 10^{2}$) & \textbf{ICU} ($\times 10^{2}$)
 & \textbf{MORT} ($\times 10^{3}$) & \textbf{30M} ($\times 10^{3}$) & \textbf{LOS} ($\times 10^{2}$) & \textbf{ICU} ($\times 10^{3}$) \\
\midrule
\multicolumn{9}{l}{\textit{Traditional Clinical Indices}} \\
Charlson (CCI) & 9.64 & 18.99 & 3.41 & 1.99 & 15.97 & 25.84 & 0.94 & 14.34 \\
Elixhauser (ECI) & 20.17 & 28.73 & 6.26 & 3.85 & 26.43 & 30.92 & 3.16 & 14.37 \\
CCI+ECI (scalar) & 19.94 & 28.66 & 6.39 & 4.86 & 31.12 & 37.07 & 3.43 & 16.88 \\
\midrule
\multicolumn{9}{l}{\textit{Classical Machine Learning Baselines}} \\
kNN& 57.70 & 62.71 & 8.43 & 13.06 & 68.28 & 68.07 & 8.55 & 20.65 \\
Multinomial Naive Bayes  & 56.44 & 57.74 & 7.90 & 13.37 & 57.28 & 54.45 & 7.27 & 22.29 \\
Complement Naive Bayes  & 56.68 & 56.99 & 7.75 & 13.26 & 57.75 & 55.43 & 7.12 & 27.53 \\
FusedLogits LR & 53.55 & 54.33 & 13.75 & 17.60 & 56.87 & 53.66 & 13.25 & 28.19 \\
Factorization Machine (FM score) & 54.50 & 54.27 & 13.88 & 16.99 & 62.97 & 57.98 & 13.62 & 27.58 \\
Gradient Boosted Trees (Score/Logit) & 64.30 & 63.30 & 14.86 & 17.67 & 65.24 & 66.81 & 14.76 & 24.55 \\
\midrule
\multicolumn{9}{l}{\textit{Deep Learning Baselines }} \\

Deep and Cross Network (DCN) & 65.33 $\pm$ 0.62 & 61.70 $\pm$ 0.24 & 14.48 $\pm$ 0.15 & 18.33 $\pm$ 0.33 & 65.46 $\pm$ 2.76 & 59.57 $\pm$ 1.60 & 13.47 $\pm$ 0.48 & 27.20 $\pm$ 0.64 \\
Deep MLP Bag-of-Codes (EmbeddingBag) & 65.88 $\pm$ 0.54 & 63.63 $\pm$ 1.19 & 14.73 $\pm$ 0.12 & 18.45 $\pm$ 0.06 & 66.20 $\pm$ 4.99 & 61.39 $\pm$ 3.21 & 13.83 $\pm$ 0.23 & 29.89 $\pm$ 2.13 \\
Star-GAT & 67.38 $\pm$ 2.48 & 65.32 $\pm$ 2.87 & 14.89 $\pm$ 0.27 & 18.89 $\pm$ 0.47 & 70.31 $\pm$ 2.27 & 63.74 $\pm$ 2.44 & 14.10 $\pm$ 0.54 & 29.28 $\pm$ 2.68 \\
Pure MIL Attention Pooling & 67.89 $\pm$ 0.94 & 65.16 $\pm$ 2.32 & 14.93 $\pm$ 0.04 & 18.71 $\pm$ 0.12 & 71.72 $\pm$ 1.57 & 64.64 $\pm$ 0.79 & 14.32 $\pm$ 0.37 & 26.14 $\pm$ 2.26 \\
Set Transformer (single-logit) & 68.35 $\pm$ 2.40 & 65.84 $\pm$ 3.19 & 15.16 $\pm$ 0.18 & 18.59 $\pm$ 0.25 & 75.15 $\pm$ 1.62 & 69.23 $\pm$ 1.95 & 14.54 $\pm$ 0.31 & 29.64 $\pm$ 3.14 \\
DeepSets (mean $\oplus$ max pool) & 69.92 $\pm$ 1.26 & 67.22 $\pm$ 1.60 & 15.36 $\pm$ 0.11 & \textbf{19.24 $\pm$ 0.19} & 73.69 $\pm$ 3.29 & 68.08 $\pm$ 1.88 & 15.17 $\pm$ 0.30 & \textbf{32.42 $\pm$ 4.22} \\
\midrule
\textbf{Our Model} & 
\textbf{74.22 $\pm$ 0.24} & \textbf{72.19 $\pm$ 0.93 }& \textbf{15.42 $\pm$ 0.18} & 19.01 $\pm$ 0.29 & \textbf {84.52 $\pm$ 10.89} & \textbf{77.77 $\pm$ 7.53} & \textbf{16.45 $\pm$ 0.92} & 26.42 $\pm$ 3.77 \\
\bottomrule
\end{tabular}
}
\end{table*}
\paragraph{Datasets.} We used two large, de-identified real-world EHR benchmark datasets: \textbf{MIMIC-IV} and \textbf{MIMIC-III}. MIMIC-IV \cite{johnson2023mimic} contains $546{,}028$ hospital admissions from $223{,}452$ patients; we restrict to an ICD-10 cohort (admissions with at least one ICD-10 diagnosis), yielding $254{,}377$ admissions from $122{,}905$ patients in our final dataset. MIMIC-III \cite{johnson2016mimic} contains $58{,}976$ admissions from $46{,}520$ patients and is ICU-heavy: $57{,}786$ admissions include an ICU stay (ratio $0.980$) compared to $0.156$ in MIMIC-IV, which motivates a stricter late-transfer ICU label for MIMIC-III (Appendix~\ref{app:outcomes_dataset}).

\textbf{Splits.}
We use patient-disjoint train/validation/test splits (70\%/10\%/20\%) by hashing
\texttt{subject\_id}. This yields, for MIMIC-IV ICD-10:
$178{,}178/25{,}228/50{,}971$ admissions (train/val/test) and
$85{,}980/12{,}235/24{,}690$ patients.
For MIMIC-III: $41{,}200/5{,}867/11{,}909$ admissions and
$32{,}488/4{,}630/9{,}402$ patients.

\textbf{Evaluation Metrics.}\label{sec:eval_metrics}
We evaluate \textbf{general dependence} between a \emph{single} severity score \(s_i=s_\theta(X_i)\) and each binary outcome, to capture nonlinear score--outcome relationships. We report \textbf{distance correlation (dCorr)}~\cite{szekely2007measuring}, which detects systematic linear or nonlinear dependence between \(s_i\) and \(y_i^{(t)}\), and \textbf{mutual information (MI)}, which quantifies how much the score reduces uncertainty about the outcome. Details are provided in Appendix~\ref{app:impl_metrics}.

\textbf{Clinical Outcomes.}\label{sec:clinical_tasks_outcomes}
We consider four clinically relevant binary outcomes: (i) \textbf{in-hospital mortality (MORT)}, defined by a recorded in-hospital death time for the admission; (ii) \textbf{30-day mortality (30M)}, defined as all-cause death within 30 days of admission using patient-level date of death; (iii) \textbf{length of stay (LOS)}, defined as a binary outcome representing length of stay \(>7\) days when \texttt{ADMITTIME} and \texttt{DISCHTIME} are available; and (iv) \textbf{ICU transfer (ICU)}, defined from the first ICU \texttt{INTIME} relative to admission. For ICU transfer, MIMIC-IV uses any ICU entry after admission, whereas MIMIC-III uses the first ICU entry \(>24\) hours after admission. Details are in Appendix~\ref{app:outcomes_dataset}.

\textbf{Baselines.}\label{sec:compared_methods}
We evaluate our model against several competitive single-index baselines: (i) \textbf{traditional clinical indices}, including CCI~\cite{charlson1987new}, van Walraven--weighted ECI~\cite{van2009modification}, and a CCI+ECI aggregate model (Appendix~\ref{app:clinical_indices}); (ii) \textbf{classical ML baselines}, including \(k\)NN~\cite{choi2022mortality}, Naive Bayes~\cite{wolfson2015naive}, logistic regression~\cite{cowling2021logistic}, factorization machines~\cite{yin2025mtlnfm}, and gradient-boosted trees~\cite{li2025machine}; and (iii) \textbf{deep learning baselines}, including neural set/sequence models such as bag-of-codes MLPs~\cite{yin2025mtlnfm}, attention-based MIL pooling~\cite{harerimana2021deep}, and DeepSets. Full details are provided in Appendices~\ref{app:dl_baselines} and~\ref{app:baselines_classical}. 

\textbf{Experimental protocol and uncertainty.}
Deep models use the same tokenization, patient-disjoint splits, batch size, epochs, and validation-based checkpoint selection. For neural models, we report mean$\pm$SD over seeds \(\{11,101,1001\}\); seeds were fixed, but CUDA/cuDNN was not forced to be bitwise deterministic, so exact reruns may show small variation. Classical baselines are deterministic or use fixed randomness. Metrics use valid-label admissions only; details are provided in Appendix~\ref{app:impl_metrics}.

\textbf{Main results.} Across both datasets and dependence measures, MLCI is substantially more informative for mortality,
while remaining competitive on other outcomes. On both MIMIC-IV/III, our method achieves the \textbf{strongest dependence} for both in-hospital mortality and 30-day mortality. For LOS, our model is \textbf{still best or essentially tied} in both datasets (MIMIC-IV dCorr $=51.15\pm0.88$; MIMIC-III
dCorr $=49.84\pm1.38$; MI shows the same pattern).
For ICU transfer, our method is \textbf{best} on MIMIC-IV (dCorr $=61.97\pm0.64$), while on
MIMIC-III the strongest ICU-transfer dependence is achieved by a DeepSets BCE baseline. This is consistent with MIMIC-III being ICU-heavy and our ICU label capturing \emph{late} transfers influenced by workflow/triage beyond a shared severity axis.

\section{Theory Diagnostics}
\label{sec:theory_diagnostics}

The theory motivates two empirical diagnostics for when a \emph{single monotone severity score} captures shared structure across binary endpoints. The first is the rank-one energy ratio
\(
\sigma_1^2/\|\widetilde W\|_F^2
\),
which measures how much of the stacked centered label structure is captured by the dominant shared admission direction. The second is the threshold scan
\(
j^\star\in\arg\max_j (v^\top g_j)^2/(\|v\|_2^2\|g_j\|_2^2)
\),
which tests whether this direction identifies a compact high-severity upper-tail group. We compute these diagnostics on MIMIC-IV and MIMIC-III using the four clinical outcomes, restricted to admissions with all labels present. We use \(n=5000\) uniformly sampled admissions for MIMIC-IV and the full intersection-valid MIMIC-III cohort \((n=11909)\), ordered by the learned scores \(s_i=s_\theta(X_i)\).

\textbf{Diagnostics computed.}
Let $\ell^{(t)} := Hy^{(t)}$ denote the centered label vector for task $t$.
For task weights $\alpha_t > 0$, form the stacked normalized label-profile
matrix $\widetilde W \in \mathbb{R}^{T\times n}$ with rows
\(
\widetilde W_{t,:}
=
\widetilde w^{(t)\top}
=
\frac{\sqrt{\alpha_t}}{\|\ell^{(t)}\|_2}\,\ell^{(t)\top}.
\)
Equivalently,
\(
\widetilde W^\top \widetilde W
=
\sum_{t=1}^T
\frac{\alpha_t}{\|\ell^{(t)}\|_2^2}
\ell^{(t)}\ell^{(t)\top}.
\)
For the diagnostic results reported below, we set $\alpha_t \equiv 1$ for all tasks.
Compute the SVD $\widetilde W = U\Sigma V^\top$ and define the shared admission direction
$v := V_{:,1}$. We report: (i) the rank-one energy ratio $\sigma_1^2/\|\widetilde W\|_F^2$,
(ii) the floored multi-task objective \(J_{\varepsilon_0}(s)\) evaluated at the learned score vector, and (iii) the floored rank-one projected objective
\(J_{1,\varepsilon_0}(s)=\sigma_1^2\,\bar\Delta_v(s)\).

\textbf{Diagnostic 1: Rank-one alignment across tasks.}
Table~\ref{tab:rank1_alignment} shows that $\widetilde W$ is \emph{moderately} rank-one aligned in both cohorts: the leading singular component captures 49\% (MIMIC-IV) and 45\% (MIMIC-III) of the stacked label energy. Its share of the learned multi-task objective differs between cohorts: it retains 64\% on MIMIC-IV, exceeding its energy share, and 38\% on MIMIC-III. This ratio \(J_{1,\varepsilon_0}(s)/J_{\varepsilon_0}(s)\) is the rank-one
retention \(\gamma(s)\) of Theorem~\ref{thm:transfer}, evaluated at the
learned score. Together, these values indicate a meaningful shared admission
direction alongside substantial task-specific variation.

\begin{table}[!htbp]
\caption{Rank-one alignment and objective values on learned scores.
$J_{\varepsilon_0}(s)$ is the floored multi-task nHSIC objective evaluated on the learned score vector;
$J_{1,\varepsilon_0}(s)$ is the floored rank-one projected objective.}
\centering
\small
\setlength{\tabcolsep}{3pt}
\renewcommand{\arraystretch}{0.85}
\begin{tabular}{lcccc}
\toprule
Dataset & $\sigma_{1:4}$ of $\widetilde W$ & $\sigma_1^2/\|\widetilde W\|_F^2$ & $J_{\varepsilon_0}$ & $J_{1,\varepsilon_0}$ \\
\midrule
MIMIC-IV  & (1.40, 1.05, 0.81, 0.53) & 0.49 & 0.622 & 0.397 \\
MIMIC-III & (1.34, 1.14, 0.84, 0.47) & 0.45 & 0.449 & 0.171 \\

\bottomrule
\end{tabular}
\label{tab:rank1_alignment}
\end{table}
\textbf{Diagnostic 2: Threshold splits from the shared direction.}
Lemma~\ref{lem:threshold}, applied to the shared direction \(v\), implies that the optimal split for the rank-one projected objective within two-level monotone threshold scores can be found by scanning
\(\rho_j^2(v)=\frac{(v^\top g_j)^2}{\|v\|_2^2\|g_j\|_2^2}\), where \(g_j := H\mathbf{1}_{\{j+1,\dots,n\}}\).
Here \(\rho_j(v)\) is the cosine similarity between \(v\) and the centered step vector \(g_j\), corresponding to a two-level threshold split after index \(j\). Since both vectors are centered, this is equivalently their Pearson correlation. 

\begin{table}[!htbp]
\centering
\caption{Two-level monotone threshold diagnostics.
$j_J$ is the best split for the full multi-task threshold objective within the threshold family;
$j_v$ maximizes $\rho_j^2(v)$, equivalently selecting the best threshold for the rank-one projected objective.}
\footnotesize
\setlength{\tabcolsep}{3pt}
\renewcommand{\arraystretch}{0.8}
\begin{tabular}{lcccc}
\toprule
Data & $j_J$ & $j_v$ & Tail size & $\rho_\star^2(v)$ \\
\midrule
MIMIC-IV  & 4863 & 4863 & 137/5000 (2.74\%) & 0.554 \\
MIMIC-III & 6707 & 11128 & 781/11909 (6.56\%) & 0.175 \\
\bottomrule
\end{tabular}
\label{tab:threshold_splits}
\end{table}

On MIMIC-IV the two splits coincide, consistent with the higher rank-one retention there. On MIMIC-III they differ: \(v\) still selects a compact upper-tail subgroup, but the best full-objective threshold falls nearer the median, reflecting the near-balanced length-of-stay endpoint (46\% versus 22\% prevalence), whose own optimal split is mid-ordering. Deviations between \(j_v\) and \(j_J\) thus quantify residual task-specific heterogeneity beyond the leading rank-one component.

\textbf{Task-wise heterogeneity.}
Table~\ref{tab:per_task} shows that different endpoints relate to the learned score in different ways. The per-task nHSIC values measure continuous dependence with the learned score: in MIMIC-III, this dependence is strongest for length of stay, whereas in MIMIC-IV it is strongest for ICU transfer and length of stay. The threshold certificates show a complementary pattern: mortality endpoints have strong upper-tail threshold structure, especially in MIMIC-IV. Thus, some outcomes vary smoothly along the severity score, while others are mainly captured by a high-risk cutoff, helping explain why a single monotone threshold can still support multiple endpoints.

\begin{table}[!htbp]
\centering
\caption{Task-wise diagnostics: (top) nHSIC of learned scores and (bottom) each task's best threshold
certificate from Lemma~\ref{lem:threshold}.}
\footnotesize
\setlength{\tabcolsep}{3pt}
\renewcommand{\arraystretch}{0.85}
\begin{tabular}{llcccc}
\toprule
Dataset & Metric & MORT & 30M & LOS & ICU \\
\midrule
\multirow{2}{*}{MIMIC-IV}
& nHSIC on learned score & 0.063 & 0.068 & 0.212 & 0.278 \\
& best threshold $\rho_\star^2$ & 0.593 & 0.457 & 0.221 & 0.372 \\
\midrule
\multirow{2}{*}{MIMIC-III}
& nHSIC on learned score & 0.087 & 0.080 & 0.251 & 0.031 \\
& best threshold $\rho_\star^2$ & 0.173 & 0.138 & 0.229 & 0.027 \\
\bottomrule
\end{tabular}
\label{tab:per_task}
\end{table}

\section{Limitations}
\label{sec:limitations}

This study has several limitations. First, MLCI assumes that different clinical outcomes share a common severity signal; this may be weaker when clinical outcomes are dominated by task-specific variation. Second, MLCI uses only diagnosis codes, which may be incomplete or noisy and may miss severity information contained in notes, labs, or vital signs. Thus, it should not be interpreted as a complete physiological severity measure. Third, some outcomes reflect care processes as well as patient severity; for example, ICU transfer can depend on bed availability, triage practice, and hospital workflow in addition to disease burden.
\section{Conclusion}
Across two EHR benchmarks, dependence results show that MLCI learns a severity score with strong general dependence on multiple clinical outcomes, while theory diagnostics indicate that centered admission-level outcome profiles across these outcomes exhibit moderate rank-one structure and that the induced shared admission direction \(v\) yields a monotone threshold rule identifying compact high-severity subgroups. More broadly, MLCI shows how data-driven comorbidity scoring can move beyond fixed linear indices by learning nonlinear severity structure while preserving the practical simplicity of a one-dimensional clinical score.

\section*{Acknowledgements}
This work was supported by NSF CAREER Award IIS-2442159 (B.A. and S.B.) and NSF SCH-2500344 (K.J.).
\section*{Impact Statement}

In this study, we introduce MLCI, a machine-learned comorbidity index for more flexible and outcome-aware severity scoring in electronic health records. Unlike traditional indices such as Charlson and Elixhauser, which rely on fixed mortality-centric weights, MLCI learns a single scalar severity score from diagnosis codes by maximizing normalized Hilbert--Schmidt Independence Criterion across multiple clinical endpoints. This framework supports data-driven severity summarization, multi-outcome risk stratification, and retrospective risk adjustment while preserving the usability of a one-dimensional clinical score. Future work should extend MLCI to richer EHR modalities and prospectively validate it before clinical deployment.
\bibliographystyle{icml2026}
\bibliography{references/references}

\clearpage
\appendix
\onecolumn
\section*{\centering Appendices}
These appendices accompany ``\textsc{a Machine-Learned Comorbidity Index}'' and are organized as follows:
\begin{itemize}
\item
{\setlength{\fboxsep}{1pt}
\setlength{\fboxrule}{0.4pt}
\hyperref[app:impl]{%
\fcolorbox{linkboxcyan}{white}{%
\textcolor{black}{\textbf{Appendix A: Implementation Details.}}%
}}} Data preprocessing, model architecture, training procedure, and hyperparameters.

\item
{\setlength{\fboxsep}{1pt}
\setlength{\fboxrule}{0.4pt}
\hyperref[app:guide]{%
\fcolorbox{linkboxcyan}{white}{%
\textcolor{black}{\textbf{Appendix B: Multi-task nHSIC Theory.}}%
}}} Theoretical results underpinning the multi-task dependence objective.

 \item
{\setlength{\fboxsep}{1pt}
\setlength{\fboxrule}{0.4pt}
\hyperref[app:additionalHSICAblations]{%
\fcolorbox{linkboxcyan}{white}{%
\textcolor{black}{\textbf{Appendix C: Additional Experiments.}}%
}}} Supplementary dependence evaluations and ablations.

\item
{\setlength{\fboxsep}{1pt}
\setlength{\fboxrule}{0.4pt}
\hyperref[app:comorbidity]{%
\fcolorbox{linkboxcyan}{white}{%
\textcolor{black}{\textbf{Appendix D: Comorbidity Indices.}}%
}}} Notation and scoring formulas for the Charlson and van Walraven Elixhauser comorbidity indices.

 \item
{\setlength{\fboxsep}{1pt}
\setlength{\fboxrule}{0.4pt}
\hyperref[app:risk-curves]{%
\fcolorbox{linkboxcyan}{white}{%
\textcolor{black}{\textbf{Appendix E: Post-hoc Risk Curves.}}%
}}} Estimation and visualization of outcome-specific risk curves as a function of the learned score.
\end{itemize}
\appendix
\section{Implementation Details}
\label{app:impl}

\subsection{Data processing and diagnosis tokenization}
\label{app:impl_data}

\paragraph{ICD normalization and prefixing.}
Each diagnosis code is converted to uppercase, punctuation and whitespace are removed,
and the first \(k=4\) characters are used as the diagnosis token. A vocabulary is built
\emph{from the training split only} and augmented with \texttt{<PAD>} (index 0) and
\texttt{<UNK>} (index 1). Each admission \(i\) is represented as a variable-length token collection \(X_i=\{C_{i,1},\ldots,C_{i,m_i}\}\), where \(m_i\) is the number of retained diagnosis
tokens after truncation and \(D_{\max}=256\) is the maximum retained length.

\paragraph{Train/val/test mapping.}
We build the vocabulary from the training diagnoses file, then map diagnoses into
token collections separately for train/val/test using the shared training vocabulary.

\subsection{Outcomes, missingness masks, and validity}
\label{app:impl_masks}

\paragraph{Task set.}
We consider \(T\) binary outcomes, such as in-hospital mortality, 30-day mortality, length of stay, and ICU transfer, with possible missing labels. For task \(t\in[T]\),
the label for admission \(i\) is denoted \(y_i^{(t)}\in\{0,1\}\).

\paragraph{Mask definition.}
Let \(M_i^{(t)}\in\{0,1\}\) indicate whether \(y_i^{(t)}\) is defined. For most outcomes,
\(M_i^{(t)}=1\) if and only if the label value is non-missing. For the length-of-stay
outcome, represented by the stored label \texttt{long\_stay}, validity is defined by the
dedicated indicator column \texttt{long\_stay\_defined}; we set
\[
M_i^{(\mathrm{los})}=\texttt{long\_stay\_defined}_i
\]
and treat \texttt{long\_stay} labels outside this mask as invalid. In storage, missing
labels are filled with \(0\), but are excluded from objectives and metrics using the mask.

\paragraph{Intersection validity for MLCI.}
For MLCI's multi-task nHSIC training, validation, and model selection, we enforce
intersection validity
\begin{equation}
M_i^\cap = \prod_{t=1}^T M_i^{(t)},
\end{equation}
and compute every per-task nHSIC contribution only on rows with \(M_i^\cap=1\). This ensures
that all task contributions in the multi-task nHSIC objective are evaluated on the same
admission set. Decomposable BCE baselines instead use task-specific validity masks, as
described below.

\subsection{DeepSets architecture for a single scalar score}
\label{app:impl_model}

\paragraph{Embedding and token MLP.}
We use an embedding dimension \(d=128\). Each token embedding is transformed by an
elementwise MLP
\[
\phi:\ \mathrm{Linear}(d,d)\rightarrow\mathrm{ReLU}\rightarrow
\mathrm{Linear}(d,d)\rightarrow\mathrm{ReLU}.
\]

\paragraph{Pooling.}
Given masked token features \(\{h_j\}\), we compute masked mean pooling and masked max pooling,
then concatenate and apply LayerNorm:
\[
p=\mathrm{LN}([\bar h;\hat h])\in\mathbb R^{2d}.
\]

\paragraph{Score head.}
We map \(p\) to a scalar via
\[
\rho:\ \mathrm{Linear}(2d,d)\rightarrow\mathrm{ReLU}\rightarrow
\mathrm{Linear}(d,d)\rightarrow\mathrm{ReLU}\rightarrow\mathrm{Linear}(d,1),
\]
producing the learned score \(s_i=s_\theta(X_i)\in\mathbb R\).

\subsection{nHSIC computation}
\label{app:impl_nhsic}

\paragraph{Kernels.}
On a mini-batch of scores \(\{s_i\}_{i=1}^{n_b}\), we use an RBF score kernel
\[
K_{ij}^{(b)}
=
\exp\!\left(-\frac{(s_i-s_j)^2}{2\sigma^2}\right).
\]
For task \(t\), the binary label Gram matrix is
\[
L_{ij}^{(b,t)}
=
\mathbb I\{y_i^{(t)}=y_j^{(t)}\}.
\]

\paragraph{Centering.}
Let
\[
H_b:=I-\frac1{n_b}\mathbf 1\mathbf 1^\top
\]
be the mini-batch centering matrix. The centered mini-batch Gram matrices are
\[
K_c^{(b)}:=H_bK^{(b)}H_b,
\qquad
L_{t,c}^{(b)}:=H_bL^{(b,t)}H_b.
\]
Equivalently, this subtracts row and column means and adds the grand mean.

\paragraph{Normalized criterion with floor.}
For task \(t\), we compute
\[
\widehat{\mathrm{nHSIC}}\!\left(s,y^{(t)}\right)
=
\frac{\langle K_c^{(b)},L_{t,c}^{(b)}\rangle_F}
{\max\{\|K_c^{(b)}\|_F,\varepsilon_0\}\,\|L_{t,c}^{(b)}\|_F}.
\]
We set the nHSIC floor to
\[
\varepsilon_0=\texttt{NHSIC\_FLOOR}=10^{-4}
\]
during training.

\paragraph{Per-task masking and guards.}
For MLCI, each per-task nHSIC term is computed on the intersection-valid mini-batch indices
\(
\{i:M_i^\cap=1\}.
\)
If fewer than 3 valid points are available in a batch, or if the valid labels for task \(t\)
are single-class, we set \(\widehat{\mathrm{nHSIC}}(s,y^{(t)})=0\) for that batch.

\subsection{Bandwidth selection}
\label{app:impl_sigma}

We set a fixed numerical RBF bandwidth \(\sigma\) before each model fit using a standardized reference-bandwidth procedure on training scores. Concretely, we run the freshly initialized model on training admissions, standardize the resulting scores to zero mean and unit variance, select a deterministic subsample of at most \(20{,}000\) scores, and set \(\sigma\) to the median of the nonzero pairwise absolute differences. We apply a lower bound of \(0.05\) to avoid degenerate small-bandwidth behavior when scores are nearly tied.

For the default MLCI configuration, bandwidth estimation and validation/test nHSIC computations use deterministic subsamples of at most \(20{,}000\) admissions. Bandwidth estimation uses evaluation seed \(12345\), while validation and test computations use deterministic offsets derived from that seed.

This standardization is used only for selecting the bandwidth. The resulting numerical bandwidth is held fixed throughout the corresponding model fit and is not rescaled to the raw-score scale. The learned score itself is not standardized in the nHSIC objective: training uses the scalar model outputs directly in
\[
K_{ij}^{(b)}
=
\exp\!\left(-\frac{(s_i-s_j)^2}{2\sigma^2}\right).
\]

\subsection{Optimization and training protocol}
\label{app:impl_train}

\paragraph{Compute environment.}
Experiments were executed on an AMD EPYC 7763 processor with 2 TB of memory and 8 NVIDIA A40 GPUs with 48 GB each.

\paragraph{Optimizer and schedule.}
We use AdamW with learning rate \(10^{-3}\), batch size \(256\), and no weight decay. Training
uses two stages: Stage~1 trains single-task models for 10 epochs each to estimate task
weights; Stage~2 trains a multi-task model for 10 epochs using the weighted objective.
\paragraph{Two-stage weights.}
For each task \(t\), Stage~1 training maximizes nHSIC with outcome \(t\), while checkpoint selection uses validation nHSIC against in-hospital mortality. Let \(\widehat h_t\) denote the resulting best mortality-anchored validation nHSIC, and let
\[
\widehat h_{\max}:=\max_t \widehat h_t.
\]
We compute stabilized inverse-strength weights
\[
\alpha_t \propto
\left(
\frac{\widehat h_{\max}}{\max(\widehat h_t,\varepsilon_{\mathrm{wt}})}
\right)^{\gamma_{\mathrm{wt}}},
\]
with
\[
\varepsilon_{\mathrm{wt}}=0.02,
\qquad
\gamma_{\mathrm{wt}}=0.25.
\]
We clip weights to a bounded range with factor \(3.0\) and renormalize so the mean
weight over active tasks is \(1\). The assumption \(\alpha_t>0\) is intended for
non-degenerate tasks included in the objective. If a task has no valid labels or only
one observed class in the training split, it is not included in the empirical objective;
setting its code weight to zero is only the implementation equivalent of omitting that
task from the sum.

\paragraph{Model selection.}
Stage~1 trains each model using its outcome-specific nHSIC objective but selects its checkpoint by maximizing validation nHSIC against the common in-hospital mortality anchor. Stage~2 selects the checkpoint maximizing the weighted multi-outcome validation nHSIC sum.

\paragraph{Random seeds and deterministic execution.}
For neural models, we fixed Python, NumPy, and PyTorch random seeds for each run and report
mean\(\pm\)SD over seeds \(\{11,101,1001\}\). CUDA/cuDNN execution was not forced to be
bitwise deterministic, so exact reruns may exhibit small numerical variation. Reported neural
results correspond to the runs with the specified seeds.

\paragraph{Score orientation.}
After Stage~2 training, we orient the score using validation mortality. If the Pearson
correlation between \(s_i\) and \(y_i^{(\mathrm{mort})}\) on valid validation rows is
negative, we multiply all scores by \(-1\).

\subsection{Clinical-index baselines}
\label{app:clinical_indices}

We evaluate CCI and van Walraven--weighted ECI directly as scalar scores, with missing values set to \(0\). For the combined baseline, we mean-impute and standardize both indices using training-split statistics, then define
\[
s_i^{(\mathrm{CCI+ECI})}
=
z_{\mathrm{train}}(\mathrm{CCI}_i)
+
z_{\mathrm{train}}(\mathrm{ECI}_i).
\]
The fitted transformations are applied unchanged to the test split.

\subsection{Deep Learning Baselines}
\label{app:dl_baselines}
All deep learning baseline models use the \emph{same} data processing, vocabulary construction, padding rules, train/validation/test splits, optimizer settings, and evaluation code described above. They use task-specific validity masks for BCE training and validation, as detailed below. The only component that changes across baselines is the architecture, which maps an unordered multiset of diagnosis tokens for admission \(i\) to a single scalar baseline score
\(s_i^{\mathrm{base}}=f_\theta(X_i)\).

\paragraph{Single-index multi-outcome setting.}
Each admission produces one shared scalar baseline score \(s_i^{\mathrm{base}}\in\mathbb R\). This scalar is reused
across all outcomes and is not an outcome-specific calibrated probability. Missing labels are
handled by the task mask \(M_i^{(t)}\) (Section~\ref{app:impl_masks}); padded tokens never
contribute to the pooled representation.

\paragraph{Two-stage training with weighted masked BCE.}
All BCE baselines use a two-stage protocol. In Stage~1, we train separate single-task models, one per outcome, and select the best checkpoint by \emph{minimum} masked validation BCE. Stage~1 validation losses are converted into task weights that emphasize tasks that are harder
under the shared score constraint. In Stage~2, a new shared model is trained from scratch to
minimize a normalized weighted masked BCE across outcomes:
\[
\mathcal L_{\mathrm{BCE}}
=
\frac{
\sum_{t=1}^T \alpha_t^{\mathrm{BCE}}
\sum_i M_i^{(t)}\,\mathrm{BCEWithLogits}(s_i^{\mathrm{base}},y_i^{(t)})
}{
\sum_{t=1}^T \alpha_t^{\mathrm{BCE}}
\sum_i M_i^{(t)}
}.
\]

\paragraph{Weight computation and skipped tasks.}
Let \(\widehat{\mathcal L}_t\) denote the best Stage~1 validation BCE for outcome \(t\). We compute
a nonnegative score
\[
\mathrm{score}_t=\max(\log 2-\widehat{\mathcal L}_t,0)
\]
and set baseline task weights using a capped inverse power law:
\[
\alpha_t^{\mathrm{BCE}}
=
\mathrm{clip}\!\left(
\left(
\frac{\max_u \mathrm{score}_u}{\max(\mathrm{score}_t,\varepsilon_{\mathrm{wt}})}
\right)^{\gamma_{\mathrm{wt}}},
\frac1c,
c
\right),
\]
with
\[
\varepsilon_{\mathrm{wt}}=0.02,
\qquad
\gamma_{\mathrm{wt}}=0.25,
\qquad
c=3.0.
\]
Weights are then mean-normalized over active tasks. Outcomes that are single-class, or have no
valid labels in the training split, are skipped and assigned \(\alpha_t^{\mathrm{BCE}}=0\), so
they do not contribute to Stage~2 training.

\subsection{Classical ML baselines}
\label{app:baselines_classical}

We implement several classical baselines on top of the same admission-level ICD-prefix
representation and the same masking conventions as in Section~\ref{app:impl_masks}. For all
classical models, diagnosis codes are normalized by converting to uppercase and removing
punctuation and whitespace, then truncated to the first \(k=4\) characters to form ICD-prefix
tokens. A prefix vocabulary is constructed \emph{from the training split only} and augmented
with a dedicated \texttt{<UNK>} token for out-of-vocabulary prefixes. Each admission is
represented by at most \(D_{\max}=256\) retained diagnosis tokens in encounter order. Unless otherwise noted, classical baselines use the resulting sparse bag-of-words features; when TF–IDF is applied, its statistics are fit on TRAIN and then applied unchanged to VAL/TEST.

\paragraph{Label validity and masking.}
\label{app:validityComparison}
For each task \(t\), we extract binary labels \(y_i^{(t)}\) and a validity mask
\(M_i^{(t)}\), where \(M_i^{(t)}=1\) denotes that the label is defined for admission \(i\).
For most tasks, \(M_i^{(t)}=1\) if and only if the label entry is non-missing; for
\texttt{long\_stay}, validity is given by the dedicated indicator
\texttt{long\_stay\_defined}. All training and validation objectives and evaluation metrics for task \(t\) use only rows with \(M_i^{(t)}=1\).

\paragraph{DL-style validation loss.}
To match the deep-learning evaluation convention used elsewhere, we compute validation binary
cross-entropy as a mean over fixed-size batches. For each batch, we compute BCE on the subset
of rows with \(M_i^{(t)}=1\); if a batch contains zero valid labels, its contribution is defined
as \(0\), and we then average over batches.

\paragraph{Two-stage task weighting.}
For multi-task fusion baselines, we derive per-task weights from Stage~1 validation BCE. Let
\(\widehat{\mathcal L}_t\) denote the Stage~1 validation BCE for task \(t\), computed with masking
and batch-averaging as above. We convert \(\widehat{\mathcal L}_t\) to a nonnegative strength score
\[
\mathrm{score}_t=\max(\log 2-\widehat{\mathcal L}_t,0),
\]
and then set
\[
\alpha_t^{\mathrm{BCE}}
=
\mathrm{clip}\!\left(
\left(
\frac{\max_u \mathrm{score}_u}{\max(\mathrm{score}_t,\varepsilon_{\mathrm{wt}})}
\right)^{\gamma_{\mathrm{wt}}},
\frac1c,
c
\right).
\]
Tasks that are unusable on TRAIN, because they have too few valid labels or are single-class
on valid TRAIN rows, or have non-finite validation loss, receive
\(\alpha_t^{\mathrm{BCE}}=0\). Active weights are normalized to have mean \(1\) over
\(\{t:\alpha_t^{\mathrm{BCE}}>0\}\).

\paragraph{Stage-1 per-task models.}
We train per-task models for logistic regression (LR), factorization machines (FM),
gradient-boosted trees (GBT), \(k\)-nearest neighbors (\(k\)NN), and Naive Bayes (NB). Each
Stage~1 model is trained on TRAIN rows with \(M_i^{(t)}=1\) for that task and produces an
admission-level score, logit, or margin \(a_i^{(t),\mathrm{base}}\) for all admissions in
TRAIN/VAL/TEST. When a model outputs probabilities, such as \(k\)NN, we convert to logits by
\[
a_i^{(t),\mathrm{base}}
=
\log\!\left(\frac{\hat p_i^{(t)}}{1-\hat p_i^{(t)}}\right),
\]
with clipping for numerical stability.

\paragraph{Stage-2 fusion into a single scalar score.}
To obtain a single scalar score per admission for downstream dependence evaluation, we use one
of two fusion strategies depending on the baseline. First, \emph{pooled learner fusion} pools
examples across tasks by repeating admissions for each valid task label and fits a single
Stage~2 model with per-example sample weight \(\alpha_t^{\mathrm{BCE}}\). This is used for
FM/GBT and for LR fusion over Stage~1 logits. Second, \emph{weighted logit averaging} defines
the final score as a weighted mean of Stage~1 logits across tasks,
\[
s_i^{\mathrm{base}}
=
\sum_t \widetilde\alpha_t^{\mathrm{BCE}}\,
a_i^{(t),\mathrm{base}},
\qquad
\widetilde\alpha_t^{\mathrm{BCE}}
\propto
\alpha_t^{\mathrm{BCE}}
\]
over active tasks. This is used for lightweight fusion variants such as \(k\)NN and NB when
no Stage~2 learner is trained.

\subsection{Outcomes and dataset construction}
\label{app:outcomes_dataset}

\paragraph{Label construction overview.}
We construct four admission-level binary outcomes from MIMIC-III and MIMIC-IV:
in-hospital mortality, 30-day all-cause mortality, length of stay, and ICU transfer. Labels are
derived from structured hospital tables, such as \texttt{ADMISSIONS}, \texttt{PATIENTS}, and
\texttt{ICUSTAYS}, using timestamp-based rules relative to each admission's
\texttt{ADMITTIME}. Unless otherwise noted, labels are fully observed for all included
admissions; when required timestamps are missing, we treat the label as missing and exclude
the admission for that task.

\paragraph{In-hospital mortality.}
We define an admission-level in-hospital mortality label using the hospital
\texttt{ADMISSIONS} table. For each admission \(i\), we set
\[
y_i^{(\mathrm{mort})}=1
\]
if and only if \texttt{DEATHTIME} is non-null for that admission, and set
\(y_i^{(\mathrm{mort})}=0\) otherwise. For MIMIC-IV, we restrict to an ICD-10 cohort by keeping
admissions with at least one ICD-10 diagnosis record
(\texttt{DIAGNOSES\_ICD.icd\_version}=10). The label is fully observed for all included
admissions, so \(M_i^{(\mathrm{mort})}=1\). We additionally compute age at admission, capped
at 90 for ages \(>89\), using dataset-specific conventions.

\paragraph{30-day all-cause mortality.}
For each admission \(i\), we define a 30-day all-cause mortality label relative to the
admission start time. We use patient-level date-of-death, \texttt{PATIENTS.DOD}, as the
all-cause death timestamp and compute the elapsed time in days from the admission time
\texttt{ADMITTIME}. For admissions with valid \texttt{ADMITTIME}, we set
\(y_i^{(\mathrm{mort30})}=1\) if and only if \texttt{DOD} is observed and
\[
0 \le (\texttt{DOD}-\texttt{ADMITTIME}) \le 30
\]
days, and set \(y_i^{(\mathrm{mort30})}=0\) otherwise. Thus the label includes both
in-hospital deaths and post-discharge deaths within 30 days of admission. Admissions with
missing \texttt{ADMITTIME} are excluded from this label and treated as missing. For MIMIC-IV,
we restrict to an ICD-10 cohort by keeping admissions with at least one ICD-10 diagnosis record
(\texttt{DIAGNOSES\_ICD.icd\_version}=10).

\paragraph{Length of stay.}
We define an admission-level length-of-stay label using stay duration computed from
\texttt{ADMISSIONS.ADMITTIME} and \texttt{ADMISSIONS.DISCHTIME}. For each admission \(i\) with
valid timestamps, we compute
\[
\texttt{los\_days}=(\texttt{DISCHTIME}-\texttt{ADMITTIME})
\]
in days and set \(y_i^{(\mathrm{los})}=1\) if and only if \(\texttt{los\_days}>7\), and set
\(y_i^{(\mathrm{los})}=0\) otherwise. Admissions with missing admission or discharge times are
excluded from this label. For MIMIC-IV, we restrict to an ICD-10 cohort by keeping admissions
with at least one ICD-10 diagnosis record (\texttt{DIAGNOSES\_ICD.icd\_version}=10).

\paragraph{ICU transfer.}
We derive ICU-transfer labels by linking hospital admissions, \texttt{hadm\_id}, to ICU stays
and extracting the first ICU entry time. For each admission \(i\), we set
\texttt{first\_icu\_intime} to the earliest valid \texttt{ICUSTAYS.INTIME} associated with
that \texttt{hadm\_id} and compute \texttt{time\_to\_icu\_hours} as the elapsed hours from
\texttt{ADMISSIONS.ADMITTIME} to \texttt{first\_icu\_intime}. We define \texttt{icu\_any}=1
if and only if an ICU stay exists for the admission. Admissions with missing
\texttt{ADMITTIME} are excluded from this label. Because MIMIC-III is ICU-heavy, meaning nearly
all admissions have an ICU stay, we define \texttt{icu\_transfer} as a late ICU transfer
indicator:
\[
y_i^{(\mathrm{icu})}=1
\quad\text{if and only if}\quad
\texttt{time\_to\_icu\_hours}>24.
\]
In MIMIC-IV, where ICU admission is less ubiquitous, we use an any-ICU-after-admission
definition:
\[
y_i^{(\mathrm{icu})}=1
\quad\text{if and only if}\quad
\texttt{time\_to\_icu\_hours}>0.
\]
We restrict the MIMIC-IV cohort to ICD-10 admissions for consistency with the rest of our
pipeline.

\paragraph{Dataset construction and splits.}
For each of the two datasets—MIMIC-III and MIMIC-IV—we construct an admission-level cohort anchored on
the set of unique \texttt{(subject\_id, hadm\_id)} pairs appearing in our in-hospital mortality
label file, and merge all other outcome labels by \texttt{(subject\_id, hadm\_id)}. For
MIMIC-IV, this cohort is restricted to ICD-10 admissions.

We then create train/validation/test splits at the \emph{patient} level by assigning each
\texttt{subject\_id} to a split via a deterministic hash with proportions 70\%/10\%/20\%, and
placing all of that patient's admissions in the same split. This avoids leakage across
repeated admissions from the same individual and yields an evaluation that reflects
generalization to previously unseen patients.

\subsection{Metric computation}
\label{app:impl_metrics}
The held-out dependence metrics reported in Tables~1--2 are computed \emph{per outcome} using the task-specific validity mask for that outcome. Thus, for each outcome column, all models are evaluated on the same valid test admissions. For outcome \(t\), let
\[
\mathcal I_t=\{i:M_i^{(t)}=1\}
\]
denote admissions with a defined label. For any scalar score \(q_i\) being evaluated, we compute each metric using pairs
\[
\{(q_i,y_i^{(t)})\}_{i\in\mathcal I_t}.
\]
If \(|\mathcal I_t|<3\) or \(y_i^{(t)}\) is single-class on \(\mathcal I_t\), we return \(0\).

\paragraph{Distance correlation (dCorr).}
We compute distance correlation between the scalar score and the binary outcome on
\(\mathcal I_t\) using the standard distance correlation implementation
(\texttt{dcor.distance\_correlation}). Because dCorr is signless, it measures the strength of
dependence rather than direction.

\paragraph{Mutual information (MI, nats).}
We estimate mutual information between a scalar score \(q_i\) and binary label
\(y_i^{(t)}\) on \(\mathcal I_t\) using the \(k\)-nearest-neighbor estimator implemented in
\texttt{sklearn.feature\_selection.mutual\_info\_classif}.
We first standardize scores on the valid subset by z-scoring and use
\(k_{\mathrm{MI}}=\min(5,|\mathcal I_t|-1)\) neighbors with a fixed
random seed of \(12345\) to avoid invalid settings when few valid labels
are available. The resulting estimate is reported in nats, the natural-logarithm units of mutual information.
\clearpage
\section{Rank-One Multi-Task nHSIC Theory: Roadmap, Assumptions, and Notation}
\label{app:guide}
\paragraph{Overview.}
We study the multi-task dependence structure induced by normalized HSIC, where each task compares a centered score Gram matrix with a centered label Gram matrix. For binary labels, the centered label Gram matrix reduces to a rank-one object determined by the centered label profile. This lets us rewrite the weighted multi-task objective as an alignment between the centered score Gram matrix and a single cross-task label-alignment matrix.

The main idea is then to isolate the dominant shared component of this cross-task label structure. We stack the weighted centered label profiles across tasks and project the resulting matrix onto its best rank-one approximation. The projected objective reduces exactly to a single-admission-direction problem governed by the leading right singular vector \(v\). Applying the global nHSIC ratio lemma to this direction yields an explicit two-level monotone threshold rule: the best split is the one whose centered step vector is most aligned with \(v\).

Finally, we compare the original and projected multi-task objectives using an \(\varepsilon_0\)-floored denominator. The rank-one projected objective is bounded above by the original objective, and the resulting rank-one retention ratio provides a quantitative guarantee for the projected optimizer relative to the original multi-task objective.

\paragraph{Finite-sample nHSIC objective.}
For each task \(t\), the appendix studies
\[
\mathrm{nHSIC}\!\left(r,y^{(t)}\right)
:=
\frac{\langle K_c(r),L_{t,c}\rangle_F}
{\|K_c(r)\|_F\|L_{t,c}\|_F},
\qquad
K_c(r):=HK(r)H,\quad L_{t,c}:=HL_tH,
\]
with the convention that the ratio is \(0\) whenever a required denominator factor is zero. The multi-task objective is
\[
J(r):=\sum_{t=1}^T\alpha_t\,\mathrm{nHSIC}\!\left(r,y^{(t)}\right),
\qquad \alpha_t>0.
\]

\paragraph{Proof dependencies.}
The proofs proceed as follows.
\begin{enumerate}
    \item Lemma~\ref{app:label_rank1} shows that binary delta-label Grams satisfy
    \(
L_{t,c}=2\ell^{(t)}\ell^{(t)\top},
    \qquad
    \ell^{(t)}:=Hy^{(t)}.
    \)

    \item Lemma~\ref{app:nhsic_global_bound_near_threshold} studies
    \[
    \bar\Delta_w(r):=\frac{w^\top K_c(r)w}{\|K_c(r)\|_F},
    \qquad
    \mathbf 1^\top w=0,
    \]
    proves \(\bar\Delta_w(r)\le \|w\|_2^2\), and gives the exact threshold value
    \[
\bar\Delta_w(r^{(j)})=\|w\|_2^2\rho_j^2,
\qquad
\rho_j:=
\frac{w^\top g_j}{\|w\|_2\|g_j\|_2},
\qquad
g_j:=H\mathbf 1_{R_j}.
\]

    \item Lemma~\ref{app:rank1-projection-nhsic} proves the rank-one projection reduction
    \(
    J_1(r)=\sigma_1^2\bar\Delta_v(r)
    \)
    for \(\widetilde W_1=\sigma_1uv^\top\).
    \item Theorem~\ref{app:rank1-retention} proves that the rank-one projected objective never exceeds the original floored objective, defines the rank-one retention ratio
    \(
    \gamma(r),
    \)
    and shows that the optimizer of the projected objective retains a \(\gamma(r)\)-fraction of the objective value achieved by any comparator.
        
    \item Lemma~\ref{app:noise_nhsic_bridge} shows that if \(K_c(s)\) concentrates around
    \(
    \bar K_c:=\mathbb E_\eta[K_c(s)],
    \)
    then \(\mathrm{nHSIC}(s,y^{(t)})\) is close to the averaged-kernel normalized target.
\end{enumerate}

\subsection{Core assumptions and conventions}
\label{app:assumptions}

Unless stated otherwise, we assume:
\begin{enumerate}
\item[\textbf{(A1)}] Admissions are indexed by latent severity:
\(
z_1<\cdots<z_n.
\)

\item[\textbf{(A2)}] Each included task is nonconstant:
\(
\|Hy^{(t)}\|_2>0.
\)

\item[\textbf{(A3)}] Scores lie in the bounded monotone class: \(
\mathcal R_B^\uparrow
:=
\{r\in[-B,B]^n:r_1\le\cdots\le r_n\}.
\)

\item[\textbf{(A4)}] The score kernel is radial, bounded, continuous, and positive semidefinite:
\[
K_{ij}(r)=k(r_i,r_j)=\kappa(|r_i-r_j|),
\]
where \(\kappa\) is strictly decreasing on the relevant range
\([0,2B]\). We assume that \(k\) is positive semidefinite on \([-B,B]\), so that
for every \(r\in\mathcal R_B^\uparrow\), the Gram matrix \(K(r)\) is symmetric
positive semidefinite. Consequently,
\[
K_c(r)=HK(r)H
\]
is also symmetric positive semidefinite.
\item[\textbf{(A5)}] When invoked, the best rank-one approximation is
\(
\widetilde W_1=\sigma_1uv^\top
\),
with \(\sigma_1\ge0\), \(\|u\|_2=\|v\|_2=1\). Since each row of \(\widetilde W\) is centered,  \(\widetilde W\mathbf{1}_n = 0\), where \(\mathbf{1}_n \in \mathbb{R}^n\) denotes the all-ones vector. Hence, whenever
\(\sigma_1>0\), the associated right singular vector satisfies
\(\mathbf 1^\top v=0\), equivalently \(Hv=v\).

\item[\textbf{(A6)}] nHSIC-type ratios are set to \(0\) whenever a required denominator factor is zero. Floored objectives use
\(
\max\{\|K_c(r)\|_F,\varepsilon_0\}.
\)
\end{enumerate}

\subsection{Notation at a glance}
\label{app:notation}

\begin{center}
\begin{tabular}{|l|l|}
\hline
\textbf{Symbol} & \textbf{Meaning} \\
\hline
\(z_1<\cdots<z_n\) & Ordered admissions by latent severity \\
\(r=(r_1,\dots,r_n)^\top\) & Monotone score vector assigned to the ordered admissions \\
\(\mathcal R_B^\uparrow\) & Bounded monotone score class \(\{r\in[-B,B]^n:r_1\le\cdots\le r_n\}\) \\
\(y^{(t)}\in\{0,1\}^n\) & Binary labels for task \(t\) \\
\([T]\) & Task index set \(\{1,\dots,T\}\) \\
\hline
\(H\) & Centering matrix \(I-\frac1n\mathbf 1\mathbf 1^\top\) \\
\(K(r)\) & Score Gram matrix, \(K_{ij}(r)=\kappa(|r_i-r_j|)\) \\
\(K_c(r)\) & Centered score Gram matrix \(HK(r)H\) \\
\(L_t\) & Delta label Gram matrix, \((L_t)_{ij}=\mathbb I\{y_i^{(t)}=y_j^{(t)}\}\) \\
\(L_{t,c}\) & Centered label Gram matrix \(HL_tH\) \\
\(\ell^{(t)}\) & Centered label profile \(Hy^{(t)}\) \\
\hline
\(\mathrm{nHSIC}(r,y^{(t)})\) & Per-task normalized HSIC objective \\
\(\alpha_t\) & Fixed task weight \\
\(\beta_t\) & \(\alpha_t/\|\ell^{(t)}\|_2^2\) \\
\(\widetilde w^{(t)}\) & \(\sqrt{\beta_t}\ell^{(t)}\) \\
\(\widetilde W\) & Stacked weighted label-profile matrix \\
\(\sigma_k\) & Singular values of \(\widetilde W\) \\
\(u_k,v_k\) & Left and right singular vectors of \(\widetilde W\) \\
\hline
\(J(r)\) & Multi-task objective \(\sum_t\alpha_t\,\mathrm{nHSIC}(r,y^{(t)})\) \\
\(\widetilde W_1\) & Best rank-one approximation \(\sigma_1uv^\top\) \\
\(J_1(r)\) & Projected rank-one objective \\
\(J_{\varepsilon_0}(r)\) & Floored multi-task objective \\
\(J_{1,\varepsilon_0}(r)\) & Floored projected rank-one objective \\
\(\bar\Delta_w(r)\) & Single-direction ratio \(w^\top K_c(r)w/\|K_c(r)\|_F\) \\
\(\bar\Delta_v(r)\) & Rank-one admission-direction ratio \(v^\top K_c(r)v/\|K_c(r)\|_F\) \\
\hline
\(L_j,R_j\) & Threshold split sets \(\{1,\dots,j\}\), \(\{j+1,\dots,n\}\) \\
\(\mathbf 1_{R_j}\) & Indicator vector of \(R_j\) \\
\(g_j\) & Centered step vector \(H\mathbf 1_{R_j}\) \\
\(\rho_j\) & Cosine correlation \(\frac{w^\top g_j}{\|w\|_2\|g_j\|_2}\), for the fixed direction \(w\) in the threshold lemma \\
\(j^\star\) & Best threshold split index \(\arg\max_j\rho_j^2\) \\
\hline
\(\varepsilon_0\) & Denominator floor \\
\(\tau\) & Positive lower bound on \(\|\bar K_c\|_F\) in the noise-averaging lemma \\
\(\gamma(r)\) & Rank-one retention ratio \(J_{1,\varepsilon_0}(r)/J_{\varepsilon_0}(r)\) \\
\(\bar K_c\) & Noise-averaged centered score Gram matrix \(\mathbb E_\eta[K_c(s)]\) \\
\hline
\end{tabular}
\end{center}

\begin{lemma}[Centered delta-kernel is rank one for binary labels]
\label{app:label_rank1}

Let \(y\in\{0,1\}^n\) and define the delta label Gram matrix
\(
L_{ij}:=\mathbb I\{y_i=y_j\},
\)
where \(\mathbb I\{\cdot\}\) denotes the indicator function. Let \(L_c:=HLH\). Then
\(
L_c = 2(Hy)(Hy)^\top,
\)
so \(\mathrm{rank}(L_c)\le 1\). Moreover,
\(
\|L_c\|_F = 2\|Hy\|_2^2.
\)
\end{lemma}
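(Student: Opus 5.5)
The plan is to express the delta Gram matrix \(L\) as an explicit low-rank combination of \(\mathbf 1\) and \(y\), and then let the centering matrix \(H\) remove every term that involves \(\mathbf 1\).

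\textbf{Key identity.} For binary entries we have
\[
\mathbb I\{y_i=y_j\}=y_iy_j+(1-y_i)(1-y_j).
\]
Hence
\[
L = yy^\top + (\mathbf 1-y)(\mathbf 1-y)^\top = \mathbf 1\mathbf 1^\top - \mathbf 1y^\top - y\mathbf 1^\top + 2yy^\top .
\]
I will verify this entrywise by checking the four cases \((y_i,y_j)\in\{0,1\}^2\).

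\textbf{Centering.} Since \(H\) is symmetric and \(H\mathbf 1=0\), the terms \(H\mathbf 1\mathbf 1^\top H\), \(H\mathbf 1y^\top H\) and \(Hy\mathbf 1^\top H\) all vanish. This leaves
\[
L_c=2\,Hyy^\top H = 2(Hy)(Hy)^\top.
\]
A rank-one outer product scaled by a constant has rank at most one, with equality exactly when \(Hy\neq0\).

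\textbf{Frobenius norm.} For any vector \(a\) we have \(\|aa^\top\|_F^2=\operatorname{tr}(aa^\top aa^\top)=\|a\|_2^4\). Taking \(a=Hy\) gives \(\|L_c\|_F=2\|Hy\|_2^2\).

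There is no real obstacle in this argument. The only step that needs care is the entrywise verification of the decomposition of \(L\); everything after it follows from \(H\mathbf 1=0\).
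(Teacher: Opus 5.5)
Your proposal is correct and matches the paper's proof essentially step for step. The paper also writes \(L=\mathbf 1\mathbf 1^\top-y\mathbf 1^\top-\mathbf 1y^\top+2yy^\top\) from \(\mathbb I\{y_i=y_j\}=y_iy_j+(1-y_i)(1-y_j)\), uses \(H\mathbf 1=0\) to eliminate the first three terms, and obtains the Frobenius norm from \(\|uu^\top\|_F=\|u\|_2^2\).
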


\begin{proof}
Using
\(
L_{ij}=y_iy_j+(1-y_i)(1-y_j)=1-y_i-y_j+2y_iy_j,
\)
we get
\(
L=\mathbf 1\mathbf 1^\top-y\mathbf 1^\top-\mathbf 1y^\top+2yy^\top.
\)
Since \(H\mathbf 1=0\), centering kills the first three terms and
\[
L_c=H(2yy^\top)H=2(Hy)(Hy)^\top.
\]
The Frobenius norm follows from \(\|uu^\top\|_F=\|u\|_2^2\).
\end{proof}
\begin{lemma}[Global upper bound and threshold certificate for the nHSIC ratio]
\label{app:nhsic_global_bound_near_threshold}

Consider \(n\) admissions with ordered latent severities \(z_1<\cdots<z_n\), and fix any centered target direction
\(w\in\mathbb R^n\), so \(\mathbf 1^\top w=0\). If \(w=0\), then \(\bar\Delta_w\equiv 0\) and all statements are trivial. Hence assume \(w\neq0\) below.

Let \(k(x,x')=\kappa(|x-x'|)\) be a bounded continuous positive semidefinite radial kernel, where \(\kappa\) is strictly decreasing on \([0,2B]\). For any score vector
\(r\in\mathcal R_B^\uparrow\), define
\[
K_{ij}(r):=\kappa(|r_i-r_j|),
\qquad
K_c(r):=HK(r)H,
\qquad
H:=I-\frac1n\mathbf 1\mathbf 1^\top.
\]
Define
\[
\bar\Delta_w(r):=
\frac{w^\top K_c(r)w}{\|K_c(r)\|_F},
\]
with the convention \(\bar\Delta_w(r)=0\) whenever \(\|K_c(r)\|_F=0\).

For each split index \(j\in\{1,\dots,n-1\}\), define
\[
L_j:=\{1,\dots,j\},
\qquad
R_j:=\{j+1,\dots,n\},
\]
and
\[
g_j:=H\mathbf 1_{R_j}
=
\mathbf 1_{R_j}-\frac{|R_j|}{n}\mathbf 1,
\]
where \(\mathbf 1_{R_j}\in\mathbb R^n\) is the indicator vector of \(R_j\). Since \(w\neq0\) and \(g_j\neq0\), define
\[
\rho_j:=
\frac{w^\top g_j}{\|w\|_2\|g_j\|_2}.
\]

Then:
\begin{enumerate}
\item For every \(r\in\mathcal R_B^\uparrow\),
\[
\bar\Delta_w(r)\le \|w\|_2^2.
\]

\item For any nontrivial two-level threshold score \(r^{(j)}\in\mathcal R_B^\uparrow\) of the form
\[
r_i^{(j)}
=
\begin{cases}
a, & i\in L_j,\\
b, & i\in R_j,
\end{cases}
\qquad a<b,
\]
we have
\[
\bar\Delta_w(r^{(j)})=\|w\|_2^2\rho_j^2.
\]
In particular, this value is independent of the separation \(|a-b|\).

\item Let
\[
\rho_\star^2:=\max_{1\le j\le n-1}\rho_j^2,
\]
and let \(j^\star\) attain the maximum. Then any nontrivial two-level threshold on
\(L_{j^\star}/R_{j^\star}\) satisfies
\[
\bar\Delta_w(r^{(j^\star)})
=
\|w\|_2^2\rho_\star^2
\]
and hence
\[
\bar\Delta_w(r^{(j^\star)})
\ge
\rho_\star^2
\sup_{u\in\mathcal R_B^\uparrow}\bar\Delta_w(u).
\]
Thus \(\rho_\star^2\) is an explicit certificate for the fraction of the global upper bound achieved by the best threshold.

\item There exists a nontrivial two-level threshold \(r^{(j)}\) that achieves the global upper bound
\[
\bar\Delta_w(r^{(j)})=\|w\|_2^2
\]
if and only if \(\rho_\star^2=1\), equivalently \(w\) is proportional to some centered step vector \(g_j\).
\end{enumerate}
\end{lemma}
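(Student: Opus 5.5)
The plan is to follow the main-text sketch of Lemma~\ref{lem:threshold}, adding the two new items (3) and (4).

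\emph{Part 1.} Since $\mathbf 1^\top w=0$, we have $Hw=w$, so $w^\top K_c(r)w=w^\top K(r)w=\langle K_c(r),ww^\top\rangle_F$. Cauchy--Schwarz in the Frobenius inner product gives
\[
\langle K_c(r),ww^\top\rangle_F\le\|K_c(r)\|_F\,\|ww^\top\|_F=\|K_c(r)\|_F\,\|w\|_2^2 .
\]
Dividing by $\|K_c(r)\|_F>0$ yields the bound. When $\|K_c(r)\|_F=0$, the convention gives $0\le\|w\|_2^2$.

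\emph{Part 2.} Write $c_0=\kappa(0)$ and $c_1=\kappa(b-a)$, where $0<b-a\le 2B$. Strict monotonicity of $\kappa$ gives $c_0>c_1$. With $e=\mathbf 1_{R_j}$, the Gram matrix is block constant:
\[
K=c_1\mathbf 1\mathbf 1^\top+(c_0-c_1)\bigl[ee^\top+(\mathbf 1-e)(\mathbf 1-e)^\top\bigr].
\]
Applying $H$ on both sides kills every term containing $\mathbf 1$. Since $H(\mathbf 1-e)=-g_j$, this leaves
\[
K_c=2(c_0-c_1)\,g_jg_j^\top .
\]
This is exactly the computation in Lemma~\ref{app:label_rank1} with $y=e$, rescaled by $c_0-c_1$. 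Note that $g_j\neq0$ because $1\le j\le n-1$, so $\|K_c\|_F=2(c_0-c_1)\|g_j\|_2^2>0$. Therefore
\[
\bar\Delta_w(r^{(j)})=\frac{(w^\top g_j)^2}{\|g_j\|_2^2}=\|w\|_2^2\rho_j^2 .
\]
The factor $c_0-c_1$ cancels, which gives independence from $|a-b|$.

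\emph{Part 3.} Fix the maximizing split $j^\star$. Part 2 gives $\bar\Delta_w(r^{(j^\star)})=\|w\|_2^2\rho_\star^2$. Part 1 gives $\sup_u\bar\Delta_w(u)\le\|w\|_2^2$, so
\[
\bar\Delta_w(r^{(j^\star)})=\|w\|_2^2\rho_\star^2\ge\rho_\star^2\sup_u\bar\Delta_w(u).
\]
One check is needed here: the threshold score lies in $\mathcal R_B^\uparrow$, which holds for any choice $-B\le a<b\le B$. Such a choice exists because $B>0$.

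\emph{Part 4.} By Part 2, a threshold on split $j$ attains $\|w\|_2^2$ if and only if $\rho_j^2=1$. Equality in Cauchy--Schwarz for vectors says $\rho_j^2=1$ exactly when $w$ is proportional to $g_j$. Because every $\rho_j^2\le1$, some $j$ has $\rho_j^2=1$ if and only if $\rho_\star^2=1$. This gives the stated equivalence in both directions.

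I expect the main obstacle to be the clean centering computation in Part 2, namely getting the exact factor $2$ and the sign $H(\mathbf 1-e)=-g_j$. Reducing it to Lemma~\ref{app:label_rank1} handles both. The remaining parts are direct Cauchy--Schwarz bookkeeping, plus care with the zero-denominator convention.
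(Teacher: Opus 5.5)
Your proposal is correct and follows essentially the same route as the paper's proof. It uses Frobenius Cauchy--Schwarz against $ww^\top$ for the global bound, the block-constant decomposition of $K(r^{(j)})$ whose centering gives $K_c=2(c_0-c_1)g_jg_j^\top$, and the Cauchy--Schwarz equality case for item 4. Three small points you make explicit that the paper leaves implicit: the centering step is Lemma~\ref{app:label_rank1} applied to $y=\mathbf 1_{R_j}$ and scaled by $c_0-c_1$, $\|K_c(r^{(j)})\|_F>0$, and $0<b-a\le 2B$.
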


\begin{proof}
Since \(\mathbf 1^\top w=0\), we have \(Hw=w\). Therefore
\[
w^\top K_c(r)w
=
w^\top HK(r)Hw
=
w^\top K(r)w.
\]
Also,
\[
w^\top K_c(r)w
=
\langle K_c(r),ww^\top\rangle_F.
\]
Hence
\[
\bar\Delta_w(r)
=
\frac{\langle K_c(r),ww^\top\rangle_F}{\|K_c(r)\|_F}.
\]

If \(\|K_c(r)\|_F=0\), then \(\bar\Delta_w(r)=0\) by convention, so the bound is immediate. Otherwise, by Cauchy--Schwarz in the Frobenius inner product,
\[
\bar\Delta_w(r)
\le
\frac{\|K_c(r)\|_F\|ww^\top\|_F}{\|K_c(r)\|_F}
=
\|ww^\top\|_F
=
\|w\|_2^2,
\]
which proves the global upper bound.

Now fix a nontrivial two-level threshold score \(r^{(j)}\), with value \(a\) on \(L_j\) and value \(b\) on \(R_j\), where \(a<b\). Let
\[
c_0:=\kappa(0),
\qquad
c_1:=\kappa(|a-b|).
\]
Since \(\kappa\) is strictly decreasing and \(a<b\), we have \(c_0-c_1>0\). The score Gram matrix is block-constant and can be written as
\[
K(r^{(j)})
=
c_1\mathbf 1\mathbf 1^\top
+
(c_0-c_1)
\left(
\mathbf 1_{L_j}\mathbf 1_{L_j}^\top
+
\mathbf 1_{R_j}\mathbf 1_{R_j}^\top
\right).
\]
Centering removes the first term because \(H\mathbf 1=0\), so
\[
K_c(r^{(j)})
=
(c_0-c_1)
H
\left(
\mathbf 1_{L_j}\mathbf 1_{L_j}^\top
+
\mathbf 1_{R_j}\mathbf 1_{R_j}^\top
\right)
H.
\]
Since
\[
\mathbf 1_{L_j}=\mathbf 1-\mathbf 1_{R_j},
\]
we have
\[
H\mathbf 1_{L_j}
=
H\mathbf 1-H\mathbf 1_{R_j}
=
-g_j,
\qquad
H\mathbf 1_{R_j}=g_j.
\]
Therefore
\[
H
\left(
\mathbf 1_{L_j}\mathbf 1_{L_j}^\top
+
\mathbf 1_{R_j}\mathbf 1_{R_j}^\top
\right)
H
=
(H\mathbf 1_{L_j})(H\mathbf 1_{L_j})^\top
+
(H\mathbf 1_{R_j})(H\mathbf 1_{R_j})^\top
\]
\[
=
(-g_j)(-g_j)^\top
+
g_jg_j^\top
=
2g_jg_j^\top.
\]
Thus
\[
K_c(r^{(j)})
=
2(c_0-c_1)g_jg_j^\top.
\]
Using \(\|g_jg_j^\top\|_F=\|g_j\|_2^2\), we get
\[
\|K_c(r^{(j)})\|_F
=
2(c_0-c_1)\|g_j\|_2^2.
\]
Therefore
\[
\bar\Delta_w(r^{(j)})
=
\frac{
\left\langle
2(c_0-c_1)g_jg_j^\top,
ww^\top
\right\rangle_F
}{
2(c_0-c_1)\|g_j\|_2^2
}.
\]
Since
\[
\langle g_jg_j^\top,ww^\top\rangle_F=(w^\top g_j)^2,
\]
we obtain
\[
\bar\Delta_w(r^{(j)})
=
\frac{(w^\top g_j)^2}{\|g_j\|_2^2}
=
\|w\|_2^2
\left(
\frac{w^\top g_j}{\|w\|_2\|g_j\|_2}
\right)^2
=
\|w\|_2^2\rho_j^2.
\]
This proves the exact threshold value and shows that \(|a-b|\) cancels.

The best threshold statement follows by choosing \(j^\star\in\arg\max_j\rho_j^2\). Since the global upper bound is \(\|w\|_2^2\), the value
\[
\|w\|_2^2\rho_\star^2
\]
is a \(\rho_\star^2\)-fraction of the global upper bound.

Finally, for a nontrivial two-level threshold,
\[
K_c(r^{(j)})=2(c_0-c_1)g_jg_j^\top,
\]
with \(2(c_0-c_1)>0\). Hence this threshold attains the global upper
bound \(\|w\|_2^2\) if and only if
\[
\|w\|_2^2\rho_j^2=\|w\|_2^2,
\]
equivalently \(\rho_j^2=1\). This is equivalent to equality in the
Cauchy--Schwarz inequality between \(w\) and \(g_j\), hence to
\(w\propto g_j\). Maximizing over \(j\) gives the condition
\(\rho_\star^2=1\).
\end{proof}

\textbf{Unfloored vs.\ \(\varepsilon_0\)-floored normalization.}
Lemma~\ref{app:nhsic_global_bound_near_threshold} characterizes the exact geometry of the unfloored ratio
\[
\bar\Delta_w(r)=\frac{w^\top K_c(r)w}{\|K_c(r)\|_F}.
\]
The Rank-one Retention theorem later introduces an
\(\varepsilon_0\)-floored denominator to ensure continuity of the optimization
objective on \(\mathcal R_B^\uparrow\). Whenever
\(\|K_c(r)\|_F\ge\varepsilon_0\), the unfloored and floored objectives coincide.

\textbf{Canonical representative for two-level thresholds.}
By Lemma~\ref{app:nhsic_global_bound_near_threshold}(2), the value
\[
\bar\Delta_w(r^{(j)})
\]
is independent of the separation \(|a-b|\). Applying the lemma with
\(w=v\), where \(v\) is the leading right singular vector introduced in the
rank-one projection, we therefore fix the canonical two-level threshold levels
\[
a=-B,
\qquad
b=B.
\]
For this representative,
\[
\|K_c(r^{(j)})\|_F
=
2\bigl(\kappa(0)-\kappa(2B)\bigr)
\frac{j(n-j)}{n}
\ge
2\bigl(\kappa(0)-\kappa(2B)\bigr)
\frac{n-1}{n}
=: \varepsilon_\star.
\]
Hence, whenever
\[
\varepsilon_0\le\varepsilon_\star,
\]
every canonical two-level threshold score satisfies
\[
J_{1,\varepsilon_0}(r^{(j)})
=
J_1(r^{(j)})
=
\sigma_1^2\rho_j^2,
\]
where \(\rho_j=\frac{v^\top g_j}{\|v\|_2\|g_j\|_2}\).
\paragraph{Set up the multi-task finite-sample problem (nHSIC form).}
Fix \(n\) admissions with strictly ordered latent severities
\[
z_1<z_2<\cdots<z_n.
\]
For each task \(t\in[T]:=\{1,\dots,T\}\), let
\[
y^{(t)}\in\{0,1\}^n
\]
denote the observed binary labels on these \(n\) admissions. Assume each included task is nonconstant, equivalently
\[
\|Hy^{(t)}\|_2>0.
\]
Let \(B>0\) be fixed, and define
\[
\mathcal R_B^\uparrow
:=
\{r\in[-B,B]^n:r_1\le\cdots\le r_n\}.
\]
Write \(r=(r_1,\dots,r_n)^\top\). Let \(k(x,x')=\kappa(|x-x'|)\) be a bounded continuous positive semidefinite radial kernel, where
\(\kappa\) is strictly decreasing on \([0,2B]\).
Define
\[
K_{ij}(r):=\kappa(|r_i-r_j|),
\qquad
K_c(r):=HK(r)H,
\qquad
H:=I-\frac1n\mathbf 1\mathbf 1^\top.
\]

\paragraph{Single-task nHSIC objective.}
For each task \(t\), let \(L_t\) be the delta label Gram matrix
\[
(L_t)_{ij}:=\mathbb I\{y_i^{(t)}=y_j^{(t)}\},
\]
and define its centered version
\[
L_{t,c}:=HL_tH.
\]
The finite-sample normalized HSIC objective is
\[
\mathrm{nHSIC}\!\left(r,y^{(t)}\right)
:=
\frac{\langle K_c(r),L_{t,c}\rangle_F}
{\|K_c(r)\|_F\|L_{t,c}\|_F}.
\]
We set \(\mathrm{nHSIC}(r,y^{(t)})=0\) whenever either denominator factor is zero. By Lemma~\ref{app:label_rank1}, for binary labels,
\[
L_{t,c}=2\ell^{(t)}\ell^{(t)\top},
\qquad
\ell^{(t)}:=Hy^{(t)},
\]
and
\[
\|L_{t,c}\|_F=2\|\ell^{(t)}\|_2^2,
\qquad
\langle K_c(r),L_{t,c}\rangle_F
=
2\ell^{(t)\top}K_c(r)\ell^{(t)}.
\]
Therefore, when \(\|K_c(r)\|_F>0\),
\[
\mathrm{nHSIC}\!\left(r,y^{(t)}\right)
=
\frac{\ell^{(t)\top}K_c(r)\ell^{(t)}}
{\|K_c(r)\|_F\|\ell^{(t)}\|_2^2}.
\]
When \(\|K_c(r)\|_F=0\), we keep the convention \(\mathrm{nHSIC}(r,y^{(t)})=0\).

\paragraph{Multi-task weighted-sum nHSIC objective.}
Let \(\alpha_t>0\) be fixed task weights and define
\[
J(r):=\sum_{t=1}^T \alpha_t\,\mathrm{nHSIC}\!\left(r,y^{(t)}\right).
\]
Using the previous display, on the event \(\|K_c(r)\|_F>0\),
\[
J(r)
=
\frac{1}{\|K_c(r)\|_F}
\sum_{t=1}^T
\frac{\alpha_t}{\|\ell^{(t)}\|_2^2}
\ell^{(t)\top}K_c(r)\ell^{(t)}.
\]
If \(\|K_c(r)\|_F=0\), then \(J(r)=0\) by convention.

\paragraph{Absorbing label norms into weights.}
Define
\[
\beta_t:=\frac{\alpha_t}{\|\ell^{(t)}\|_2^2},
\qquad
\widetilde w^{(t)}:=\sqrt{\beta_t}\,\ell^{(t)}.
\]
Let \(\widetilde W\in\mathbb R^{T\times n}\) have rows
\[
\widetilde W_{t,:}=\widetilde w^{(t)\top}
=
\sqrt{\beta_t}\,\ell^{(t)\top}.
\]
Then
\[
\widetilde W^\top\widetilde W
=
\sum_{t=1}^T
\beta_t\,\ell^{(t)}\ell^{(t)\top}.
\]
Hence, when \(\|K_c(r)\|_F>0\),
\[
J(r)
=
\frac{\langle K_c(r),\widetilde W^\top\widetilde W\rangle_F}
{\|K_c(r)\|_F}.
\]
When \(\|K_c(r)\|_F=0\), we keep the convention \(J(r)=0\).

\paragraph{Centering of task weight vectors.}
Since \(\ell^{(t)}=Hy^{(t)}\), we have \(\mathbf 1^\top\ell^{(t)}=0\). Therefore
\[
\mathbf 1^\top\widetilde w^{(t)}
=
\sqrt{\beta_t}\,\mathbf 1^\top\ell^{(t)}
=
0
\]
for all \(t\). Equivalently,
\[
\widetilde W\mathbf 1=0.
\]

\paragraph{Rank-one projected objective.}
Let \(\widetilde W_1\) be the best rank-one approximation to \(\widetilde W\) in Frobenius norm:
\[
\widetilde W_1:=\sigma_1uv^\top,
\qquad
\sigma_1\ge0,
\qquad
\|u\|_2=\|v\|_2=1.
\]
Since \(\widetilde W\mathbf 1=0\), we have
\[
(\widetilde W^\top\widetilde W)\mathbf 1=0.
\]
If \(\sigma_1>0\), then \(v\) is an eigenvector of \(\widetilde W^\top\widetilde W\) with eigenvalue \(\sigma_1^2\), so
\[
v\perp\mathbf 1,
\qquad
Hv=v.
\]
If \(\sigma_1=0\), then \(\widetilde W_1=0\) and the projected objective below is identically zero.

Define
\[
J_1(r)
:=
\frac{\langle K_c(r),\widetilde W_1^\top\widetilde W_1\rangle_F}
{\|K_c(r)\|_F},
\]
with \(J_1(r)=0\) whenever \(\|K_c(r)\|_F=0\).

\begin{lemma}[Reduction under rank-one projection]
\label{app:rank1-projection-nhsic}
For every \(r\in\mathcal R_B^\uparrow\) with \(\|K_c(r)\|_F>0\),
\[
J_1(r)
=
\sigma_1^2\bar\Delta_v(r),
\qquad
\bar\Delta_v(r):=
\frac{v^\top K_c(r)v}{\|K_c(r)\|_F}.
\]
If \(\sigma_1>0\), then
\[
\arg\max_{r\in\mathcal R_B^\uparrow:\,\|K_c(r)\|_F>0}J_1(r)
=
\arg\max_{r\in\mathcal R_B^\uparrow:\,\|K_c(r)\|_F>0}\bar\Delta_v(r).
\]
If \(\sigma_1=0\), then \(J_1\equiv0\).
\end{lemma}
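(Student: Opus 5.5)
The plan is a direct substitution argument, mirroring the sketch already given for Lemma~\ref{lem:rank1_projection} in the main text, but written out with the zero-singular-value case handled separately.

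\emph{Step 1: compute the Gram matrix of the rank-one approximation.} Start from \(\widetilde W_1=\sigma_1uv^\top\) with \(\|u\|_2=1\). This gives
\[
\widetilde W_1^\top\widetilde W_1=\sigma_1^2\,v\,(u^\top u)\,v^\top=\sigma_1^2vv^\top .
\]

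\emph{Step 2: substitute into \(J_1\).} Fix \(r\in\mathcal R_B^\uparrow\) with \(\|K_c(r)\|_F>0\). Plug the expression from Step 1 into the definition of \(J_1(r)\), use bilinearity of the Frobenius inner product, and apply the identity \(\langle A,vv^\top\rangle_F=\mathrm{tr}(A vv^\top)=v^\top Av\) for symmetric \(A=K_c(r)\). This yields
\[
J_1(r)=\sigma_1^2\,\frac{v^\top K_c(r)v}{\|K_c(r)\|_F}=\sigma_1^2\bar\Delta_v(r).
\]

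\emph{Step 3: transfer the argmax.} Suppose \(\sigma_1>0\). On the set \(\{r\in\mathcal R_B^\uparrow:\|K_c(r)\|_F>0\}\), the function \(J_1\) equals \(\bar\Delta_v\) multiplied by the fixed positive constant \(\sigma_1^2\), which does not depend on \(r\). Multiplying a function by a positive constant preserves the ordering of its values. Hence \(J_1(r)\ge J_1(r')\) if and only if \(\bar\Delta_v(r)\ge\bar\Delta_v(r')\), so the two argmax sets coincide, both as sets and including the case where either is empty.

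\emph{Step 4: the case \(\sigma_1=0\).} Here \(\widetilde W_1=0\), so \(\widetilde W_1^\top\widetilde W_1=0\). The numerator of \(J_1(r)\) then vanishes whenever \(\|K_c(r)\|_F>0\), and \(J_1(r)=0\) by convention when \(\|K_c(r)\|_F=0\). Therefore \(J_1\equiv0\).

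\emph{Main obstacle.} None of these steps is genuinely difficult; the only point requiring care is bookkeeping. The identity in Step 2 holds only on the set where \(\|K_c(r)\|_F>0\), and the argmax in Step 3 must be taken over that same restricted set. The zero-norm scores are governed by the separate convention and must not be mixed in. Symmetry of \(K_c(r)\), which follows from (A4), justifies the trace identity used in Step 2.
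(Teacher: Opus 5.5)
Your proposal is correct and follows the paper's proof essentially step for step: compute \(\widetilde W_1^\top\widetilde W_1=\sigma_1^2vv^\top\), substitute into \(J_1\), transfer the argmax through the positive constant \(\sigma_1^2\), and observe that \(J_1\equiv0\) when \(\sigma_1=0\). One minor aside: the identity \(\langle A,vv^\top\rangle_F=v^\top Av\) holds for any real square \(A\), so you do not actually need the symmetry of \(K_c(r)\) in Step 2.
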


\begin{proof}
Since \(\widetilde W_1=\sigma_1uv^\top\) with \(\|u\|_2=1\),
\[
\widetilde W_1^\top\widetilde W_1
=
\sigma_1^2vv^\top.
\]
Substituting into \(J_1(r)\) gives
\[
J_1(r)
=
\sigma_1^2
\frac{\langle K_c(r),vv^\top\rangle_F}{\|K_c(r)\|_F}
=
\sigma_1^2
\frac{v^\top K_c(r)v}{\|K_c(r)\|_F}
=
\sigma_1^2\bar\Delta_v(r).
\]
The maximizer statement follows because multiplication by the positive constant \(\sigma_1^2\) does not change the argmax. If \(\sigma_1=0\), then \(\widetilde W_1^\top\widetilde W_1=0\), so \(J_1\equiv0\).
\end{proof}

\paragraph{Threshold implication for the rank-one direction.}
Assume \(\sigma_1>0\). Since \(Hv=v\), Lemma~\ref{app:nhsic_global_bound_near_threshold} applies with \(w=v\). For each split
\[
L_j:=\{1,\dots,j\},
\qquad
R_j:=\{j+1,\dots,n\},
\]
define
\[
g_j:=H\mathbf 1_{R_j},
\qquad
\rho_j:=
\frac{v^\top g_j}{\|v\|_2\|g_j\|_2}.
\]
Then every nontrivial monotone two-level threshold on split \(j\) satisfies
\[
\bar\Delta_v(r^{(j)})
=
\|v\|_2^2\rho_j^2.
\]
Thus the best threshold split is
\[
j^\star\in\arg\max_{1\le j\le n-1}\rho_j^2,
\]
and it achieves a fraction
\[
\rho_\star^2:=\max_{1\le j\le n-1}\rho_j^2
\]
of the global upper bound for the rank-one direction objective. Since \(J_1=\sigma_1^2\bar\Delta_v\), the same split also maximizes \(J_1\) over two-level thresholds.

\paragraph{Rank-one retention for the floored objective.}
Write the full singular value decomposition as
\[
\widetilde W
=
\sum_{k\ge1}\sigma_k u_kv_k^\top,
\qquad
\sigma_1\ge\sigma_2\ge\cdots\ge0,
\]
with orthonormal singular-vector families \(\{u_k\}\) and \(\{v_k\}\).
Then
\[
\widetilde W^\top\widetilde W
=
\sum_{k\ge1}\sigma_k^2v_kv_k^\top,
\qquad
\widetilde W_1^\top\widetilde W_1
=
\sigma_1^2vv^\top,
\]
where \(v=v_1\).

Fix \(\varepsilon_0>0\) and define
\[
J_{\varepsilon_0}(r)
:=
\frac{\langle K_c(r),\widetilde W^\top\widetilde W\rangle_F}
{\max\{\|K_c(r)\|_F,\varepsilon_0\}},
\qquad
J_{1,\varepsilon_0}(r)
:=
\frac{\langle K_c(r),\widetilde W_1^\top\widetilde W_1\rangle_F}
{\max\{\|K_c(r)\|_F,\varepsilon_0\}}.
\]

\begin{theorem}[Rank-one Retention]
\label{app:rank1-retention}
For every \(r\in\mathcal R_B^\uparrow\),
\[
J_{\varepsilon_0}(r)
\ge
J_{1,\varepsilon_0}(r)
\ge0.
\]
For any \(r\) with \(J_{\varepsilon_0}(r)>0\), define the rank-one retention
\[
\gamma(r)
:=
\frac{J_{1,\varepsilon_0}(r)}
{J_{\varepsilon_0}(r)}
=
\frac{
\sigma_1^2\,v^\top K_c(r)v
}{
\sum_{k\ge1}\sigma_k^2\,v_k^\top K_c(r)v_k
}
\in[0,1].
\]
This ratio is independent of \(\varepsilon_0\). If
\[
r_1^\star
\in
\arg\max_{r\in\mathcal R_B^\uparrow}
J_{1,\varepsilon_0}(r),
\]
then, for every \(r\in\mathcal R_B^\uparrow\) with
\(J_{\varepsilon_0}(r)>0\),
\[
J_{\varepsilon_0}(r_1^\star)
\ge
\gamma(r)\,J_{\varepsilon_0}(r).
\]

In particular, if
\[
r^\star
\in
\arg\max_{r\in\mathcal R_B^\uparrow}
J_{\varepsilon_0}(r)
\]
and \(J_{\varepsilon_0}(r^\star)>0\), then
\[
J_{\varepsilon_0}(r_1^\star)
\ge
\gamma(r^\star)
\sup_{r\in\mathcal R_B^\uparrow}
J_{\varepsilon_0}(r).
\]
\end{theorem}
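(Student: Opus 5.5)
The argument is a matrix decomposition followed by a PSD inner-product inequality; no optimization-specific machinery is needed.

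\textbf{Step 1: spectral split.} Use the full singular value decomposition of \(\widetilde W\) to write
\[
\widetilde W^\top\widetilde W=\sum_{k\ge1}\sigma_k^2 v_kv_k^\top,
\qquad
\widetilde W^\top\widetilde W-\widetilde W_1^\top\widetilde W_1=\sum_{k\ge2}\sigma_k^2v_kv_k^\top .
\]
The residual on the right is positive semidefinite.

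\textbf{Step 2: the sandwich \(J_{\varepsilon_0}\ge J_{1,\varepsilon_0}\ge0\).} By (A4), \(K_c(r)=HK(r)H\succeq0\) for every \(r\in\mathcal R_B^\uparrow\). The Frobenius inner product of two PSD matrices is nonnegative, since \(\langle A,C\rangle_F=\mathrm{tr}(A^{1/2}CA^{1/2})\ge0\). Hence:
\begin{itemize}
\item \(\langle K_c(r),\sigma_1^2vv^\top\rangle_F=\sigma_1^2v^\top K_c(r)v\ge0\);
\item the inner product of \(K_c(r)\) with the residual, \(\sum_{k\ge2}\sigma_k^2v_k^\top K_c(r)v_k\), is also \(\ge0\).
\end{itemize}
The denominator \(\max\{\|K_c(r)\|_F,\varepsilon_0\}\ge\varepsilon_0>0\) is strictly positive and common to both objectives. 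This is where the floor matters: it removes the zero-norm convention issues.

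\textbf{Step 3: the retention ratio.} For \(J_{\varepsilon_0}(r)>0\), write both numerators in the eigenbasis and cancel the common denominator. This gives
\[
\gamma(r)=\frac{\sigma_1^2v^\top K_c(r)v}{\sum_{k\ge1}\sigma_k^2v_k^\top K_c(r)v_k}.
\]
Step 2 shows that \(\gamma(r)\in[0,1]\), since the numerator is one nonnegative summand of the denominator. The cancellation also shows that \(\gamma(r)\) does not depend on \(\varepsilon_0\).

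\textbf{Step 4: retention guarantee.} First, a maximizer \(r_1^\star\) exists. The map \(r\mapsto K_c(r)\) is continuous because \(\kappa\) is continuous, the floored denominator is continuous and bounded away from zero, and \(\mathcal R_B^\uparrow\) is compact. In any case the statement is conditional on \(r_1^\star\) existing. Then chain:
\[
J_{\varepsilon_0}(r_1^\star)\ge J_{1,\varepsilon_0}(r_1^\star)\ge J_{1,\varepsilon_0}(r)=\gamma(r)J_{\varepsilon_0}(r).
\]
The first inequality is Step 2 at \(r_1^\star\). The second is optimality of \(r_1^\star\). The equality is the definition of \(\gamma\).

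For the final claim, take \(r=r^\star\). The same continuity and compactness argument gives \(J_{\varepsilon_0}(r^\star)=\sup_{r\in\mathcal R_B^\uparrow}J_{\varepsilon_0}(r)\).

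\textbf{Main obstacle.} The only point needing care is the PSD inner-product inequality, which rests on \(K_c(r)\succeq0\) from (A4) and the trace identity above. Two bookkeeping points also need checking:
\begin{itemize}
\item The decomposition \(\widetilde W_1^\top\widetilde W_1=\sigma_1^2v_1v_1^\top\) must use the same \(v_1\) as the full SVD. If \(\sigma_1\) is repeated, fix one SVD and take \(\widetilde W_1\) from it.
\item The case \(\sigma_1=0\) is trivial, since then \(\widetilde W=0\) and all quantities vanish.
\end{itemize}
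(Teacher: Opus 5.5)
Your proposal is correct and follows essentially the same route as the paper. Both arguments pair the PSD residual \(\sum_{k\ge2}\sigma_k^2 v_kv_k^\top\) with \(K_c(r)\succeq0\), cancel the common floored denominator, and use the chain \(J_{\varepsilon_0}(r_1^\star)\ge J_{1,\varepsilon_0}(r_1^\star)\ge J_{1,\varepsilon_0}(r)=\gamma(r)J_{\varepsilon_0}(r)\). Your continuity/compactness argument for the existence of maximizers is a valid extra the paper omits, but \(J_{\varepsilon_0}(r^\star)=\sup_{r}J_{\varepsilon_0}(r)\) follows from \(r^\star\) being an argmax by definition, not from compactness.
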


\begin{proof}
Using the singular value decomposition,
\[
\widetilde W^\top\widetilde W
-
\widetilde W_1^\top\widetilde W_1
=
\sum_{k\ge2}\sigma_k^2v_kv_k^\top
\succeq0.
\]
Since \(K_c(r)\succeq0\), we have
\[
\left\langle
K_c(r),
\widetilde W^\top\widetilde W
-
\widetilde W_1^\top\widetilde W_1
\right\rangle_F
=
\operatorname{tr}\!\left(
K_c(r)
\left[
\widetilde W^\top\widetilde W
-
\widetilde W_1^\top\widetilde W_1
\right]
\right)
\ge0.
\]
Because
\(
\max\{\|K_c(r)\|_F,\varepsilon_0\}>0
\),
it follows that
\[
J_{\varepsilon_0}(r)
\ge
J_{1,\varepsilon_0}(r).
\]
Moreover,
\[
J_{1,\varepsilon_0}(r)
=
\frac{
\sigma_1^2\,v^\top K_c(r)v
}{
\max\{\|K_c(r)\|_F,\varepsilon_0\}
}
\ge0,
\]
since \(K_c(r)\succeq0\). Hence
\[
J_{\varepsilon_0}(r)
\ge
J_{1,\varepsilon_0}(r)
\ge0.
\]

\[
J_{\varepsilon_0}(r)
=
\frac{
\sum_{k\ge1}
\sigma_k^2\,v_k^\top K_c(r)v_k
}{
\max\{\|K_c(r)\|_F,\varepsilon_0\}
}.
\]
Each term
\(
v_k^\top K_c(r)v_k
\)
is nonnegative because \(K_c(r)\succeq0\). Therefore, whenever
\(J_{\varepsilon_0}(r)>0\),
\[
\gamma(r)
=
\frac{
\sigma_1^2\,v^\top K_c(r)v
}{
\sum_{k\ge1}
\sigma_k^2\,v_k^\top K_c(r)v_k
}
\in[0,1].
\]
The common denominator
\(
\max\{\|K_c(r)\|_F,\varepsilon_0\}
\)
cancels, so \(\gamma(r)\) is independent of \(\varepsilon_0\).

Finally, let
\[
r_1^\star
\in
\arg\max_{r\in\mathcal R_B^\uparrow}
J_{1,\varepsilon_0}(r).
\]
For any \(r\in\mathcal R_B^\uparrow\) with
\(J_{\varepsilon_0}(r)>0\),
\[
\begin{aligned}
J_{\varepsilon_0}(r_1^\star)
&\ge
J_{1,\varepsilon_0}(r_1^\star)\\
&\ge
J_{1,\varepsilon_0}(r)\\
&=
\gamma(r)\,J_{\varepsilon_0}(r).
\end{aligned}
\]
Taking \(r=r^\star\), where
\[
r^\star
\in
\arg\max_{r\in\mathcal R_B^\uparrow}
J_{\varepsilon_0}(r)
\]
and \(J_{\varepsilon_0}(r^\star)>0\), gives
\[
J_{\varepsilon_0}(r_1^\star)
\ge
\gamma(r^\star)
J_{\varepsilon_0}(r^\star)
=
\gamma(r^\star)
\sup_{r\in\mathcal R_B^\uparrow}
J_{\varepsilon_0}(r).
\]
\end{proof}
\begin{lemma}[Approximate noise-averaging for normalized nHSIC]
\label{app:noise_nhsic_bridge}

Condition on a fixed task \(t\) and the fixed data \((r,y^{(t)})\). Let
\[
s_i=r_i+\eta_i
\]
with i.i.d.\ noise. Let
\[
\bar K_c
:=
\mathbb E_\eta[K_c(s)]
\]
and
\[
\overline{\mathrm{nHSIC}}\!\left(r,y^{(t)}\right)
:=
\frac{\langle \bar K_c,L_{t,c}\rangle_F}
{\|\bar K_c\|_F\|L_{t,c}\|_F}.
\]
Assume \(\|L_{t,c}\|_F>0\) and \(\|\bar K_c\|_F\ge \tau>0\). On any event where
\[
\|K_c(s)-\bar K_c\|_F\le \tau/2,
\]
we have both \(\|K_c(s)\|_F\ge \tau/2\) and
\[
\left|
\mathrm{nHSIC}\!\left(s,y^{(t)}\right)
-
\overline{\mathrm{nHSIC}}\!\left(r,y^{(t)}\right)
\right|
\le
\frac{4}{\tau}\|K_c(s)-\bar K_c\|_F.
\]
\end{lemma}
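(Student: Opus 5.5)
The plan is to treat this as a perturbation bound for a ratio of the form \(\langle A, M\rangle_F/\|A\|_F\), where \(M:=L_{t,c}/\|L_{t,c}\|_F\) has unit Frobenius norm. Write \(A:=K_c(s)\), \(\bar A:=\bar K_c\) and \(E:=A-\bar A\), and work on the event \(\|E\|_F\le\tau/2\).

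\textbf{Step 1: lower bound on the random norm.} By the reverse triangle inequality,
\[
\|A\|_F\ge\|\bar A\|_F-\|E\|_F\ge\tau-\tau/2=\tau/2>0.
\]
This gives the first claim. It also shows that the convention setting nHSIC to \(0\) is never triggered on this event, so \(\mathrm{nHSIC}(s,y^{(t)})=\langle A,M\rangle_F/\|A\|_F\) holds exactly.

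\textbf{Step 2: decomposition of the difference.} Split the difference by adding and subtracting \(\langle \bar A,M\rangle_F/\|A\|_F\):
\[
\frac{\langle A,M\rangle_F}{\|A\|_F}-\frac{\langle \bar A,M\rangle_F}{\|\bar A\|_F}
=\frac{\langle E,M\rangle_F}{\|A\|_F}
+\langle \bar A,M\rangle_F\Bigl(\frac{1}{\|A\|_F}-\frac{1}{\|\bar A\|_F}\Bigr).
\]

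\textbf{Step 3: bounding the two terms.} For the first term, Cauchy--Schwarz with \(\|M\|_F=1\) and Step 1 give a bound of \(\|E\|_F/(\tau/2)=2\|E\|_F/\tau\).

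For the second term, Cauchy--Schwarz gives \(|\langle\bar A,M\rangle_F|\le\|\bar A\|_F\). Combining this with
\[
\Bigl|\tfrac{1}{\|A\|_F}-\tfrac{1}{\|\bar A\|_F}\Bigr|=\frac{\bigl|\|\bar A\|_F-\|A\|_F\bigr|}{\|A\|_F\|\bar A\|_F}\le\frac{\|E\|_F}{\|A\|_F\|\bar A\|_F},
\]
the factor \(\|\bar A\|_F\) cancels. The second term is therefore at most \(\|E\|_F/\|A\|_F\le 2\|E\|_F/\tau\).

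\textbf{Step 4: conclusion.} Summing the two bounds gives \(4\|E\|_F/\tau\), which is the claimed inequality.

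The only mildly delicate point is in Step 2: choosing the decomposition so that the normalization by \(\|\bar A\|_F\) cancels cleanly, rather than leaving a factor like \(\|\bar A\|_F/\tau\). Pairing the \(\langle\bar A,M\rangle_F\) term with the reciprocal difference achieves this. No properties of the noise beyond the event are needed, and \(\|L_{t,c}\|_F>0\) only ensures that \(M\) is well defined.
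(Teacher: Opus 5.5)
Your proof is correct and follows essentially the same route as the paper. Both arguments use the reverse triangle inequality to get \(\|K_c(s)\|_F\ge\tau/2\), then the same add-and-subtract split \(\langle K-K',L_{t,c}\rangle_F/\|K\|_F+\langle K',L_{t,c}\rangle_F\bigl(1/\|K\|_F-1/\|K'\|_F\bigr)\) with \(K=K_c(s)\), \(K'=\bar K_c\), bounding each term by \(2\|K_c(s)-\bar K_c\|_F/\tau\); the only cosmetic difference is that you normalize \(L_{t,c}\) to unit norm up front, whereas the paper proves a Lipschitz bound for \(F(K)=\langle K,L_{t,c}\rangle_F/\|K\|_F\) and divides by \(\|L_{t,c}\|_F\) at the end.
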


\begin{proof}
Let
\[
F(K):=\frac{\langle K,L_{t,c}\rangle_F}{\|K\|_F}.
\]
On the event \(\|K_c(s)-\bar K_c\|_F\le \tau/2\), the reverse triangle inequality gives
\[
\|K_c(s)\|_F
\ge
\|\bar K_c\|_F-\|K_c(s)-\bar K_c\|_F
\ge
\tau/2.
\]
Now take any \(K,K'\) with \(\|K\|_F,\|K'\|_F\ge\tau/2\). Then
\[
\begin{aligned}
|F(K)-F(K')|
&=
\left|
\frac{\langle K,L_{t,c}\rangle_F}{\|K\|_F}
-
\frac{\langle K',L_{t,c}\rangle_F}{\|K'\|_F}
\right|\\
&\le
\left|
\frac{\langle K-K',L_{t,c}\rangle_F}{\|K\|_F}
\right|
+
|\langle K',L_{t,c}\rangle_F|
\left|
\frac{1}{\|K\|_F}-\frac{1}{\|K'\|_F}
\right|\\
&\le
\frac{2}{\tau}\|K-K'\|_F\|L_{t,c}\|_F
+
\|L_{t,c}\|_F
\frac{2}{\tau}\|K-K'\|_F\\
&=
\frac{4\|L_{t,c}\|_F}{\tau}\|K-K'\|_F.
\end{aligned}
\]
In the second term, we used
\[
|\langle K',L_{t,c}\rangle_F|
\le
\|K'\|_F\|L_{t,c}\|_F
\]
and
\[
\left|
\frac{1}{\|K\|_F}-\frac{1}{\|K'\|_F}
\right|
=
\frac{\big|\|K'\|_F-\|K\|_F\big|}{\|K\|_F\|K'\|_F}
\le
\frac{\|K-K'\|_F}{\|K\|_F\|K'\|_F}.
\]
Taking \(K=K_c(s)\) and \(K'=\bar K_c\), then dividing by
\(\|L_{t,c}\|_F>0\), converts \(F(K_c(s))\) into
\(\mathrm{nHSIC}(s,y^{(t)})\) and \(F(\bar K_c)\) into
\(\overline{\mathrm{nHSIC}}(r,y^{(t)})\). This proves the result.
\end{proof}
\clearpage
\section{Additional Experiments: Single-Index Baselines and nHSIC Ablations}
\label{app:additionalHSICAblations}

We report additional dependence-based evaluations to further compare MLCI against BCE-trained single-index models and to assess the sensitivity of the proposed nHSIC framework to encoder architecture and score-kernel choice. The BCE-trained models serve as additional predictive baselines: they use the same diagnosis-code inputs and produce scalar scores, but they are optimized with supervised binary cross-entropy objectives rather than the proposed multi-outcome nHSIC objective. In contrast, the nHSIC models in this appendix should be interpreted as ablations of the MLCI framework rather than as external competing baselines, since they share the same goal of learning a single scalar diagnosis-code index by maximizing multi-outcome nHSIC.

All nHSIC ablations use the same two-stage weighting protocol as MLCI. Stage~1 models are trained using their respective outcome-specific nHSIC objectives, while Stage~1 checkpoint selection and task-weight calibration use validation in-hospital mortality as a common anchor.

The default MLCI configuration uses the DeepSets mean$\oplus$max encoder with a single RBF score kernel. This choice is motivated by the structure and intended use of comorbidity scores. Diagnosis codes form unordered sets, so a permutation-invariant DeepSets encoder is appropriate. Mean pooling summarizes distributed comorbidity burden across the diagnosis set, while max pooling can retain strong diagnosis-level feature activations, including signals from rare or high-severity codes. Combining the two gives a compact representation that captures both broad disease burden and salient severe-code signals, while preserving the simplicity and single-score form expected of a comorbidity index.

Multi-RBF kernels and higher-capacity encoder variants can improve selected dependence metrics, showing that the nHSIC framework is extensible. However, these variants introduce additional architectural and kernel-design choices. We therefore use the DeepSets mean$\oplus$max encoder with a single RBF kernel as the default MLCI configuration, and report the remaining nHSIC variants here as architecture and kernel ablations.

All models reported in this section are single-index models: each maps diagnosis-code inputs to one scalar score, and we evaluate the dependence between that scalar score and each clinical outcome. Specifically, we report mutual information (MI) between each model's scalar score and each outcome, as well as distance correlation (dCorr), which captures both linear and nonlinear dependence. All results are computed on the same evaluation splits. Per-outcome MI and dCorr values are computed using the corresponding outcome's valid-label mask.

\FloatBarrier

\begin{table*}[t]
\centering
\tiny
\caption{\textbf{Distance Correlation between each model's unified risk score and clinical outcomes.}
Higher is better. Bolded values indicate the best performer per outcome; underlined values indicate the second-best performer.
\textit{For readability, each column is scaled by the power of 10 shown in the header; divide by that factor to recover the original dCorr.}}
\label{tab:main_appendix_results_dCorr}
\resizebox{\textwidth}{!}{
\begin{tabular}{l|cccc|cccc}
\toprule
\multirow{2}{*}{\textbf{Model}} &
\multicolumn{4}{c}{\textbf{MIMIC-IV}} &
\multicolumn{4}{c}{\textbf{MIMIC-III}} \\
\cmidrule(lr){2-5} \cmidrule(lr){6-9}
 & \textbf{MORT} ($\times 10^{2}$) & \textbf{30M} ($\times 10^{2}$) & \textbf{LOS} ($\times 10^{2}$) & \textbf{ICU} ($\times 10^{2}$)
 & \textbf{MORT} ($\times 10^{2}$) & \textbf{30M} ($\times 10^{2}$) & \textbf{LOS} ($\times 10^{2}$) & \textbf{ICU} ($\times 10^{2}$) \\
\midrule
\multicolumn{9}{l}{\textit{Traditional Clinical Indices}} \\
Charlson (CCI) & 12.59 & 18.44 & 24.55 & 16.09 & 15.53 & 20.26 & 15.26 & 13.07 \\
Elixhauser (ECI) & 19.98 & 23.87 & 33.15 & 25.98 & 21.38 & 24.70 & 23.09 & 13.37 \\
CCI+ECI (scalar) & 17.46 & 22.67 & 30.89 & 22.44 & 19.79 & 24.06 & 20.81 & 14.14 \\
\midrule
\multicolumn{9}{l}{\textit{Classical Machine Learning Baselines }} \\
kNN& 22.40 & 23.98 & 30.60 & 34.79 & 22.56 & 23.22 & 23.55 & 11.15 \\
Multinomial Naive Bayes  & 32.09 & 32.02 & 35.78 & 45.69 & 24.89 & 25.21 & 25.04 & 16.73 \\
Complement Naive Bayes  & 32.08 & 32.05 & 35.85 & 45.58 & 24.69 & 25.02 & 23.89 & 16.76 \\
FusedLogits LR & 34.99 & 34.92 & \underline{51.02} & 57.23 & 31.19 & 31.20 & 46.05 & 18.96 \\
Factorization Machine (FM score) & 36.41 & 35.82 & 50.80 & 56.77 & 31.96 & 31.28 & 45.38 & 20.51 \\
Gradient Boosted Trees (Score/Logit) & 34.54 & 34.58 & 51.00 & 55.61 & 32.28 & 31.88 & 48.16 & 20.17 \\
\midrule
\multicolumn{9}{l}{\textit{Deep Learning: BCE-Trained Baselines}} \\

Deep and Cross Network (DCN) & 28.44 $\pm$ 0.44 & 28.73 $\pm$ 0.57 & 48.11 $\pm$ 0.41 & 51.97 $\pm$ 0.43 & 30.06 $\pm$ 0.62 & 29.24 $\pm$ 0.48 & 47.12 $\pm$ 0.23 & 19.70 $\pm$ 0.81 \\
Deep MLP Bag-of-Codes (EmbeddingBag) & 28.88 $\pm$ 0.21 & 29.13 $\pm$ 0.18 & 48.22 $\pm$ 0.29 & 52.04 $\pm$ 0.18 & 30.17 $\pm$ 0.98 & 29.40 $\pm$ 0.82 & 47.67 $\pm$ 0.72 & 20.18 $\pm$ 0.44 \\
Star-GAT & 26.97 $\pm$ 1.27 & 27.48 $\pm$ 1.23 & 47.19 $\pm$ 0.99 & 50.24 $\pm$ 0.68 & 29.44 $\pm$ 0.97 & 29.01 $\pm$ 0.91 & 48.25 $\pm$ 0.73 & 20.06 $\pm$ 0.76 \\
Pure MIL Attention Pooling & 29.25 $\pm$ 0.62 & 29.53 $\pm$ 0.43 & 48.37 $\pm$ 0.36 & 52.20 $\pm$ 0.63 & 31.11 $\pm$ 0.70 & 30.40 $\pm$ 0.31 & 48.11 $\pm$ 0.43 & 20.41 $\pm$ 0.53 \\

Set Transformer (single-logit)  & 28.50 $\pm$ 0.68 & 29.02 $\pm$ 0.63 & 49.12 $\pm$ 0.37 & 52.04 $\pm$ 0.19 & 30.29 $\pm$ 0.62 & 29.92 $\pm$ 0.74 & 48.08 $\pm$ 0.43 & 20.51 $\pm$ 1.10 \\
DeepSets (sum pool)  & 27.89 $\pm$ 0.61 & 28.17 $\pm$ 0.61 & 47.75 $\pm$ 0.90 & 51.12 $\pm$ 0.75 & 30.72 $\pm$ 0.83 & 29.96 $\pm$ 0.51 & 49.01 $\pm$ 0.11 & 20.58 $\pm$ 0.77 \\
DeepSets (mean pool)   & 27.76 $\pm$ 0.76 & 28.20 $\pm$ 0.67 & 47.59 $\pm$ 0.69 & 50.71 $\pm$ 1.19 & 30.89 $\pm$ 0.49 & 30.06 $\pm$ 0.41 & 49.07 $\pm$ 0.11 & \underline{20.89 $\pm$ 0.75} \\
DeepSets (attention pool)   & 28.08 $\pm$ 0.83 & 28.17 $\pm$ 0.37 & 47.69 $\pm$ 0.56 & 51.20 $\pm$ 1.03 & 30.19 $\pm$ 0.48 & 29.45 $\pm$ 0.19 & 48.72 $\pm$ 0.63 & 20.61 $\pm$ 0.77 \\
DeepSets (gated pool)   & 27.42 $\pm$ 0.31 & 27.83 $\pm$ 0.43 & 46.88 $\pm$ 0.15 & 50.11 $\pm$ 0.21 & 28.88 $\pm$ 0.95 & 28.48 $\pm$ 0.84 & 49.04 $\pm$ 0.66 & 20.86 $\pm$ 0.44 \\
DeepSets (mean $\oplus$ max pool)   & 28.51 $\pm$ 0.33 & 28.84 $\pm$ 0.27 & 48.60 $\pm$ 0.60 & 51.92 $\pm$ 0.41 & 29.29 $\pm$ 0.49 & 29.11 $\pm$ 0.32 & 48.88 $\pm$ 0.43 & \textbf{21.52 $\pm$ 0.26} \\

\midrule
\textbf{Our Model} & 
54.80 $\pm$ 1.40 & 49.42 $\pm$ 1.34 & \textbf{51.15 $\pm$ 0.88} & \textbf{61.97 $\pm$ 0.64} & 39.06 $\pm$ 3.27 & 37.39 $\pm$ 2.63 & \textbf{49.84 $\pm$ 1.38} & 18.55 $\pm$ 0.19\\
\midrule
\multicolumn{9}{l}{\textit{nHSIC Ablations: MLCI Variants }} \\

DeepSets (sum pool) + Single-RBF nHSIC & 53.18 $\pm$ 3.45 & 49.02 $\pm$ 3.53 & 50.06 $\pm$ 1.87 & 61.58 $\pm$ 0.44 & 38.39 $\pm$ 0.99 & 37.28 $\pm$ 0.85 & 46.97 $\pm$ 0.73 & 16.77 $\pm$ 0.97 \\
DeepSets (sum pool) + Multi-RBF nHSIC & 54.45 $\pm$ 2.05 & 50.35 $\pm$ 1.28 & 48.23 $\pm$ 0.79 & 60.90 $\pm$ 0.57 & 38.40 $\pm$ 1.99 & 36.42 $\pm$ 1.81 & 48.70 $\pm$ 1.27 & 18.63 $\pm$ 0.62 \\
DeepSets (mean pool) + Single-RBF nHSIC & 53.92 $\pm$ 0.88 & 49.52 $\pm$ 1.47 & 49.33 $\pm$ 1.17 & \underline{61.73 $\pm$ 0.97} & 37.61 $\pm$ 0.79 & 35.94 $\pm$ 0.69 & 49.25 $\pm$ 0.62 & 18.84 $\pm$ 0.71 \\
DeepSets (mean pool) + Multi-RBF nHSIC & \underline{55.97 $\pm$ 3.42} & \underline{51.75 $\pm$ 2.54} & 47.91 $\pm$ 0.55 & 60.36 $\pm$ 0.93 & 38.59 $\pm$ 0.91 & 36.67 $\pm$ 1.02 & 48.62 $\pm$ 0.91 & 18.70 $\pm$ 0.26 \\
DeepSets (attention pool) + Single-RBF nHSIC  & 55.05 $\pm$ 1.40 & 50.52 $\pm$ 2.10 & 48.66 $\pm$ 1.47 & 61.41 $\pm$ 1.75 & 36.07 $\pm$ 1.17 & 34.70 $\pm$ 0.84 & \underline{49.74 $\pm$ 0.46} & 19.38 $\pm$ 0.53 \\
DeepSets (attention pool) + Multi-RBF nHSIC  & 54.89 $\pm$ 1.42 & 51.13 $\pm$ 1.17 & 47.98 $\pm$ 0.41 & 60.68 $\pm$ 0.11 & 39.47 $\pm$ 1.77 & 37.35 $\pm$ 1.85 & 47.90 $\pm$ 1.09 & 18.11 $\pm$ 1.21 \\
DeepSets (gated pool) + Single-RBF nHSIC  & 45.62 $\pm$ 17.28 & 42.10 $\pm$ 16.21 & 39.64 $\pm$ 13.97 & 50.80 $\pm$ 18.35 & 19.16 $\pm$ 18.20 & 18.75 $\pm$ 16.19 & 27.89 $\pm$ 18.74 & 11.44 $\pm$ 6.81 \\
DeepSets (gated pool) + Multi-RBF nHSIC & 55.78 $\pm$ 1.09 & 51.53 $\pm$ 0.50 & 47.66 $\pm$ 0.44 & 60.91 $\pm$ 0.53 & 29.02 $\pm$ 15.82 & 27.91 $\pm$ 14.09 & 37.70 $\pm$ 19.85 & 14.11 $\pm$ 8.48 \\
Set Transformer + Single-RBF nHSIC& 53.42 $\pm$ 1.30 & 48.21 $\pm$ 1.40 & 50.08 $\pm$ 0.61 & 61.49 $\pm$ 0.53 & \underline{42.14 $\pm$ 3.36} & \underline{39.25 $\pm$ 2.50} & 45.54 $\pm$ 1.88 & 17.07 $\pm$ 1.30 \\
Set Transformer + Multi-RBF nHSIC & \textbf{60.73 $\pm$ 0.99} & \textbf{56.09 $\pm$ 0.92} & 44.74 $\pm$ 0.22 & 58.46 $\pm$ 0.85 & \textbf{52.45 $\pm$ 0.83} & \textbf{47.89 $\pm$ 0.92} & 31.57 $\pm$ 2.44 & 8.92 $\pm$ 1.20 \\
\midrule

\end{tabular}
}
\end{table*}
\FloatBarrier
\begin{table*}[t]
\centering
\tiny
\caption{\textbf{Mutual Information between each model's unified risk score and clinical outcomes.}
Higher is better. Bolded values highlight the best performer per outcome; underlined values highlight the second-best performer.
\textit{For readability, each column is scaled by the power of 10 shown in the header; divide by that factor to recover the original MI.}}
\label{tab:main_appendix_results_MI}
\resizebox{\textwidth}{!}{
\begin{tabular}{l|cccc|cccc}
\toprule
\multirow{2}{*}{\textbf{Model}} &
\multicolumn{4}{c}{\textbf{MIMIC-IV}} &
\multicolumn{4}{c}{\textbf{MIMIC-III}} \\
\cmidrule(lr){2-5} \cmidrule(lr){6-9}
 & \textbf{MORT} ($\times 10^{3}$) & \textbf{30M} ($\times 10^{3}$) & \textbf{LOS} ($\times 10^{2}$) & \textbf{ICU} ($\times 10^{2}$)
 & \textbf{MORT} ($\times 10^{3}$) & \textbf{30M} ($\times 10^{3}$) & \textbf{LOS} ($\times 10^{2}$) & \textbf{ICU} ($\times 10^{3}$) \\
\midrule
\multicolumn{9}{l}{\textit{Traditional Clinical Indices}} \\
Charlson (CCI) & 9.64 & 18.99 & 3.41 & 1.99 & 15.97 & 25.84 & 0.94 & 14.34 \\
Elixhauser (ECI) & 20.17 & 28.73 & 6.26 & 3.85 & 26.43 & 30.92 & 3.16 & 14.37 \\
CCI+ECI (scalar) & 19.94 & 28.66 & 6.39 & 4.86 & 31.12 & 37.07 & 3.43 & 16.88 \\
\midrule
\multicolumn{9}{l}{\textit{Classical Machine Learning Baselines }} \\
kNN& 57.70 & 62.71 & 8.43 & 13.06 & 68.28 & 68.07 & 8.55 & 20.65 \\
Multinomial Naive Bayes  & 56.44 & 57.74 & 7.90 & 13.37 & 57.28 & 54.45 & 7.27 & 22.29 \\
Complement Naive Bayes  & 56.68 & 56.99 & 7.75 & 13.26 & 57.75 & 55.43 & 7.12 & 27.53 \\
FusedLogits LR & 53.55 & 54.33 & 13.75 & 17.60 & 56.87 & 53.66 & 13.25 & 28.19 \\
Factorization Machine (FM score) & 54.50 & 54.27 & 13.88 & 16.99 & 62.97 & 57.98 & 13.62 & 27.58 \\
Gradient Boosted Trees (Score/Logit) & 64.30 & 63.30 & 14.86 & 17.67 & 65.24 & 66.81 & 14.76 & 24.55 \\
\midrule
\multicolumn{9}{l}{\textit{Deep Learning: BCE-Trained Baselines}} \\
Deep and Cross Network (DCN) & 65.33 $\pm$ 0.62 & 61.70 $\pm$ 0.24 & 14.48 $\pm$ 0.15 & 18.33 $\pm$ 0.33 & 65.46 $\pm$ 2.76 & 59.57 $\pm$ 1.60 & 13.47 $\pm$ 0.48 & 27.20 $\pm$ 0.64 \\
Deep MLP Bag-of-Codes (EmbeddingBag) & 65.88 $\pm$ 0.54 & 63.63 $\pm$ 1.19 & 14.73 $\pm$ 0.12 & 18.45 $\pm$ 0.06 & 66.20 $\pm$ 4.99 & 61.39 $\pm$ 3.21 & 13.83 $\pm$ 0.23 & 29.89 $\pm$ 2.13 \\
Star-GAT & 67.38 $\pm$ 2.48 & 65.32 $\pm$ 2.87 & 14.89 $\pm$ 0.27 & 18.89 $\pm$ 0.47 & 70.31 $\pm$ 2.27 & 63.74 $\pm$ 2.44 & 14.10 $\pm$ 0.54 & 29.28 $\pm$ 2.68 \\
Pure MIL Attention Pooling & 67.89 $\pm$ 0.94 & 65.16 $\pm$ 2.32 & 14.93 $\pm$ 0.04 & 18.71 $\pm$ 0.12 & 71.72 $\pm$ 1.57 & 64.64 $\pm$ 0.79 & 14.32 $\pm$ 0.37 & 26.14 $\pm$ 2.26 \\
Set Transformer (single-logit)  & 68.35 $\pm$ 2.40 & 65.84 $\pm$ 3.19 & 15.16 $\pm$ 0.18 & 18.59 $\pm$ 0.25 & 75.15 $\pm$ 1.62 & 69.23 $\pm$ 1.95 & 14.54 $\pm$ 0.31 & 29.64 $\pm$ 3.14 \\
DeepSets (sum pool)  & 68.99 $\pm$ 1.42 & 66.21 $\pm$ 2.18 & 14.97 $\pm$ 0.27 & \underline{19.17 $\pm$ 0.27} & 72.48 $\pm$ 1.43 & 66.54 $\pm$ 1.71 & 14.78 $\pm$ 0.16 & \underline{30.46 $\pm$ 3.08} \\
DeepSets (mean pool)   & 69.65 $\pm$ 1.13 & 66.72 $\pm$ 1.78 & 14.99 $\pm$ 0.16 & 19.04 $\pm$ 0.11 & 72.11 $\pm$ 5.37 & 65.13 $\pm$ 4.05 & 14.76 $\pm$ 0.19 & 29.39 $\pm$ 4.24 \\
DeepSets (attention pool)   & 67.94 $\pm$ 1.84 & 63.65 $\pm$ 2.40 & 15.09 $\pm$ 0.02 & 19.16 $\pm$ 0.28 & 73.59 $\pm$ 4.11 & 65.15 $\pm$ 5.87 & 14.90 $\pm$ 0.40 & 26.39 $\pm$ 3.32 \\
DeepSets (gated pool)   & 69.11 $\pm$ 1.93 & 66.33 $\pm$ 3.40 & 14.93 $\pm$ 0.06 & 19.01 $\pm$ 0.04 & 70.31 $\pm$ 3.43 & 61.88 $\pm$ 1.45 & 15.20 $\pm$ 0.47 & 28.05 $\pm$ 4.19 \\
DeepSets (mean $\oplus$ max pool)   & 69.92 $\pm$ 1.26 & 67.22 $\pm$ 1.60 & \underline{15.36 $\pm$ 0.11} & \textbf{19.24 $\pm$ 0.19} & 73.69 $\pm$ 3.29 & 68.08 $\pm$ 1.88 & 15.17 $\pm$ 0.30 & \textbf{32.42 $\pm$ 4.22} \\
\midrule 
\textbf{Our Model} & 
74.22 $\pm$ 0.24 & 72.19 $\pm$ 0.93 & \textbf{15.42 $\pm$ 0.18} & 19.01 $\pm$ 0.29 & 84.52 $\pm$ 10.89 & 77.77 $\pm$ 7.53 & \textbf{16.45 $\pm$ 0.92} & 26.42 $\pm$ 3.77 \\
\midrule
\multicolumn{9}{l}{\textit{nHSIC Ablations: MLCI Variants }} \\
DeepSets (sum pool) + Single-RBF nHSIC & 75.44 $\pm$ 2.00 & 74.59 $\pm$ 3.68 & 14.62 $\pm$ 0.70 & 18.89 $\pm$ 0.23 & 80.98 $\pm$ 2.45 & 75.58 $\pm$ 3.68 & 14.76 $\pm$ 0.47 & 20.44 $\pm$ 1.33 \\
DeepSets (sum pool) + Multi-RBF nHSIC & 75.86 $\pm$ 0.89 & 75.66 $\pm$ 1.56 & 13.67 $\pm$ 0.59 & 18.65 $\pm$ 0.10 & 84.23 $\pm$ 7.70 & 75.74 $\pm$ 7.61 & 15.43 $\pm$ 0.70 & 21.54 $\pm$ 2.80 \\
DeepSets (mean pool) + Single-RBF nHSIC  & 76.25 $\pm$ 1.45 & 74.95 $\pm$ 2.29 & 14.21 $\pm$ 0.60 & 18.86 $\pm$ 0.38 & 80.92 $\pm$ 4.54 & 73.46 $\pm$ 2.30 & \underline{16.03 $\pm$ 0.20} & 24.65 $\pm$ 1.28 \\
DeepSets (mean pool) + Multi-RBF nHSIC  & \underline{76.88 $\pm$ 0.89} & 77.05 $\pm$ 1.20 & 13.80 $\pm$ 0.50 & 18.56 $\pm$ 0.21 & 84.41 $\pm$ 3.50 & 74.55 $\pm$ 3.90 & 15.60 $\pm$ 0.73 & 25.19 $\pm$ 2.17 \\
DeepSets (attention pool) + Single-RBF nHSIC  & 76.23 $\pm$ 1.48 & 75.39 $\pm$ 4.17 & 13.99 $\pm$ 0.80 & 19.03 $\pm$ 0.73 & 75.86 $\pm$ 4.47 & 70.50 $\pm$ 3.39 & 15.70 $\pm$ 0.16 & 27.22 $\pm$ 3.30 \\
DeepSets (attention pool) + Multi-RBF nHSIC & 76.37 $\pm$ 0.89 & 76.85 $\pm$ 1.68 & 13.68 $\pm$ 0.20 & 18.56 $\pm$ 0.13 & \underline{88.18 $\pm$ 6.04} & \underline{78.70 $\pm$ 3.55} & 15.48 $\pm$ 0.53 & 27.13 $\pm$ 2.77 \\
DeepSets (gated pool) + Single-RBF nHSIC  & 76.29 $\pm$ 1.63 & 75.53 $\pm$ 2.47 & 13.51 $\pm$ 0.39 & 18.72 $\pm$ 0.34 & 64.77 $\pm$ 49.98 & 56.15 $\pm$ 43.45 & 11.93 $\pm$ 6.29 & 18.78 $\pm$ 8.69 \\
DeepSets (gated pool) + Multi-RBF nHSIC  & \textbf{77.33 $\pm$ 0.87} & \underline{77.25 $\pm$ 2.75} & 13.52 $\pm$ 0.29 & 18.87 $\pm$ 0.43 & 59.29 $\pm$ 36.21 & 54.75 $\pm$ 30.96 & 12.65 $\pm$ 5.85 & 19.61 $\pm$ 11.27 \\
Set Transformer + Single-RBF nHSIC & 72.50 $\pm$ 1.32 & 70.65 $\pm$ 3.23 & 14.64 $\pm$ 0.37 & 18.82 $\pm$ 0.31 & 86.89 $\pm$ 5.70 & 76.49 $\pm$ 5.12 & 15.00 $\pm$ 0.06 & 24.36 $\pm$ 4.36 \\
Set Transformer + Multi-RBF nHSIC & 76.71 $\pm$ 0.99 & \textbf{78.50 $\pm$ 3.29} & 12.69 $\pm$ 0.35 & 18.22 $\pm$ 0.63 & \textbf{114.12 $\pm$ 4.11} & \textbf{98.58 $\pm$ 3.40} & 10.63 $\pm$ 1.77 & 13.22 $\pm$ 1.96 \\

\midrule
\end{tabular}
}
\end{table*}
\FloatBarrier
\section{Comorbidity indices and scoring details}
\label{app:comorbidity}

\subsection{Input data and coding algorithms}
Let \(i\) index admissions. Each admission has a set or sequence of diagnosis codes
\[
X_i=(d_{i1},\ldots,d_{im_i}),
\]
where \(d_{ij}\) is an ICD code and \(m_i\) is the number of recorded diagnosis codes for admission \(i\). Unless otherwise noted, we use diagnosis codes recorded for the hospital admission to summarize admission-level comorbidity burden; these administrative codes may include diagnoses finalized after discharge and should not be interpreted as strictly present-on-admission measurements. CCI and Elixhauser comorbidities were assigned using Quan et al.'s coding algorithms~\citep{quan2005coding}: ICD-10 diagnoses used Quan's ICD-10 code lists, and ICD-9-CM diagnoses used the corresponding enhanced ICD-9-CM mappings.

\subsection{Charlson Comorbidity Index (CCI)}
The Charlson Comorbidity Index (CCI) is a weighted summary of chronic disease burden,
with higher scores indicating greater comorbidity and mortality risk. Following standard
implementations, including Quan's mapping, ICD codes are first mapped into Charlson
comorbidity categories, and each category contributes at most once.

Let \(k\in\{1,\ldots,K\}\) index Charlson comorbidity categories, and define the category
indicator
\[
c_{ik}^{\mathrm{CCI}}
=
\mathbb I\{\text{any ICD code in } X_i \text{ maps to Charlson category } k\}.
\]
Let \(a_k\in\{1,2,3,6\}\) denote the Charlson weight for category \(k\). Categories not
present contribute \(0\). The CCI for admission \(i\) is
\begin{equation}
\mathrm{CCI}_i
=
\sum_{k=1}^{K} a_k\,c_{ik}^{\mathrm{CCI}}.
\label{eq:cci}
\end{equation}

\subsection{Elixhauser comorbidities and the van Walraven weighted summary (ECI)}
Elixhauser et al.\ (1998) represent comorbidity burden as \(Q\) binary indicators included separately as covariates, commonly \(Q=30\). Let \(q\in\{1,\ldots,Q\}\) index Elixhauser comorbidity groups, and define
\[
e_{iq}^{\mathrm{ECI}}
=
\mathbb I\{\text{any ICD code in } X_i \text{ maps to Elixhauser group } q\}.
\]
The original Elixhauser approach defines no single summary score; rather, the \(Q\)
indicators are entered separately.

Van Walraven et al.\ summarize the \(Q\) Elixhauser indicators into a single weighted score
by converting mortality-model coefficients into integer weights. Let \(w_q\) denote the van
Walraven weight for group \(q\). The van Walraven weighted Elixhauser score is
\begin{equation}
\mathrm{ECI}_{\mathrm{VW},i}
=
\sum_{q=1}^{Q} w_q\,e_{iq}^{\mathrm{ECI}}.
\label{eq:eci_vw}
\end{equation}
Across this paper, the term \textbf{Elixhauser Comorbidity Index (ECI)} refers to the van
Walraven weighted summary in \eqref{eq:eci_vw}.

\subsection{Common input--output formulation}
To make the connection between classic indices and our approach explicit, we describe CCI
and ECI through a common input--output lens. Both CCI and the van Walraven ECI map the admission-level diagnosis history $X_i$, restricted to ICD codes, to a scalar score $c_i \in \mathbb{R}$ by (i) forming comorbidity-category indicators from $X_i$ and (ii) summing them with fixed weights to summarize admission-level coded comorbidity burden.
\clearpage
\section{Risk Curves for the Learned Severity Score}
\label{app:risk-curves}

This appendix visualizes how the learned scalar score \(s_i=s_\theta(X_i)\) relates to empirical outcome risk across the four evaluation tasks: in-hospital mortality, 30-day mortality, length of stay \(>7\) days, and ICU transfer. Because our theory is ordering-based, we focus on the mapping from the learned score ordering to empirical outcome rates. In particular, for each task \(t\), we plot estimates of
\[
p_t(s)\approx \Pr\{y_i^{(t)}=1\mid s_i=s\}
\]
as a function of the learned score.

\paragraph{Estimators.}
We report two complementary curve estimators: (i) an \textbf{unconstrained binned} estimate, which partitions test scores into quantile bins and plots the empirical event rate per bin; and (ii) a \textbf{monotone isotonic} estimate, which fits a nondecreasing function of the score using isotonic regression on valid labels for the given task. The binned curves reveal local non-monotonicities due to finite-sample noise and label sparsity, while the isotonic curves summarize the best monotone risk calibration implied by the learned ordering.

\paragraph{Validity masking.}
For each task \(t\), curves are computed over the subset of test admissions with valid labels, \(\{i:M_i^{(t)}=1\}\). For length of stay, validity is defined by the dedicated indicator column \texttt{long\_stay\_defined}. This matches the evaluation protocol used in the main paper.

\paragraph{Summary statistics.}
Table~\ref{tab:risk-curves-summary} summarizes whether risk generally increases with the learned score for each outcome and reports the end-to-end change in the monotone fit.

\begin{table}[h]
\centering
\small
\begin{tabular}{lcccc}
\toprule
Dataset & MORT & 30M & LOS & ICU \\
\midrule
\textbf{MIMIC-IV}  & inc. (\(\Delta=0.053\pm0.009\)) & inc. (\(\Delta=0.092\pm0.022\)) & inc. (\(\Delta=0.712\pm0.003\)) & inc. (\(\Delta=0.855\pm0.000\)) \\
\textbf{MIMIC-III} & inc. (\(\Delta=0.399\pm0.099\)) & inc. (\(\Delta=0.381\pm0.069\)) & inc. (\(\Delta=0.751\pm0.023\)) & inc. (\(\Delta=0.289\pm0.028\)) \\
\bottomrule
\end{tabular}
\caption{
Risk-curve summary from isotonic fits of \(\Pr\{y_i^{(t)}=1\mid s_i=s\}\) as a function of the learned score. We report a qualitative monotonicity label (inc./weakly inc.) and the end-to-end change \(\Delta:=p_{95}-p_{5}\), where \(p_q\) denotes the \(q\)-th percentile of the fitted risk values along the score axis. For both datasets, \(\Delta\) is reported as mean \(\pm\) standard deviation over seeds \(\{11,101,1001\}\).
}
\label{tab:risk-curves-summary}
\end{table}

\paragraph{Figures.}
Figures~\ref{fig:risk-curves-iv-bins}--\ref{fig:risk-curves-iii-iso} report per-task risk curves for MIMIC-IV and MIMIC-III. For each dataset, we show (i) \textbf{unconstrained binned} empirical curves using quantile bins, to visualize sampling noise and possible local non-monotonicities, and (ii) \textbf{isotonic} fits, to estimate the best nondecreasing risk mapping implied by the learned ordering. The plotted curves correspond to seed 11; Table~\ref{tab:risk-curves-summary} reports the corresponding \(\Delta\) summary across seeds \(\{11,101,1001\}\).

\begin{figure}[t]
\centering
\begin{subfigure}{0.48\linewidth}
  \centering
  \includegraphics[width=\linewidth]{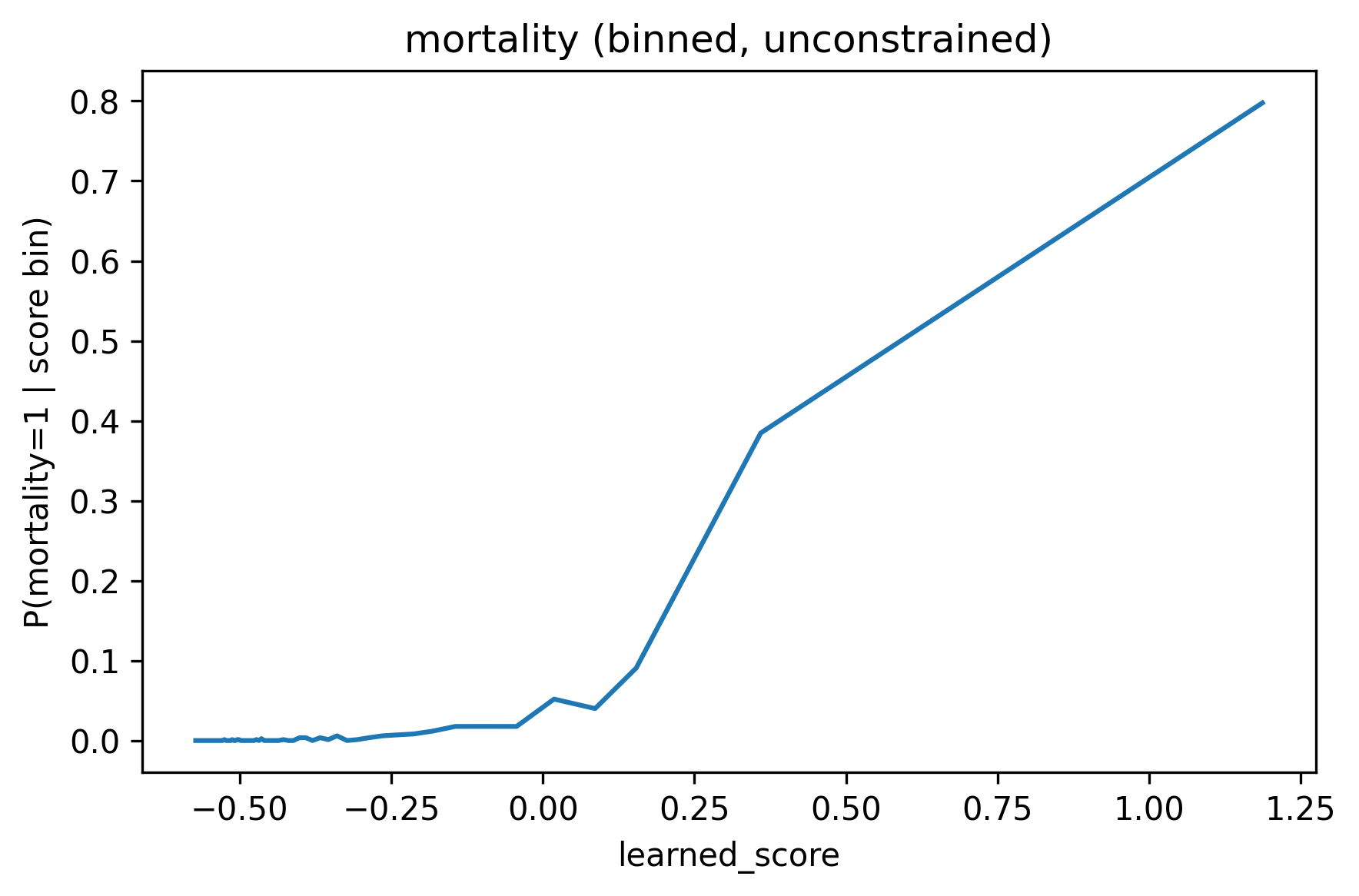}
  \caption{Mortality}
\end{subfigure}\hfill
\begin{subfigure}{0.48\linewidth}
  \centering
  \includegraphics[width=\linewidth]{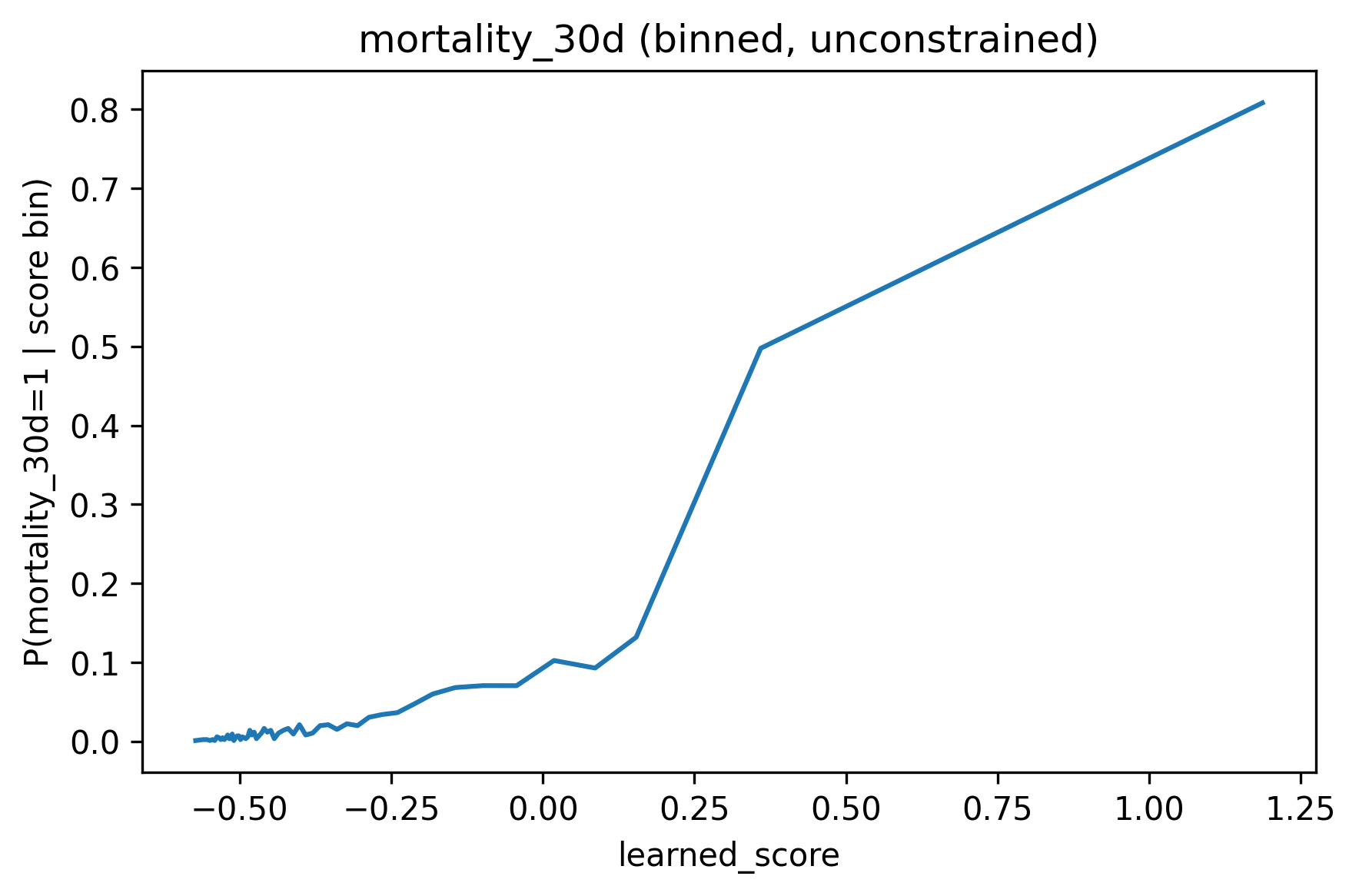}
  \caption{30-day mortality}
\end{subfigure}

\begin{subfigure}{0.48\linewidth}
  \centering
  \includegraphics[width=\linewidth]{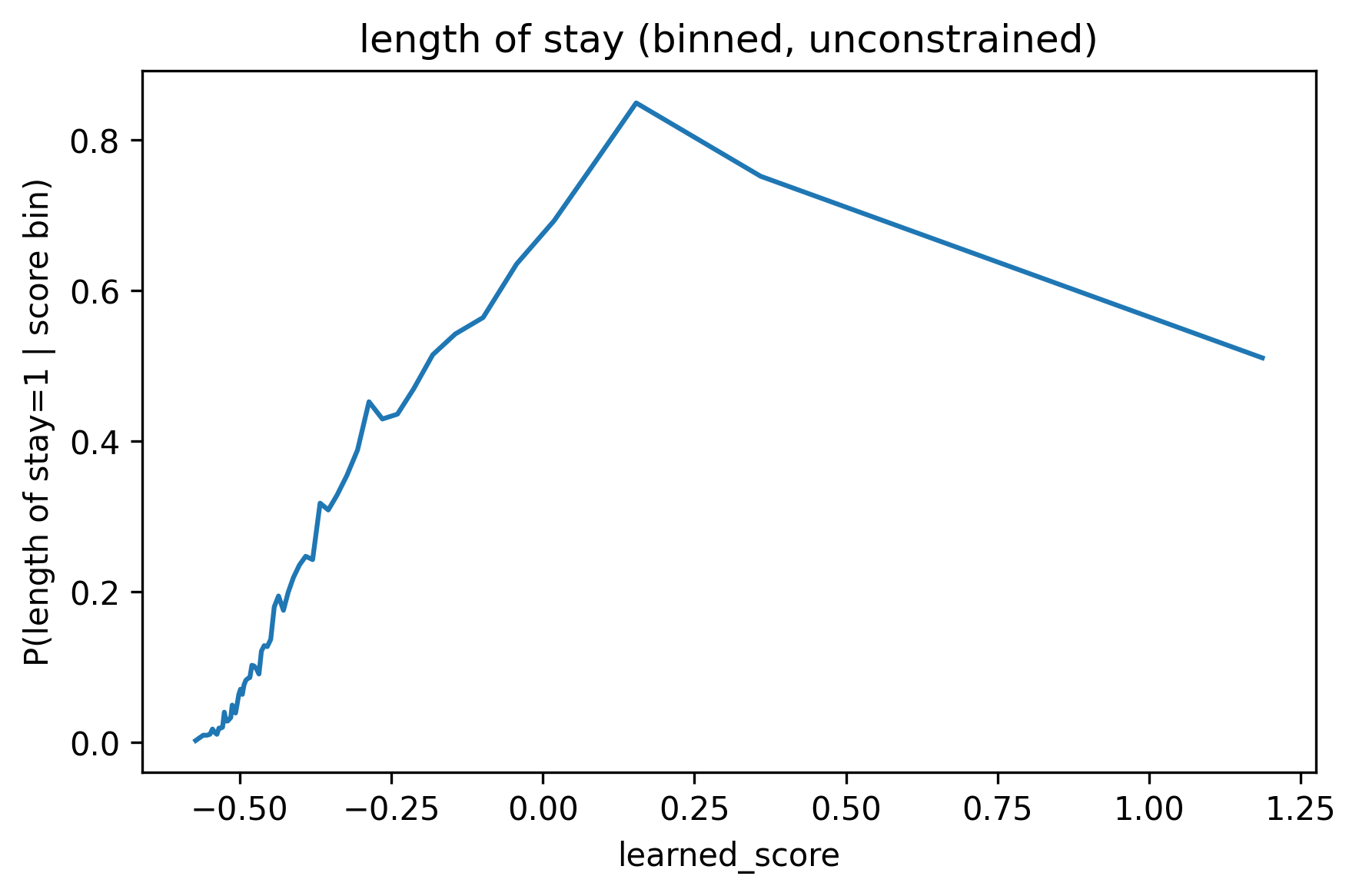}
  \caption{Length of stay}
\end{subfigure}\hfill
\begin{subfigure}{0.48\linewidth}
  \centering
  \includegraphics[width=\linewidth]{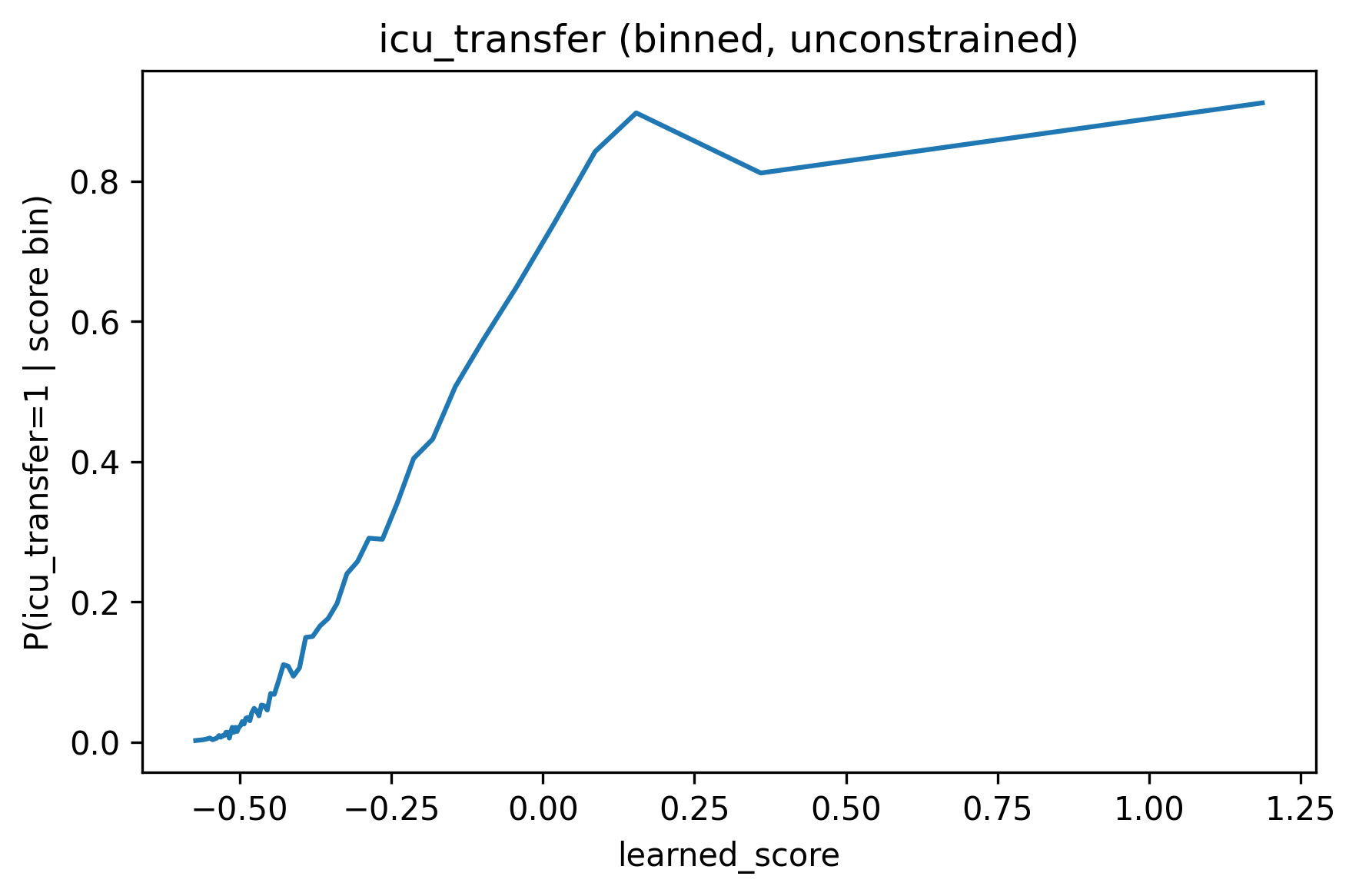}
  \caption{ICU transfer}
\end{subfigure}

\caption{MIMIC-IV: binned (unconstrained) risk curves \(\Pr\{y_i^{(t)}=1\mid s_i=s\}\) as a function of the learned score (60 quantile bins).}
\label{fig:risk-curves-iv-bins}
\end{figure}

\begin{figure}[t]
\centering
\begin{subfigure}{0.48\linewidth}
  \centering
  \includegraphics[width=\linewidth]{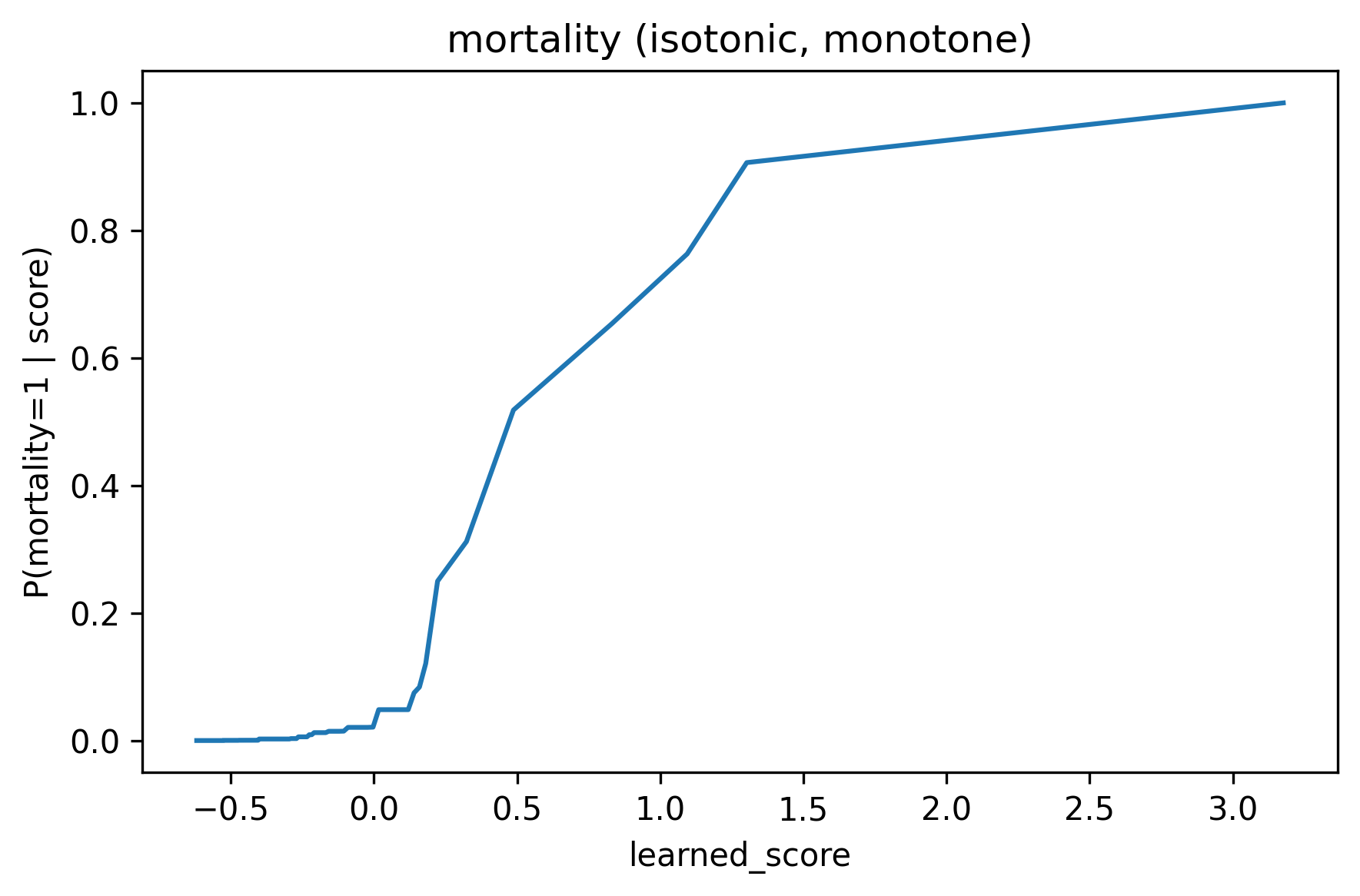}
  \caption{Mortality}
\end{subfigure}\hfill
\begin{subfigure}{0.48\linewidth}
  \centering
  \includegraphics[width=\linewidth]{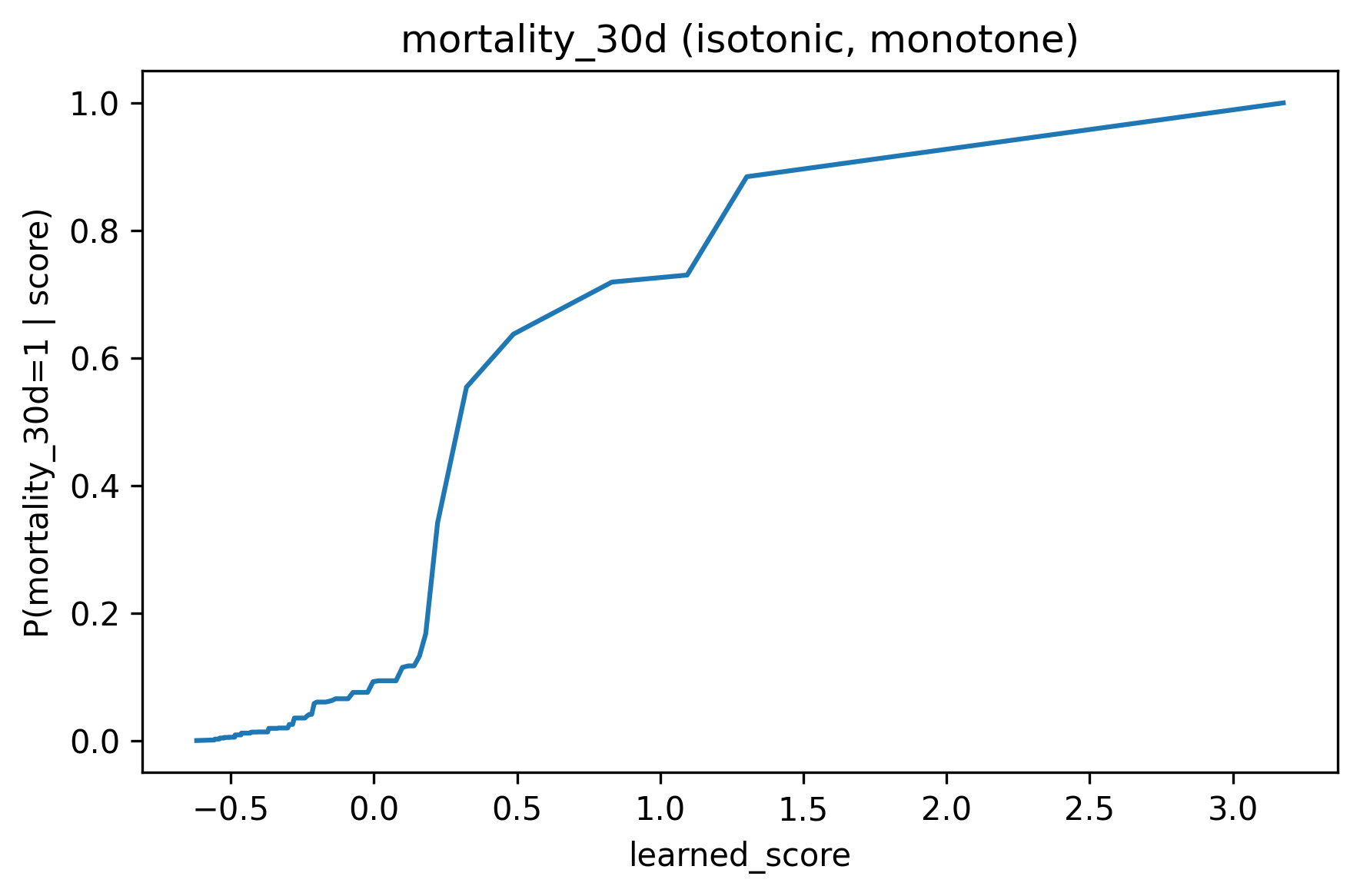}
  \caption{30-day mortality}
\end{subfigure}

\begin{subfigure}{0.48\linewidth}
  \centering
  \includegraphics[width=\linewidth]{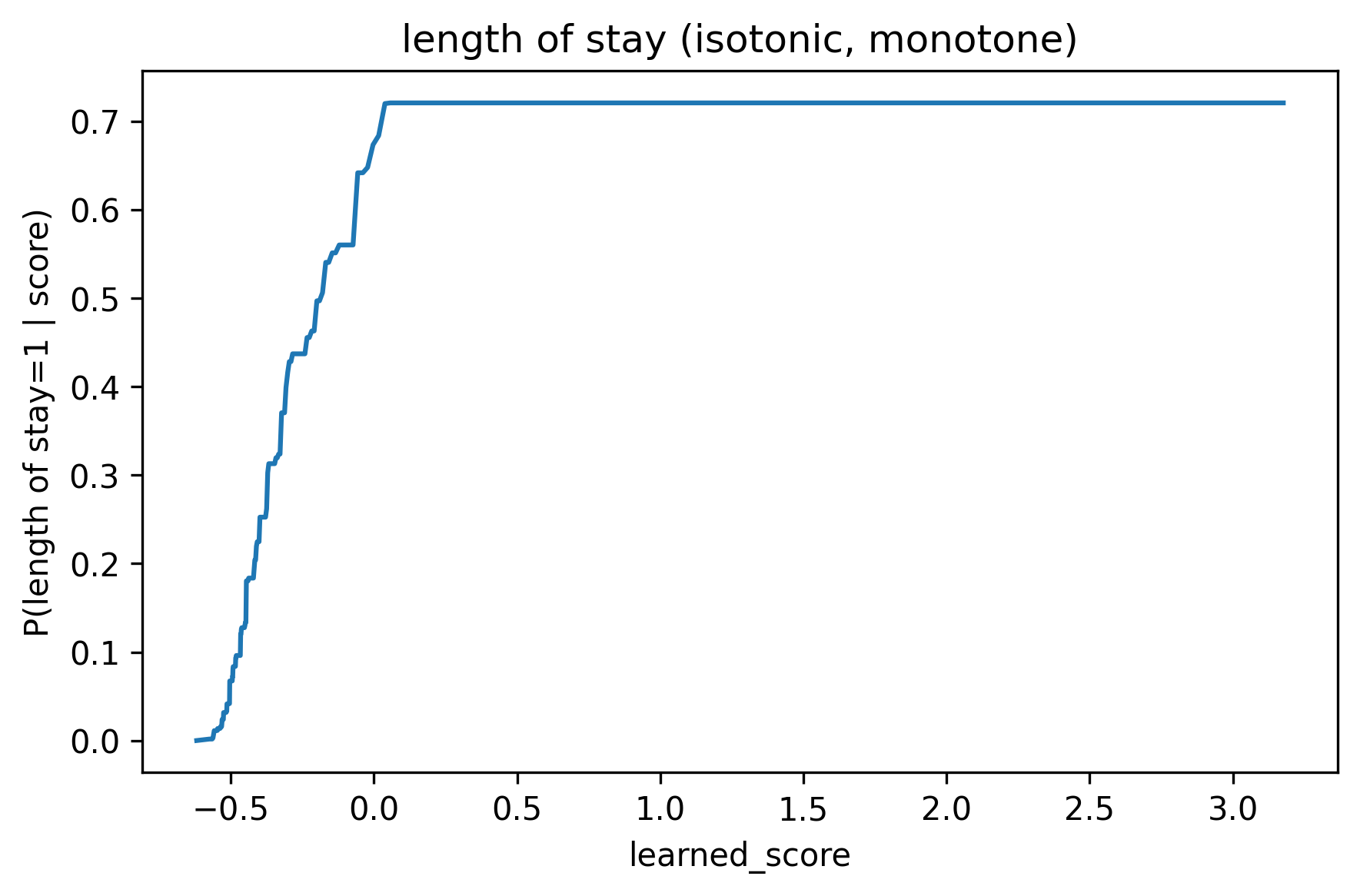}
  \caption{Length of stay}
\end{subfigure}\hfill
\begin{subfigure}{0.48\linewidth}
  \centering
  \includegraphics[width=\linewidth]{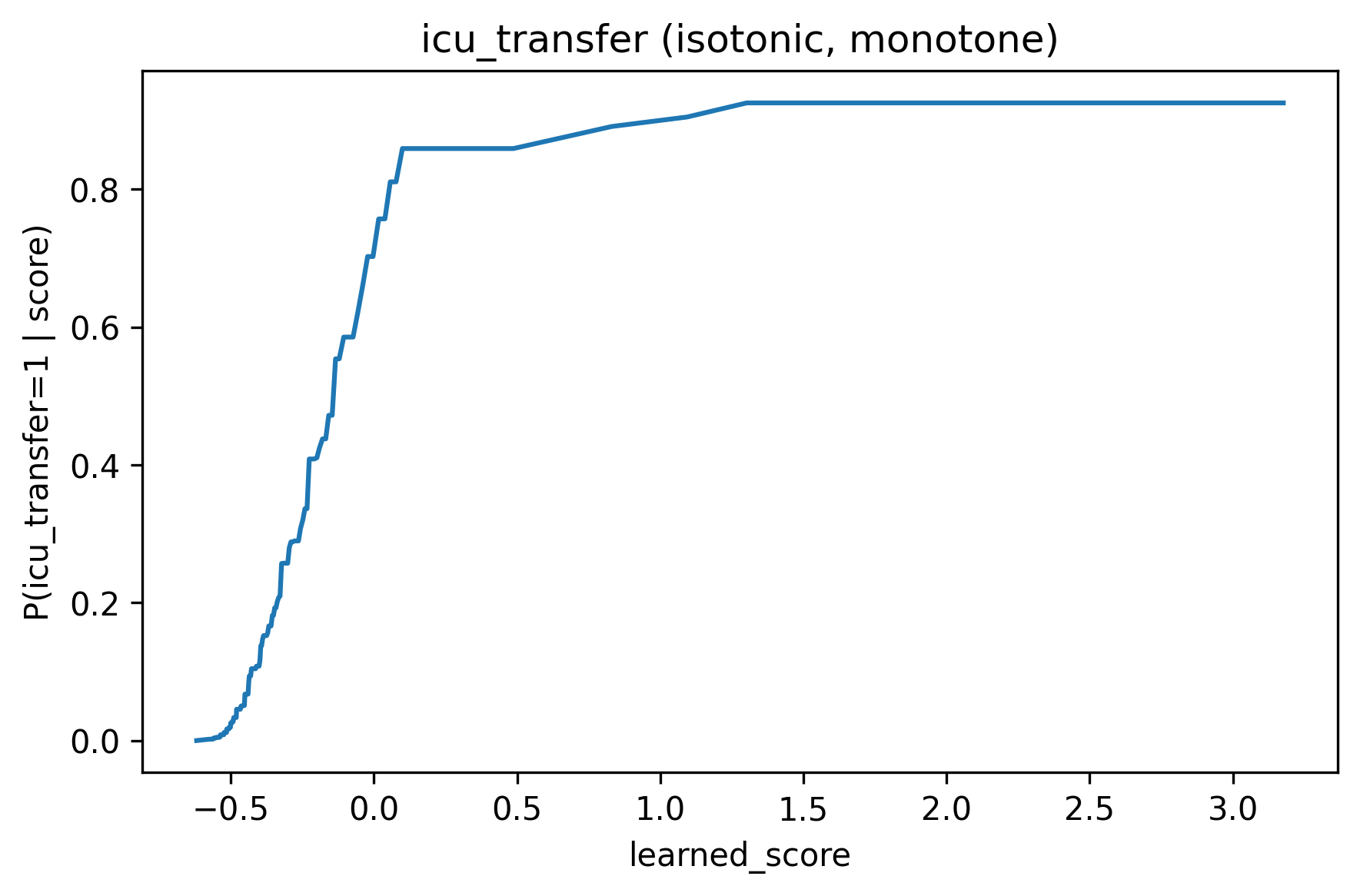}
  \caption{ICU transfer}
\end{subfigure}

\caption{MIMIC-IV: isotonic (monotone) risk curves \(\Pr\{y_i^{(t)}=1\mid s_i=s\}\) fit via isotonic regression on valid labels for each task.}
\label{fig:risk-curves-iv-iso}
\end{figure}

\begin{figure}[t]
\centering
\begin{subfigure}{0.48\linewidth}
  \centering
  \includegraphics[width=\linewidth]{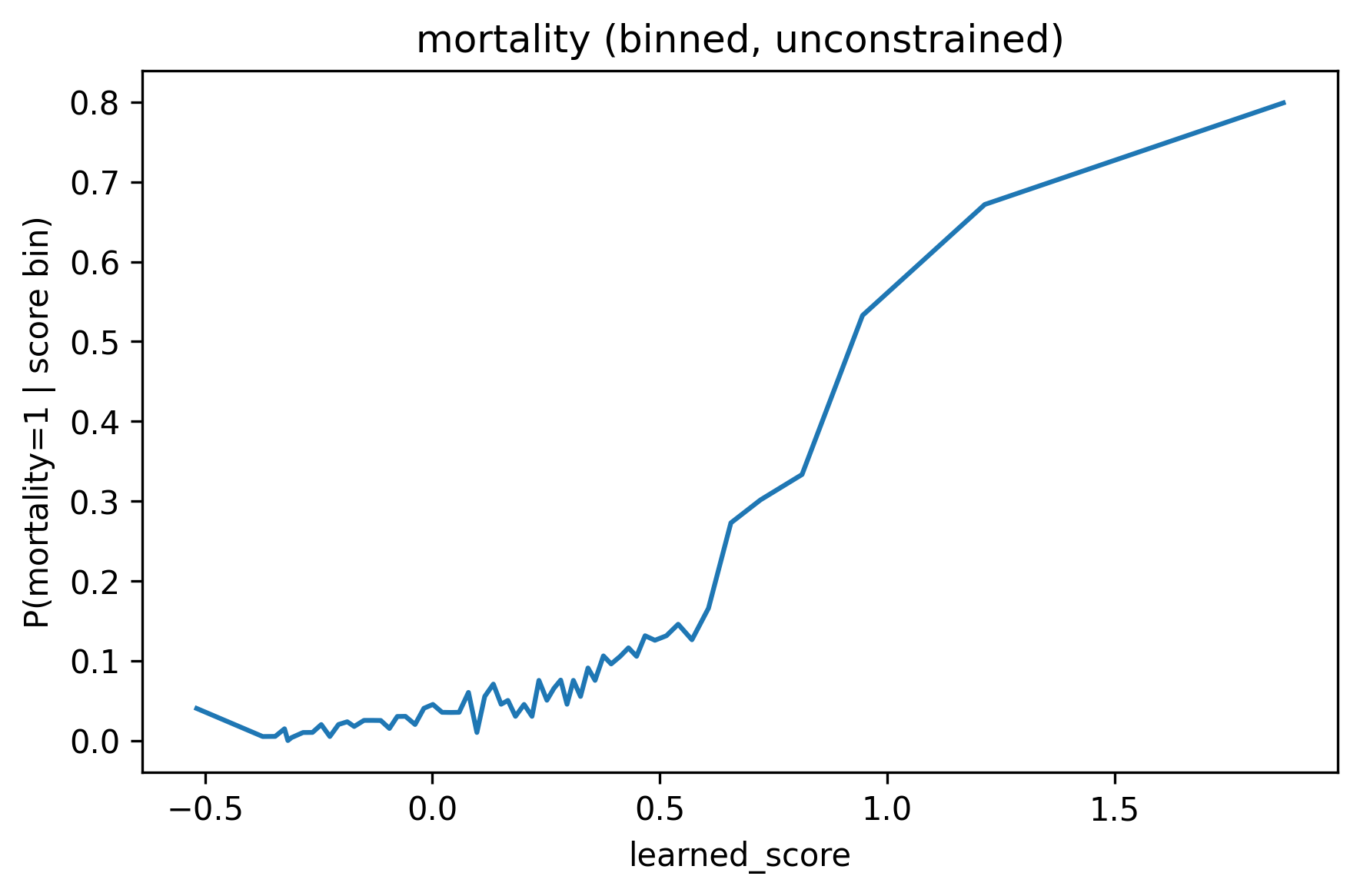}
  \caption{Mortality}
\end{subfigure}\hfill
\begin{subfigure}{0.48\linewidth}
  \centering
  \includegraphics[width=\linewidth]{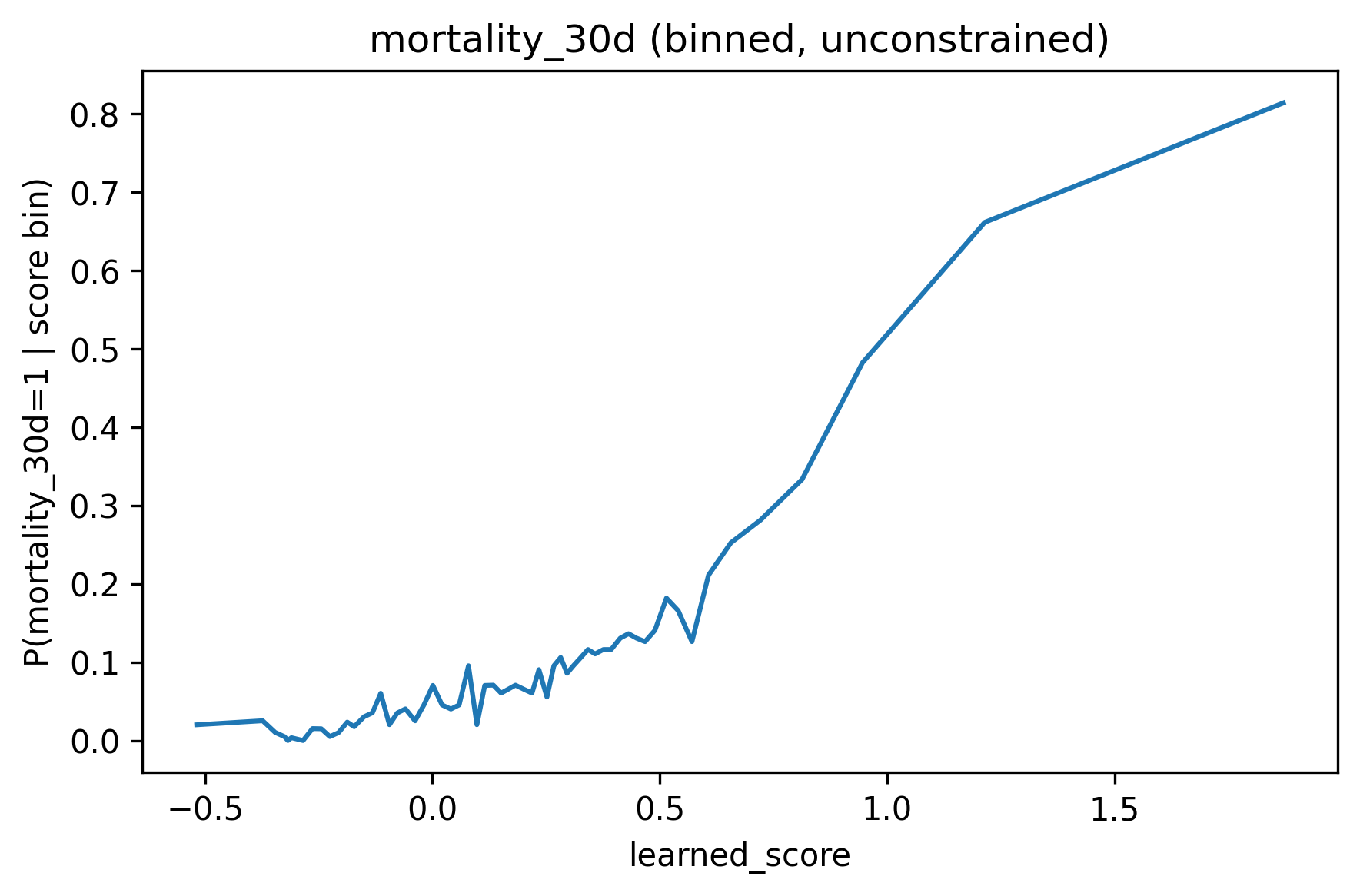}
  \caption{30-day mortality}
\end{subfigure}

\begin{subfigure}{0.48\linewidth}
  \centering
  \includegraphics[width=\linewidth]{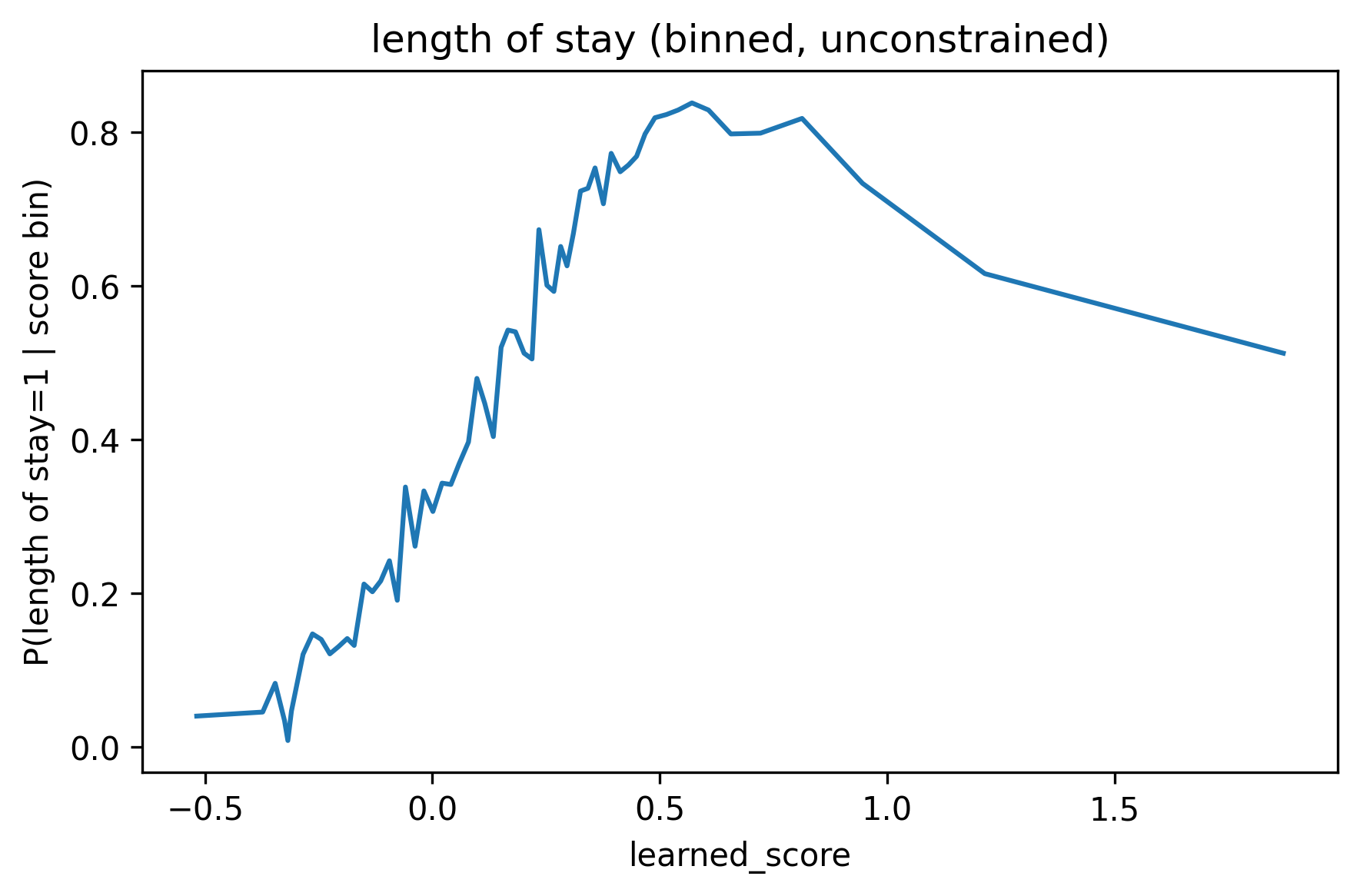}
  \caption{Length of stay}
\end{subfigure}\hfill
\begin{subfigure}{0.48\linewidth}
  \centering
  \includegraphics[width=\linewidth]{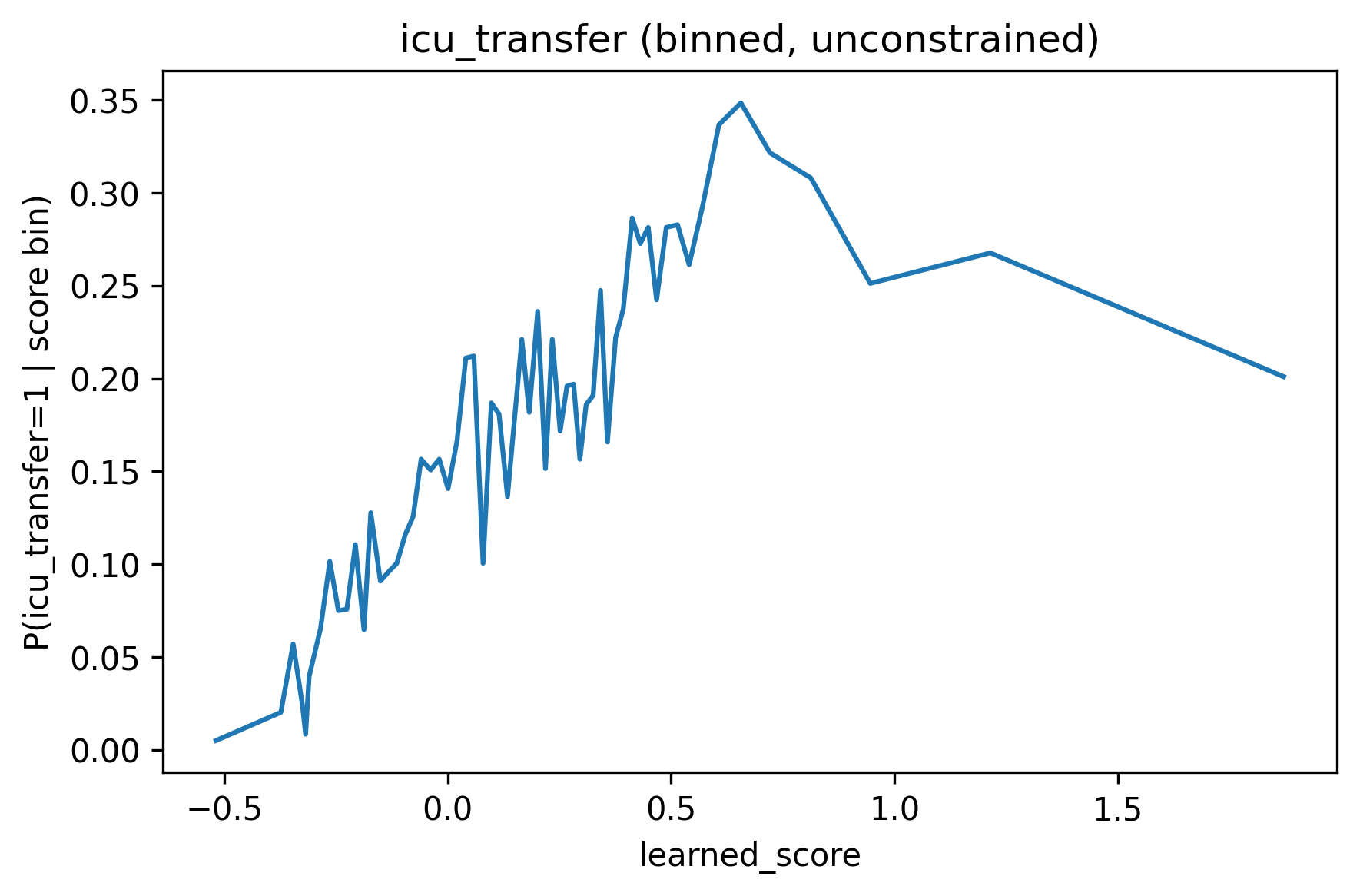}
  \caption{ICU transfer}
\end{subfigure}

\caption{MIMIC-III: binned (unconstrained) risk curves \(\Pr\{y_i^{(t)}=1\mid s_i=s\}\) as a function of the learned score (60 quantile bins).}
\label{fig:risk-curves-iii-bins}
\end{figure}

\begin{figure}[t]
\centering
\begin{subfigure}{0.48\linewidth}
  \centering
  \includegraphics[width=\linewidth]{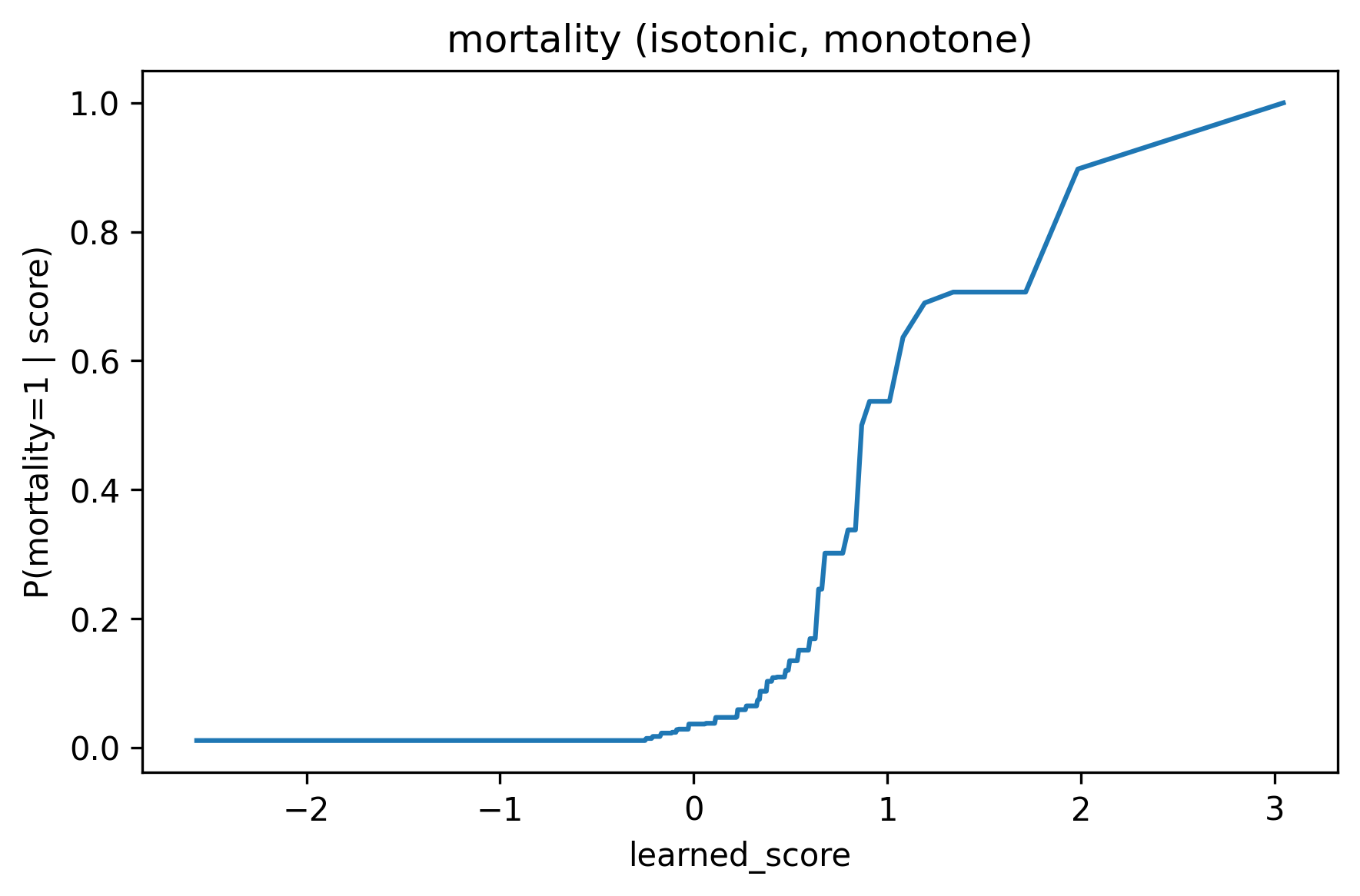}
  \caption{Mortality}
\end{subfigure}\hfill
\begin{subfigure}{0.48\linewidth}
  \centering
  \includegraphics[width=\linewidth]{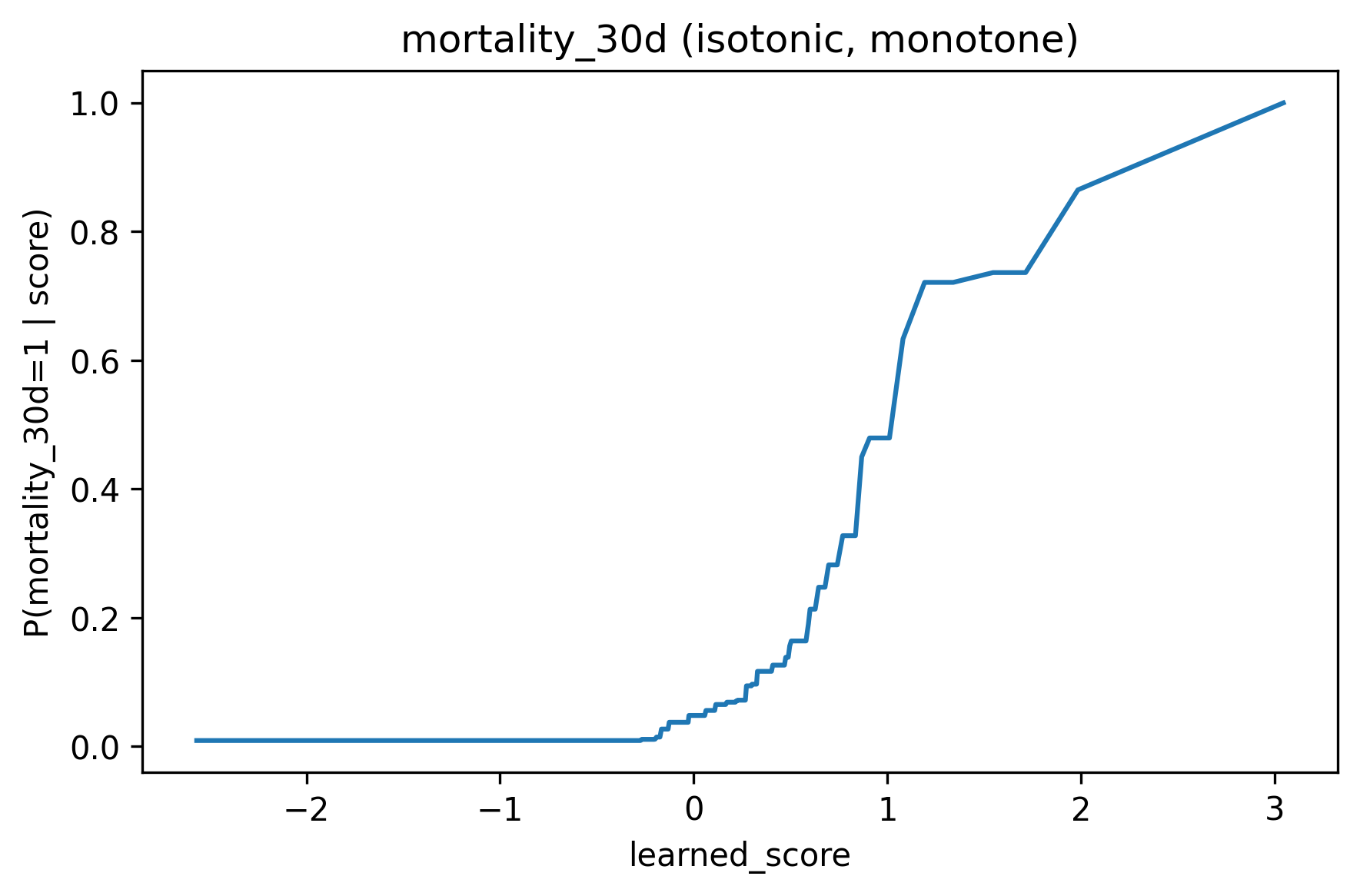}
  \caption{30-day mortality}
\end{subfigure}

\begin{subfigure}{0.48\linewidth}
  \centering
  \includegraphics[width=\linewidth]{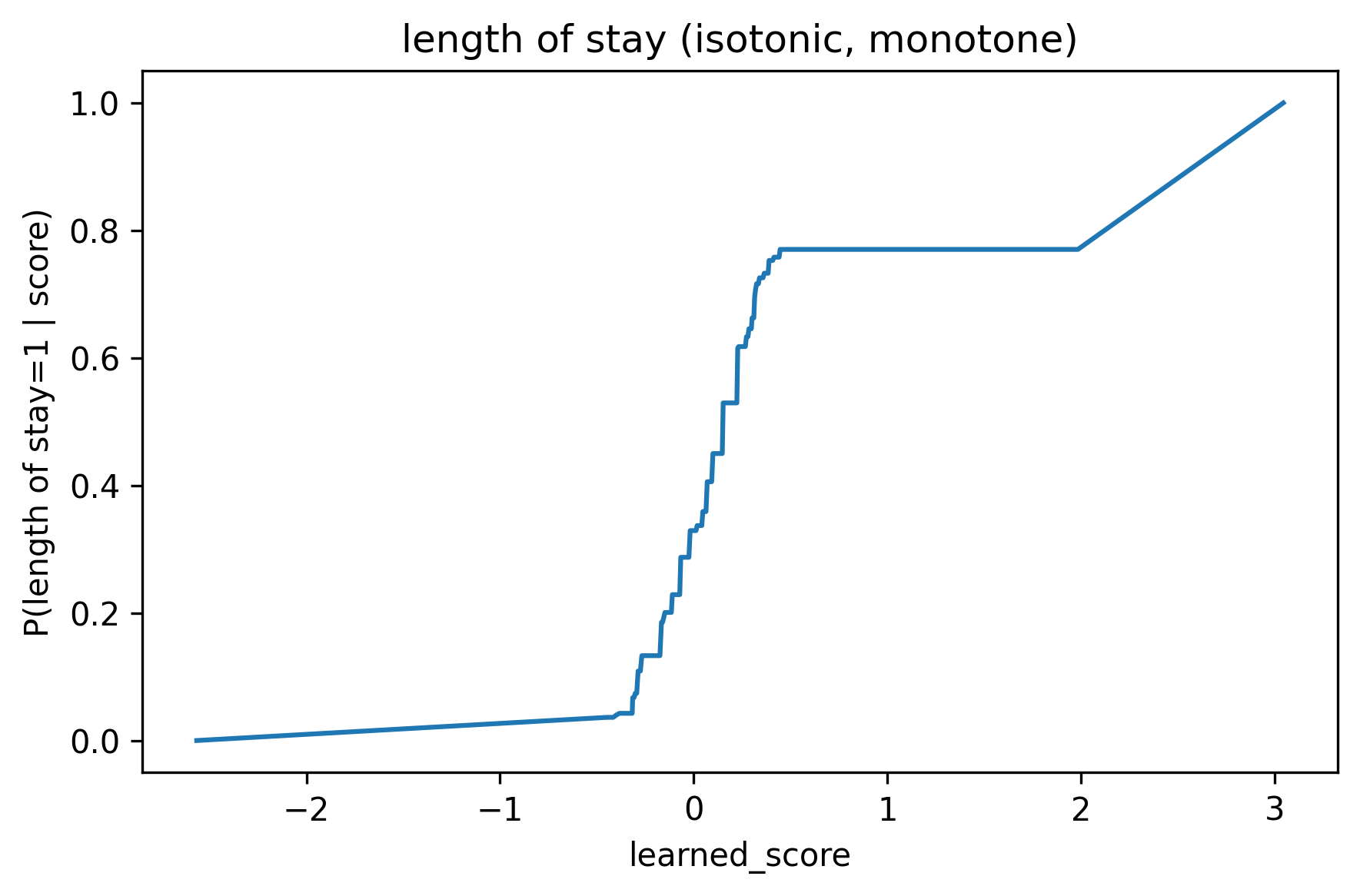}
  \caption{Length of stay}
\end{subfigure}\hfill
\begin{subfigure}{0.48\linewidth}
  \centering
  \includegraphics[width=\linewidth]{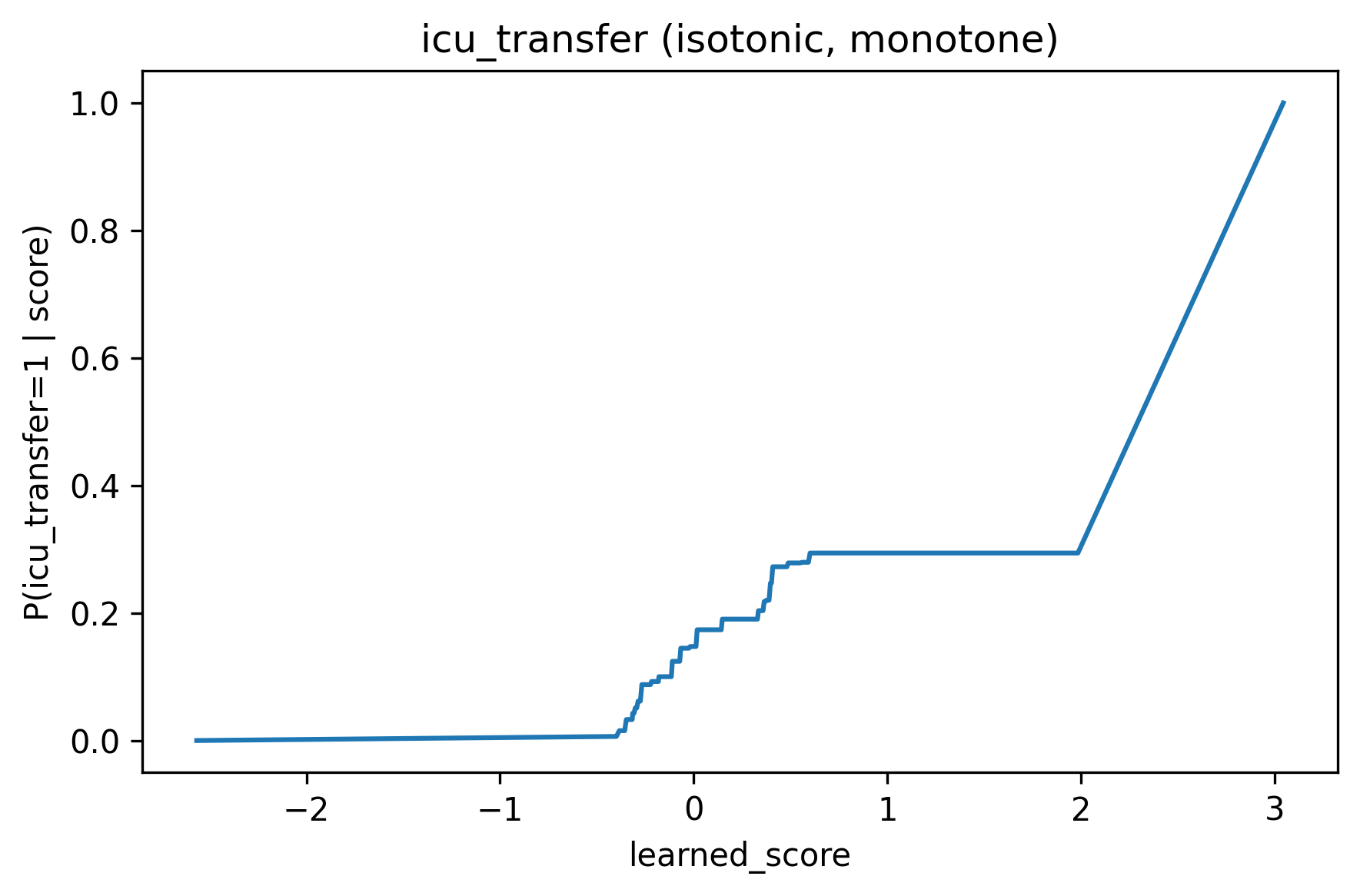}
  \caption{ICU transfer}
\end{subfigure}

\caption{MIMIC-III: isotonic (monotone) risk curves \(\Pr\{y_i^{(t)}=1\mid s_i=s\}\) fit via isotonic regression on valid labels for each task.}
\label{fig:risk-curves-iii-iso}
\end{figure}

\end{document}